\documentclass{article}

\usepackage{arxiv}

\usepackage[utf8]{inputenc} 
\bibliographystyle{unsrt}
\usepackage{hyperref}       
\usepackage{url}            
\usepackage{booktabs}       
\usepackage{amsfonts}       
\usepackage{amsmath}
\usepackage{amsthm}
\usepackage{mathabx}
\usepackage{multirow}
\usepackage{booktabs}
\usepackage{float} 
\usepackage{nicefrac}       
\usepackage{microtype}      
\usepackage{lipsum}
\usepackage{graphicx,subfigure}
\usepackage{paralist}
\usepackage{algorithm}
\usepackage{algorithmic}
\usepackage{breqn}
\usepackage{comment}
\graphicspath{ {./images/} }

\newcommand{\gpPred}{\hat{Y}^{\gamma}_{ijk}}
\newcommand{\gpVar}{s^{2,\gamma}_{ijk}}
\newcommand{\Significance}{\delta_{\tiny{C}}}
\newcommand{\falsq}{\delta_q}
\newcommand{\minVol}{\delta_{\tiny{v}}}

\newcommand{\FalsVol}{\mathcal{V}^{f}}
\newcommand{\Designspace}{S}
\newcommand{\LO}{\mathcal{L}_{0}}
\newcommand{\RegionPlus}{\Theta^{+}}
\newcommand{\RegionMinus}{\Theta^{-}}
\newcommand{\RegionUnd}{\Theta^{u}}
\newcommand{\RegionR}{\Theta^{r}}
\newcommand{\Subreg}{\sigma}

\newcommand{\Var}{\mbox{Var}}
\newcommand{\mbf}{\mathbf}

\newcommand{\RegionRp}{\Theta^{r,+}}
\newcommand{\RegionRm}{\Theta^{r,-}}

\renewcommand{\hat}{\widehat}
\renewcommand{\mathbf}{\boldsymbol}
\renewcommand{\bar}{\widebar}
\newtheorem{assumption}{Assumption}
\newtheorem{definition}{Definition}

\newtheorem{remark}{Remark}

\newtheorem{lemma}{Lemma}
\newtheorem{theorem}{Theorem}
\newtheorem{corollary}{Corollary}

\title{Part-X: A Family of Stochastic Algorithms for Search-Based Test Generation with Probabilistic Guarantees}

\author{
 Giulia Pedrielli \\
  School of Computing and Augmented Intelligence\\
  Arizona State University\\
  Tempe, AZ 85251 \\
  \texttt{gpedriel@asu.edu} \\
   \And
 Tanmay Khandait \\
  School of Computing and Augmented Intelligence\\
  Arizona State University\\
  Tempe, AZ 85251 \\
  \texttt{tkhandai@asu.edu} \\
  \And
 Surdeep Chotaliya \\
  School of Computing and Augmented Intelligence\\
  Arizona State University\\
  Tempe, AZ 85251 \\
  \texttt{schotali@asu.edu} \\
  \And
 Quinn Thibeault \\
  School of Computing and Augmented Intelligence\\
  Arizona State University\\
  Tempe, AZ 85251 \\
  \texttt{qthibeau@asu.edu} \\
  \And
 Hao Huang \\
  College of Engineering, \\
  Yuan Ze University, \\
  135, Yuandong Rd, Taoyuan City, Taiwan 320\\
  \texttt{haohuang@saturn.yzu.edu.tw}\\
 \And
 Mauricio Castillo-Effen \\
  Advanced Technology Laboratories\\
  Lockheed Martin\\
  Arlington, VA 22202\\
  \texttt{mauricio.castillo-effen@lmco.com} \\
  \And
 Georgios Fainekos \\
  School of Computing and Augmented Intelligence\\
  Arizona State University\\
  Tempe, AZ 85251 \\
  \texttt{fainekos@asu.edu} \\
}

\begin{document}
\maketitle
\begin{abstract}
Requirements driven search-based testing (also known as falsification) has proven to be a practical and effective method for discovering erroneous behaviors in Cyber-Physical Systems. Despite the constant improvements on the performance and applicability of falsification methods, they all share a common characteristic. Namely, they are best-effort methods which do not provide any guarantees on the absence of erroneous behaviors (falsifiers) when the testing budget is exhausted. The absence of finite time guarantees is a major limitation which prevents falsification methods from being utilized in certification procedures. In this paper, we address the finite-time guarantees problem by developing a new stochastic algorithm. Our proposed algorithm not only estimates (bounds) the probability that falsifying behaviors exist, but also it identifies the regions where these falsifying behaviors may occur. We demonstrate the applicability of our approach on standard benchmark functions from the optimization literature and on the F16 benchmark problem. 
\end{abstract}

\keywords{Cyber Physical Systems \and Automated Test Generation \and Probabilistic guarantees \and Bayesian Optimization \and Gaussian Processes \and Statistical learning}

\section{Introduction and Motivation}
Search-based test generation (SBTG) for Cyber-Physical Systems (CPS)~\cite{KapinskiEtAl2016csm}  refers to a broad class of best-effort methods that attempt to discover system behaviors that do not satisfy a set of functional requirements. 
In other words, SBTG methods search the operating space of the system for behaviors that falsify (i.e., do not satisfy) the given requirements (also known as falsification process).
A variety of methods for SBTG have been developed that range from tree exploration methods~\cite{NahhalD07cav,DreossiDDKJD15nfm,ZhangEtAl2018cadics,PlakuKV09tacas} to black-box optimization based methods~\cite{abbas2013probabilistic,deshmukh2017testing,ZhangAH2020cadics} and related variations of these techniques~\cite{zutshi2014multiple,Waga2020hscc,MenghiEtAl2020icse,YamaguchiKDS2016fmcad}.
More recently, reinforcement learning methods \cite{AkazakiEtAl2018,LeeEtAl2020jair} have also being explored for SBTG. 
The allure of SBTG methods is that in general they do not require any information about the System under Test (SUT) and, hence, they can be easily applied to a range of challenging applications such as medical \cite{sankaranarayanan2017model}, automotive~\cite{TuncaliEtAl2018iv,DreossiDS2019jar}, and aerospace~\cite{LeeEtAl2020jair,MenghiEtAl2020icse} (see~\cite{BartocciEtAl2018survey,AbbasEtAl2018emsoft,CorsoEtAl2020arxivSurvey,KapinskiEtAl2016csm} for surveys).

A common characteristic of all the aforementioned methods is that their goal is to discover with as few tests as possible a falsifying (or a most likely falsifying) behavior of the SUT \cite{ernst2020arch}. 
In fact, SBTG methods have already established that they can falsify benchmark problems at a fraction of the cost of Monte-Carlo sampling, or even when Monte-Carlo sampling fails, e.g.,~\cite{ZhangEtAl2018cadics,abbas2013probabilistic}.
However, current SBTG methods (with exception maybe of~\cite{AbbasHFU14cyber,Fan2020}) cannot answer an important open question: what are the conclusions to be drawn if no falsifying behavior has been discovered?
In other words, when the test/simulation budget is exhausted and no violation has been discovered, can we conclude that the SUT is safe, or that at least it is likely safe?
This is a challenging problem for almost all black-box SBTG methods since foundationally, the speedup in falsification detection is a result of smart sampling in the search space.  
This is a challenging problem even for SBTG approaches, e.g.,~\cite{NahhalD07cav,DreossiDDKJD15nfm}, which are driven by coverage metrics over the search space.
Nevertheless, a positive answer to this question is necessary if SBTG methods were ever to be adopted and incorporated into assurance procedures~\cite{HeimdahlEtl2016faa}.

In this paper, we develop a general framework that can assess the probability that falsification regions exist in the search space of the SUT.
Our working assumption is that the SUT can be represented as an input-output function $\mathcal{M} : \mathbb{R}^d \rightarrow   ( {\mathbb{R}_{\geq 0}}  \rightarrow \mathbb{R}^m) $, i.e., a function that takes as input a vector $\mbf{x} \in \mathbb{R}^d$ and returns as output a vector signal $\mbf{z} : {\mathbb{R}_{\geq 0}}  \rightarrow \mathbb{R}^m$ representing the (deterministic) behavior of the system. 
The vector $\mbf{x}$ can represent system initial conditions $x_0$, static parameters $p$, and/or any finite representation of an input signal $u$ to the SUT. 
The vector $\mbf{x}$ is usually sampled from a convex bounded space $\Designspace \subseteq \mathbb{R}^d$, i.e., $\mbf{x} \in\Designspace$, which in our case is a hypercube. 
These assumptions are standard in the literature and $\mathcal{M}$ can represent a software- or hardware-in-the-loop SUT, or a simulation model of the SUT. 
In addition, we assume that the system is checked for correctness against a formal requirement $\varphi$ which has a quantitative interpretation of satisfaction usually referred to as specification robustness $\rho_\varphi : ( {\mathbb{R}_{\geq 0}}  \rightarrow \mathbb{R}^m) \rightarrow \mathbb{R}\cup \{-\infty,\infty\}$, e.g., as in the case of Signal Temporal Logic (STL) and its variants~\cite{fainekos2009robustness,DonzeM10formats,AkazakiH15cav,BartocciEtAl2018survey}.
The robustness $\rho_\varphi$ is positive when the trajectory  $\mbf{z}$ satisfies the requirement and negative otherwise.
Moreover, the magnitude of  $\rho_\varphi$ represents how robustly the trajectory  $\mbf{z}$ satisfies (or not) the requirement. 
Using the robustness function, the falsification problem for CPS can be converted into a search problem where the goal is to find points that belong to the zero level-set of the function $f\left(\mbf{x}\right)= \rho_\varphi(\mathcal{M}(\mbf{x}))$ for $\mbf{x}\in\Designspace$.

In order to identify the zero level-set of $f$ and, hence, estimate the probability of falsification, we propose a family of stochastic algorithms, referred to as Part-X (Partitioning with X-distributed sampling). 
Part-X 
adaptively partitions the search space $\Designspace$ in order to enclose the falsifying points.
The algorithm uses local Gaussian process estimates in order to adaptively branch and sample within the input space. 
The partitioning approach not only helps us identify the zero level-set, but also to circumvent issues that rise due to the fact that the function $f$ is discontinuous. 
In fact, the only assumption we need on $f$ is that it is a locally continuous function.
In order to evaluate our approach, we have built an SBTG library in Python (to be publicly released after publication) that can use the RTAMT~\cite{NickovicY2020atva} or TLTk~\cite{CralleySHF2020rv} temporal logic robustness Python libraries.
We demonstrate our framework on selected functions from the optimization literature and on the F16 benchmark problem ~\cite{heidlauf2018verification}.
An additional feature of our framework is that Part-X can also be utilized for evaluating  test vectors generated by other SBTG tools.
Therefore, Part-X can also function as an evaluation tool for other falsification algorithms in assurance procedures, or even in competitions~\cite{ernst2020arch}.

Besides the aforementioned practical/applied contributions of our work, the Part-X framework also makes the following theoretical and technical contributions. 
First, it uses multiple local models as opposed to a single global process. 
This helps the algorithm to handle discontinuities, estimate the worst and best performing inputs, and identify disconnected zero-level sets.
Second, Part-X uses a branching criterion to reduce the number of regions generated.

\section{Literature Review}~\label{sec::literature}
This work spans two macro-areas: \begin{inparaenum}\item[(i)] Automatic falsification of cyber-physical systems; \item[(ii)] Learning level sets of non-linear non-convex black-box functions\end{inparaenum}. In the following, we first motivate and document the contribution and state of the art within the field of automatic falsification performed with optimization based approaches, and we identify the challenges (section~\ref{sec::stochfals}). Section~\ref{sec::statandOpt} reviews techniques that can support non-stationary learning problems, with focus on the learning of a level set. 

\subsection{Stochastic Optimization for falsification}\label{sec::stochfals}

Global algorithms hold asymptotic convergence guarantees to global optima, but \textit{often require a large number of evaluations to effectively reach a good solution}. Stochastic optimization techniques such as simulated annealing, genetic algorithms, ant colony optimization, and the cross-entropy method have been applied in this domain. It is important to highlight that optimization algorithms are stochastic when: \begin{inparaenum}\item[(i)] an observed objective function evaluation is subject to noise; or \item[(ii)] randomness is injected via the search procedure itself while the function results are noiseless~\cite{spall2005introduction}, or a mixture of the two\end{inparaenum}. However, these methods notably lack sample efficiency, partly due to the difficulty in setting the numerous hyperparameters for methods with memory, and the inability of exploiting information from previous iterations for memory free methods. 
As an example of memory free sampling, hit and run, which is a common implementation of uniform random sampling, epitomizes myopic search: locations iteratively evaluated have no impact upon subsequent sampling decisions. The benefit of these stochastic search techniques is their easy-to-derive guarantees in terms of coverage. 
On the other hand, local search and hill climbing techniques (which are deterministic with noiseless function evaluations) such as CMA-ES, simplex search, trust region search, response surface methodology, and gradient ascent/descent have increased sample efficiency. However, due to the notorious non-linearity of robustness landscapes in CPS falsification, these local techniques often get trapped in sub-optimal local regions as these searches lack explorative properties. Among recent \textit{local} optimization approaches, Stochastic Trust-Region Response-Surface Method (STRONG) iteratively executes two stages: (1) constructing local models through linear regression, when gradient information is not readily available, and (2) formulating and solving the associated trust-region subproblem~\cite{chang2013stochastic}. In~\cite{shashaani2015astro}, the proposed Derivative Free Adaptive Sampling Trust Region Optimization (ASTRO-DF) uses derivative free trust-region methods to generate and statistically certify local stochastic models, that are used to update candidate solution values through an adaptive sampling scheme. \textit{Without assumptions on the degree of non-linearity in the objective function, the drawback of these methods is that the quality of the discovered local minimum may be poor, relative to the entire surface}. Research has been conducted on multi-starts and restart policies which have overcome these issues in some cases~\cite{zabinsky2010stopping}. \textit{Global} algorithms aim to balance the trade-off between exploration and exploitation, and investigate un-sampled regions without the promise of improvement. Notable examples include: improving hit-and-run~\cite{Zabinsky1993} with polynomial expected function evaluations, and the Greedy Randomized Adaptive Search Procedure (GRASP)~\cite{feo1995greedy}. 
A drawback to these methods is when function calls are limited. In such cases, meta-model based search uses previous samples to predict function values in regions where sampling has not been performed. Efficient Global Optimization (EGO)~\cite{jones1998efficient}, for deterministic black-box settings, is an established meta-model based method. In this context, Bayesian optimization (BO) is a popular black-box stochastic optimization method~\cite{frazier2018bayesian}. BO balances exploration and exploitation via surrogate modeling to produce high quality solutions in a relatively small number of iterations. However, due to the overhead costs associated to BO, such as surrogate model estimation and acquisition function optimization, this technique should be employed when observations of the objective function are expensive to collect - as in the case of observing the robustness $\rho_\varphi (\mathcal{M}(\mathbf{x}))$ for a given input $\mathbf{x}$. BO has proven to be quite successful over CPS falsification problems~\cite{deshmukh2017testing,ghosh2018verifying,waga2020falsification}. Recently, BO was combined with a local trust region search in an intelligent global-local optimization framework and proved highly effective for CPS falsification~\cite{mathesen2021stochastic,mathesen2019falsification}. 
  

\subsection{Statistical Methods for iterative Partitioning for Function Learning and Level Set Estimation}\label{sec::statandOpt}
Partitioning has deep roots in exact optimization (branch and bound methods), in black box optimization (target level set estimation), and in statistical applications such as additive modeling. In fact, partitioning is used: \begin{inparaenum}\item[(i)] to handle large scale data sets, by iteratively splitting the data; \item[(ii)] to handle high dimensional inputs, by iteratively splitting the original space into lower dimensional subspaces; \item[(iii)] to handle discontinuities of the reward function with respect to the input space, by iteratively branching the support of the decision space forming a partition\end{inparaenum}. In this paper, we focus on the ability to estimate a surrogate for the robustness function that allows the needed flexibility to represent a function with potentially high rate of variation. Since our Part-X uses Gaussian processes (see section~\ref{sec::GPLit}), we are challenged by the correlation function which is assumed to be constant. Partitioning allows us to change the rate of variation, through the Gaussian process correlation function, across the input space. Furthermore, we are interested in the estimation of the, potentially disconnected, set of locations that violate desired system properties. In this brief review, we focus on methods that use partitioning for level set estimation, and we highlight, when present, the type of guarantees offered by the different algorithms.
\paragraph{Surrogate Driven Approaches for Level Set Estimation}
In~\cite{shekhar2019}, the authors propose and analyze an algorithm for the estimation of a level set for the case where the reward function can be evaluated with noise. The authors estimate a Gaussian process across the entire solution space and, at each iteration, each subregion within the current partition is further branched based on the estimated distance to the, unknown, level set, and based on the predicted variance at the centroid of the subregion. If a subregion is branched, a new location is also evaluated and more replications are ran for the subregions forming the updated partition thus updating the Gaussian process. While the scope of the paper is different from ours, the authors do not provide and error bound over the level set estimate, and use a unique model across the partition. Also, the sampling is accuracy driven and does not consider the maximization of the probability to identify the level set. Within the falsification literature,~\cite{Fan2020} proposes to use partitioning to identify falsifying level sets. Specifically, assuming a partitioning and sampling schemes are provided as input, sequentially uses Conformal regression to derive an estimate of the maximum and minimum function value within a subregion, with subregion-wide noise. Based on such predictions, a subregion can be eliminated, maintained (if we have confidence that the system satisfies the properties) or further branched. While conformal regression does not allow to produce a point estimate of the response, the method allows to derive a probabilistic guarantee over a subregion for the correctness of the eliminating and maintaining. The work in~\cite{Lei2018} is used to define the conditions over the conformal regression. Besides the lack of point estimates, another potential drawback is that the authors assume that the sampling density is defined as input. Nonetheless, sampling densities can impact the performance of the approach. Even if not using surrogate models, we mention Probabilistic Branch and Bound (PBnB) that uses a directed random search to approximate a target level set associated with a target quantile of the best globally optimal solutions~\cite{Pra05, Zabinsky2020PBnB}. An advantage of PBnB is that it partitions the space iteratively, and performs more function evaluations in the promising regions as it refines its approximation of the target level set. In addition, there is a probabilistic bound on the error of the approximation, which we will use in the analysis of our algorithm (section~\ref{sec::pXtheory}).
\paragraph{Surrogates for non-stationary responses}
As mentioned, one possible criticality in using one single surrogate model is in the difficulty to capture discontinuities in the function behavior. Within the statistical learning community, so-called \textit{treed Gaussian processes} have been proposed with main applications in learning from non-stationary and large data sets. This literature does not address level set estimation, but it is relevant in the sense that it addresses non-stationarity of the response and the noise. In~\cite{Chipman2002} a binary tree is used to learn partitions based on Bayesian regression. The difficulty to scale the approach to high dimensional inputs/ large data sets led to the computational work in~\cite{DENISON2002}. One drawback of these approaches was identified in the irergularity of the variance associated to the different subregions. In particular, it was observed that some subregions tended to exhibit variances orders of magnitude larger. In~\cite{Kim2005} this problem is alleviated by using stationary processes in the several subregions (thus leading to a larger number of subregions, but better control over the variance profile). Differently,~\cite{Gramacy05bayesiantreed} deals directly with the problem of non-stationarity of the response and of the variance (heteroscedasticity) proposing a new form of Gaussian process, the Bayesian Treed Gaussian Process model. The approach combines stationary Gaussian processes and partitioning, resulting in treed Gaussian processes, and it implements a tractable non-stationary model for non-parametric regression. Along a similar line, \cite{Liang2018} also uses a binary tree which to iteratively learn different models and while the method has good fitting results, it also shows good computational performance for large datasets.
\paragraph{Reinforcement learning approaches}
As previously mentioned, sampling can have an important impact on the ability of the search method to perform effective partitioning, i.e., branching decisions that can increase either the prediction accuracy or the probability to identify a falsifying input. In this regard, it is important to point at relevant references in the area of optimal sequential sampling. In~\cite{Wang2019}, the latent action Monte Carlo Tree Search (LA-MCTS) sequentially focuses the evaluation budget by iteratively splitting the input space to identify the most promising region. Differently from our case, LA-MCTS uses non-linear boundaries for branching regions, and, like us, learns a local model to select candidates. Another relevant contribution, AlphaX, presented in~\cite{Wang2019AlphaX}, explores the search using distributed Monte Carlo Tree Search (MCTS) coupled with a Meta-Deep Neural Network (DNN) model that guides the search and branching decision, focusing on the sequential selection of the most promising region. In general, while relevant in terms of sampling, sequential optimization methods are designed to learn and focus on \textit{the} elite region of the input space. Our premise is different: as long as a point is falsifying the user will be interested in having this information no matter how bad the falsification is. 

\subsection{Contributions}\label{sec::contrib}
In this paper, we propose Part-X, a partitioning algorithm that relies on local Gaussian process estimates in order to adaptively branch and sample within the input space. Part-X brings the following innovative features over PBnB and prior falsification approaches:
\begin{enumerate}
    \item Due to the local metamodels, as opposed to a unique global process, Part-X produces point estimates with associated predictions, for the output function. This allows to build estimates for the worst and best performing inputs, as well as the volume of falsifying sets for arbitrary levels of negative robustness.
    \item The presence of a branching criterion avoids proliferation of subregions as a subregion is only branched when a criterion is satisfied (e.g., we are not able to classify the region).
    \item Under mild assumptions, the algorithm asymptotically achieves minimum error as a function of the minimum allowed size for a partition. 
    \item Part-X mechanism to partition, classify and model can be used to support and evaluate any other test generation approach. This will be further detailed in section~\ref{sec::PartXalgo}.
\end{enumerate}

\section{Part-X: a family of algorithms for partitioning-driven Bayesian optimization}\label{sec::PartXalgo}
\begin{figure}[H]
		{\includegraphics[width=0.85\textwidth]{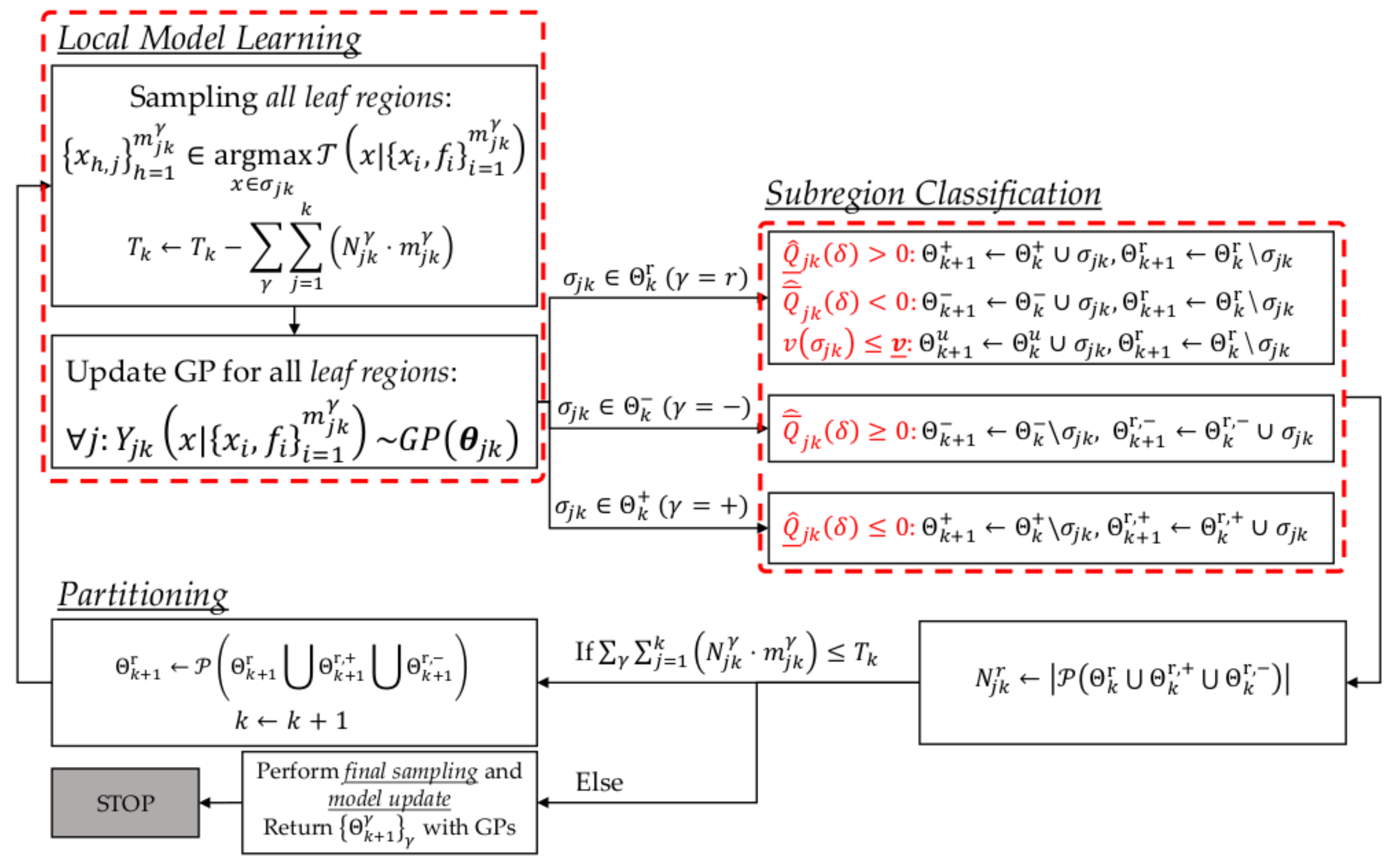}
		\caption{An overview of the Part-X algorithm.\label{fig::Part-X}}}
\end{figure}
Given the robustness function $f\left(\mbf{x}\right)= \rho_\varphi(\mathcal{M}(\mbf{x}))):\mathbb{R}^{d}\rightarrow \mathbb{R}, \mbf{x}\in\Designspace$, we aim at identifying the minimum for the function $f$, while producing an estimate of the function at locations $\mbf{x}$ that satisfy $f\left(\mbf{x}\right)\le 0$, and an estimate of the associated set of values $\LO$, such that $f\left(\mbf{x}\right)\le 0,\forall \mbf{x}\in \LO$. 
The algorithm we propose is an adaptive partitioning and sampling approach that provides: (i) the probability that a falsification exists within the input region $\Designspace$; (ii) the estimate of the falsification volume interpreted as $\FalsVol=v\left(\LO\right)/v\left(\Designspace\right)$, where $v\left(\cdot\right)$ represent the measure, volume in our application, associated to the relevant set. The probability in (i) is strongly dependent on the magnitude of the violation (i.e., how negative the robustness function is predicted to be), while the measure in (ii) helps engineers understanding how likely is the system to produce a falsifying behaviour (i.e., convergence of the estimated level set to the true level set w.p.1).

Figure~\ref{fig::Part-X} gives an overview of the proposed approach, which we refer to as Part-X (Partitioning with X-distributed sampling). 
The algorithm sequentially and adaptively partitions the space and evaluates inputs (test vectors) in order to estimate a surrogate for the robustness function in each of the subregions that are sequentially generated. 
In particular, the surrogates we use are Gaussian processes~\cite{santner2013design,mathesen2021stochastic,pedrielli2020extended}, which allows us to define a variety of sampling distributions ($T\left(\mbf{x}|\cdot\right)$ in Figure~\ref{fig::Part-X}). 
At each iteration, a number $N^{k}$ of points is sampled in each sub-region $\Subreg^{\gamma}_{ijk}$ to update the corresponding surrogate. Then, we decide whether to stop branching a region based on the fact that the posterior $\Significance$-quantile of the minimum (maximum) predicted robustness is above (below) the zero level, which deems an input to be unsafe. It is important to highlight the complexity associated with the estimation of the minimum and maximum $\Significance$-quantile associated to the Gaussian process $\min_{\mathbf{x}\in\Subreg^{\gamma}_{ijk}}\left[\gpPred\left(\mathbf{x}\right)-Z_{1-\Significance/2}\sqrt{\gpVar\left(\mathbf{x}\right)}\right]$, $\max_{\mathbf{x}\in\Subreg^{\gamma}_{ijk}}\left[\gpPred\left(\mathbf{x}\right)+Z_{1-\Significance/2}\sqrt{\gpVar\left(\mathbf{x}\right)}\right]$. In this work, we use a Monte-Carlo estimate for these quantities. 

At each iteration, a region can be \textit{temporarily classified}, thus entering the, potentially disconnected, set $\RegionPlus$ if the region is classified as \textit{not} falsifying, while $\RegionMinus$ is entered in the opposite scenario. In the case the uncertainty associated to the model(s) is large (which is typically the case at the first iterations of the algorithm) no sub-region is maintained, i.e., everything is branched. If a region reaches the minimum volume $v\left(\sigma\right)=\prod^{D}_{d=1} \minVol X_d$, where $X_d$ is the length of the robustness function support along dimension $d$, and $\minVol\in\left[0,1\right]$ is an input parameter. we cannot branch the subregion any more. The algorithm continues until: \begin{inparaenum}\item[(i)] the maximum number of evaluation is exhausted; \item[(ii)] all the subregions have been classified\end{inparaenum}.


Section~\ref{sec::GPLit} presents the basic definitions for Gaussian processes, which we use to produce predictions of the robustness function, section~\ref{sec::partitioning}, introduces the scheme followed by Part-X to iteratively branch, sample, update subregion models and decide whether to classify each of the subregions. 


\subsection{Modeling the Robustness as a Gaussian Process}\label{sec::GPLit}
A Gaussian process (GP) is a statistical learning model used to build predictions for non-linear, possibly non-convex smooth functions. The basic idea is to interpret the true, unknown function $y\left(\mathbf{x}\right)$ is a realization from a stochastic process, the Gaussian process. If we can measure the function without noise, then the Gaussian process will interpolate the true function at the evaluated points, while, conditional on the sampled locations $\mathbf{x}_{1},\ldots,\mathbf{x}_{n}$, a Gaussian process produces the conditional density $P\left(Y\left(\mathbf{x}_{0}\right)|\mathbf{x}\right)$. In particular, $Y(\mathbf{x}) = \mu + Z(\mathbf{x})$, where $\mu$ is the, constant, process mean, and $Z(\mathbf{x})\sim GP(0,\tau^2R)$, with $\tau^2$ being the constant process variance and $R$ the correlation matrix. Under the Gaussian correlation assumption, $R_{ij} = \prod_{l=1}^d \exp\left(-\theta_l \left(x_{il}-x_{jl}\right)^2\right)$, for $i, j, = 1,\ldots,n$. The $d$-dimensional vector of hyperparameters $\boldsymbol{\theta}$ controls the smoothing intensity of the predictor in the different dimensions. The parameters $\mu$ and $\tau^2$ are estimated through maximum likelihood~\cite{santner2013design}:
        $\hat{\mu} = \frac{\boldsymbol{1}_n^T\mathrm{R}^{-1}f(\boldsymbol{X}_n)}{\boldsymbol{1}^T_n\mathrm{R}^{-1}\boldsymbol{1}_n}, \ 
        \hat{\tau}^2= \frac{(f(\mathbf{X}_n)-\boldsymbol{1}_n\hat{\mu}_g)^T \mathrm{R}^{-1}((f(\mathbf{X}_n)-\boldsymbol{1}_n\hat{\mu}_g)}{n}$. The best linear unbiased predictor 
        form is~\cite{santner2013design}:
        \begin{eqnarray}\label{equ: gppre}
            \hat{f}(\mathbf{x}) = \hat{\mu} + \mathbf{r}^T\mathrm{R}^{-1}(f(\mathbf{X}_n) - \mathbf{1}_n\hat{\mu})\label{eqn::yhat}
        \end{eqnarray}
        where $\mathbf{X}_n$ is a set of $n$ sampled locations, and $f(\mathbf{X}_n)$ is the $n$-dimensional vector having as elements the function value at the sampled locations. The model variance associated to the predictor is:
        \begin{eqnarray} \label{equ: gpvar}
        s^2\left(\mathbf{x}\right) = \tau^2\left(1-\mathbf{r}^T\mathrm{R}^{-1}\mathbf{r} + \frac{\left((1-\mathbf{1}_n^T\mathrm{R}^{-1}\mathbf{r}\right)^2}{\mathbf{1}_n^T\mathrm{R}^{-1}\mathbf{1}_n} \right)\label{eqn::mvar}
        \end{eqnarray}
        where $\mathbf{r}$ is the $n$-dimensional vector having as elements the Gaussian correlation between location $\mathbf{x}\in\Designspace$ and the $n$ elements of $\mathbf{X}_n$, i.e., $\mathbf{r}_i(\mathbf{x}) =\prod_{l=1}^d \exp\left(-\theta_l(x_l-x_{il})^2\right), i = 1,\ldots,n$.

In our application, we use the model in~\eqref{eqn::yhat} as a surrogate for the, unknown, robustness function. In particular, given a training set of input and associated robustness value 
$\left\lbrace \mathbf{x}_i, y_i\right\rbrace^{n}_{i=1}$, we will 
predict the robustness value $\hat{Y}\left(\mathbf{x}_{n+1}\right)$ at a new unsampled location $\mathbf{x}_{n+1}$. The robustness prediction will have the associated variance $s\left(\mathbf{x}_{n+1}\right)$. 
%
%

\subsection{Sequential adaptive branching with classification}\label{sec::partitioning}

In this section, the model-based sequential adaptive partitioning is presented. 
Part-X starts considering the entire input $\Designspace$ and keeps branching until the simulation budget is exhausted (i.e., the maximum number of tests has been executed) or all the non-classified subregions have the a length $\delta$ along all the dimensions (i.e., the subregion is unbranchable), or all the subregions have been classified as either satisfying, violating or unbranchable. 

\begin{figure}[H]
\centering
		{\includegraphics[width=0.40\textwidth]{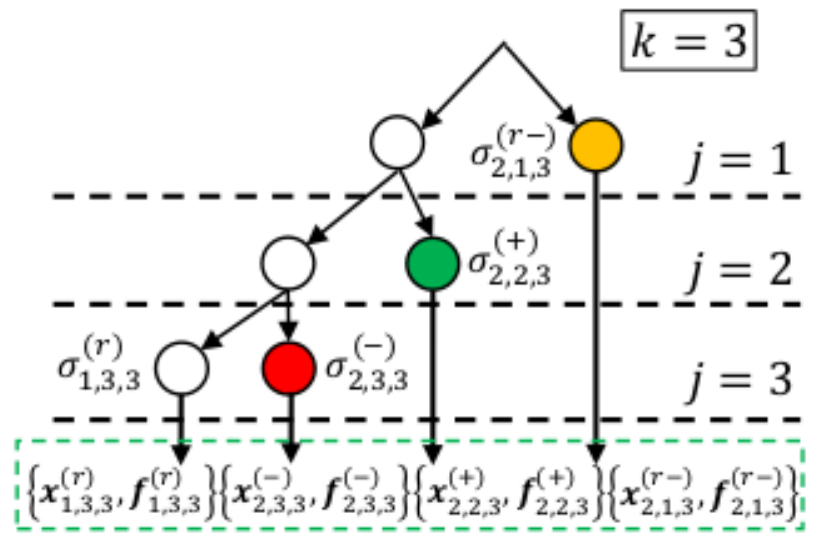}
		\caption{An example of partitioning tree generated by Part-X at iteration $k=3$. Subregions in red are classified as violating, those in green are satisfying, the orange region is being reclassified from violating to remaining (it would be light green if remaining from satisfying).\label{fig::Part-Xtree}}}
\end{figure}

Figure~\ref{fig::Part-Xtree} shows an example of the tree produced by the Part-X algorithm at iteration $k=3$. In general, at each iteration $k$, the Part-X tree be updated with the following subregion (leaf) types: 
\begin{itemize}
    \item New satisfying region ($\Subreg^{+}_{j,k}$): for each level $j=1,\ldots, k$, at each iteration $k$, the new set of satisfying regions, union of the new satisfying subregions, is $\Subreg^{+}_{j,k}=\bigcup_{i}\Subreg^{+}_{i,j,k}$, formed by new $N^{+}_{j,k}$ subregions. 
    \item New positive reclassified region ($\Subreg^{r+}_{j,k}$): for each level $j=1,\ldots, k$, at each iteration $k$, the new set of regions reclassified from positive, union of the new reclassified subregions, is $\Subreg^{r+}_{j,k}=\bigcup_{i}\Subreg^{r+}_{i,j,k}$, formed by new $N^{r+}_{j,k}$ subregions. 
    \item New violating region ($\Subreg^{-}_{j,k}$): for each level $j=1,\ldots, k$, at each iteration $k$, the new set of regions classified as violating, union of the new violating subregions, is $\Subreg^{-}_{j,k}=\bigcup_{i}\Subreg^{-}_{i,j,k}$, formed by new $N^{-}_{j,k}$ subregions.
    \item New negative reclassified region ($\Subreg^{r-}_{j,k}$): for each level $j=1,\ldots, k$, at each iteration $k$, the new set of regions reclassified from violating, union of the new reclassified subregions, is $\Subreg^{r-}_{j,k}=\bigcup_{i}\Subreg^{r-}_{i,j,k}$, formed by new $N^{r-}_{j,k}$ subregions.
    \item New remaining region ($\Subreg^{r}_{j,k}$): for each level $j=1,\ldots, k$, at each iteration $k$, the new set of regions remaining, union of the new remaining subregions, is $\Subreg^{r}_{j,k}=\bigcup_{i}\Subreg^{r}_{i,j,k}$, formed by new $N^{r}_{j,k}$ subregions.
    \item New unclassified region ($\Subreg^{u}_{j,k}$): for each level $j=1,\ldots, k$, at each iteration $k$, the new set of regions remaining, union of the new unclassified subregions, is $\Subreg^{u}_{j,k}=\bigcup_{i}\Subreg^{u}_{i,j,k}$, formed by new $N^{u}_{j,k}$ subregions.
\end{itemize}
We refer to $\Subreg^{\gamma}_{ijk}$ as the individual subregion of type $\gamma\in\left\lbrace +,-,r+,r-,r,u\right\rbrace$, resulting from branching at iteration $k$, at level $j$ of the partitioning tree, given the previous definitions, at the $k$\textsuperscript{th} iteration, there will be a number $N_{k}=\left(N^{+}_{k}+N^{r,+}+N^{-}_{k}+N^{r,-}+N^{r}_{k}+N^{u}_{k}\right)$ of \textit{new} subregions. We refer to the union of the positively classified subregions at level $j$, at iteration $k$ as $\RegionPlus_{jk},\RegionRp_{jk},\RegionMinus_{jk},\RegionRm_{j,k},\RegionUnd_{jk},\RegionR$ for satisfying, reclassified from satisfying, violating, reclassified from violating, undefined and remaining, respectively. Dropping the index $j$ will result in the notation for the same sets at each iteration.

\vspace{2pt}
\noindent{\textbf{Sampling and model estimation} } Part-X samples in a different way subregions that have been reclassified, or are not classified, i.e., $\gamma\in\left\lbrace r+,r-,r,u\right\rbrace$, and regions of type $\gamma\in\left\lbrace +,-\right\rbrace$). Each subregion in the first group requires at least $n_0$ points for the estimation of the Gaussian process (and associated predictor $\hat{Y}^{\gamma}_{ijk}$), and we also add $n_{\mbox{\tiny{BO}}}$ observations that are sampled using a Bayesian optimization approach, thus biasing sampling toward locations that falsify the requirements. In fact, the samples are sequentially collected in a way that maximizes the Expected Improvement~\cite{jones1998efficient}, and 
at each sampling iteration $t=1,2,\ldots,n_{\mbox{\tiny{BO}}-1}$, for each subregion $\Subreg^{\gamma}_{ijk}$, 
we sample a new location that maximizes the Expected Improvement $\mbox{EI}(\mathbf{x})$, namely: 
\begin{eqnarray}
    \mathbf{x}_{t+1}\in\arg\max_{\mathbf{x}\in\Subreg^{\gamma}_{ijk}}\mbox{EI}(\mathbf{x}) =  E\left[\max\left(\left[f^*-\hat{Y}^{\gamma}_{ijk}\left(\mathbf{x}\right)\right]\Phi\left(\frac{f^*-\hat{Y}^{\gamma}_{ijk}\left(\mathbf{x}\right)}{\hat{s}^{\gamma}_{ijk}\left(\mathbf{x}\right)}\right)+
\hat{s}^{\gamma}_{ijk}\left(\mathbf{x}\right)\phi\left(\frac{f^*-\hat{Y}^{\gamma}_{ijk}\left(\mathbf{x}\right)}{\hat{s}^{\gamma}_{ijk}\left(\mathbf{x}\right)}\right),0\right)\right].\label{eqn::eidef}
\end{eqnarray}
Where $f^*$ is the best function value sampled so far in subregion $\Subreg^{\gamma}_{ijk}$. Once the point has been sampled, we update the Gaussian process, and proceed until $n_{\mbox{\tiny{BO}}}$ evaluations have been performed. We then proceed verifying the branching conditions and possibly updating the partition. 

On the other hand, classified subregions receive an \textit{overall} evaluation budget of $n_{c}$ evaluations to be distributed across \textit{all} the subregions in this group. In order to perform such distribution of evaluations, we consider the Gaussian Process predictor $\hat{Y}^{\gamma}_{ijk}$, and we add samples to each subregion in this group using the following metric:
\begin{eqnarray}
    I^{\gamma}_{ijk}=\frac{1}{v\left(\Subreg^{\gamma}_{ijk}\right)}\int_{x_{0}\in \sigma^{\gamma}_{ijk}} \left(\int^{0}_{-\infty}f^{\gamma}_{ijk}\left(y\left(x_{0}\right)\right)dy\right)dx_{0}.\label{eqn::pm}
\end{eqnarray}
The basic idea behind the metric in~\eqref{eqn::pm} is to sample a region proportionally to the cumulated density below $0$ of the Gaussian process. The scalar $v\left(\Subreg^{\gamma}_{ijk}\right)$, representing the volume of the subregion $\Subreg^{\gamma}_{ijk}$, is used to normalize the indicator so that $I^{\gamma}_{ijk}\in\left(0,1\right)$. Note that, by construction $v\left(\Subreg^{\gamma}_{ijk}\right)>0$ and~\eqref{eqn::pm} exists finite.

An overview of the procedure for the sampling phase is reported in Algorithm~\ref{alg::SampleBO}. In general, we refer to $n_{jk}$ as the \textit{cumulated} number of evaluations in each subregion at level $j$ of the partitioning tree at iteration $k$.

\begin{algorithm}[htbp]
    \small{
       \caption{Sequential subregion sampling with Bayesian optimization (\texttt{SampleBO})}
       \label{alg::SampleBO}
    \begin{algorithmic}
       \STATE {\bfseries Input:} Subregion $\Subreg^\gamma_{ijk}\subset \mathbb{R}^d$, objective function $f(\mathbf{x})$, initialization budget $n^0$, total budget $n^{\mbox{\tiny{BO}}}$, $n_{jk}$ locations sampled so far $\left(\mbf{x}^\gamma_{ijk},\mbf{f}^\gamma_{ijk}\right)$;
       \STATE {\bfseries Output:} best location and value  $\mathbf{x}^{*}_{ijk}\in\Subreg^\gamma_{ijk}$, $f\left(\mathbf{x}^{*}_{ijk}\right)$, final Gaussian process model $\left(\gpPred(\mathbf{x}), \gpVar(\mathbf{x})\right)$.;
       \vspace{2pt}
       \hrule
       \vspace{2pt}
       \STATE \textbf{Step 1}: Compute the initial required evaluation budget:
       \IF{$n_{jk}\ge n_{0}$}
            \STATE Use $n_{jk}$ sampled points within the subregion as initializing points for the Gaussian process estimation; $t\leftarrow 0$;
       \ELSE
            \STATE Sample $n_{0}-n_{jk}$ points using a Latin Hypercube design. Return $\mathbf{x}_{\mbox{\scriptsize{train}}}\in\Subreg^\gamma_{ijk}$; $f\left(\mathbf{x}\right), \forall \mbf{x}\in \mathbf{x}_{\mbox{\scriptsize{train}}}$; $t\leftarrow n_0$;
       \ENDIF
       \WHILE{$t<n_{\mbox{\tiny{BO}}}$}
               \STATE \textbf{Step 2.1}: Estimate the GP using the training data $\{\mathbf{x}_{\mbox{\scriptsize{train}}},\mathbf{y}^i_{\mbox{\scriptsize{train}}}\}$, return $\left(\gpPred(\mathbf{x}), \gpVar(\mathbf{x})\right)$ for all $\mathbf{x}\in\Subreg^{\gamma}_{ijk}$;
               \STATE \textbf{Step 2.2}: Select the next location $\mathbf{x}^{*}_{\mbox{\scriptsize{EI}}} \leftarrow \arg \max_{\mathbf{x}\in \mathbb{X}} \mbox{EI}\left(\mathbf{x}\right)$; Evaluate and store $f(\mathbf{x}^{*}_{\mbox{\scriptsize{EI}}})$. 
                \STATE \textbf{Step 2.3}: $t \leftarrow t+1$\;
       \ENDWHILE
    \end{algorithmic}}
\end{algorithm}
\vspace{2pt}

\normalfont
\noindent{\textbf{Classification Scheme}} 
At the end of the sampling stage, we have $N$ Gaussian processes, and we need to estimate the $\Significance$-quantile for the minimum and maximum value of the function in each of the subregions. In order to do so, we use the Monte Carlo procedure in Algorithm~\ref{alg::MClim}.

\begin{algorithm}[H]
       \caption{Gaussian process based min-max quantiles estimation (\texttt{MCstep})}
       \label{alg::MClim}
    \small{
    \begin{algorithmic}
       \STATE {\bfseries Input:} subregion $\Subreg^\gamma_{ijk}\subset \mathbb{R}^d$, objective function $f(\mathbf{x})$, Gaussian process model $\left(\gpPred(\mathbf{x}), \gpVar(\mathbf{x})\right)$. Number of Monte Carlo iterations $R$, number of evaluations per iteration $M$;
       \STATE {\bfseries Output:} Estimates for the minimum and maximum $\Significance$-quantiles of the minimum and maximum function value $\widehat{\widebar{Q}}_{j}\left(\Significance\right),\Var\left(\widehat{\widebar{Q}}_{j}\left(\Significance\right)\right);\hat{\underbar{Q}}_{j}\left(\Significance\right), \Var\left(\hat{\underbar{Q}}_{j}\left(\Significance\right)\right)$ ;
       \vspace{2pt}
       \hrule
       \vspace{2pt}
       \FOR{$r=1,\ldots,R$}
           \FOR{$m=1,\ldots,M$}
               \STATE Sample $\mbf{x}_{mr}$, evaluate $\left(\gpPred(\mathbf{x}_{mr}), \gpVar(\mathbf{x}_{mr})\right)$;
           \ENDFOR
           \STATE Minimum and maximum quantile:
           \begin{eqnarray}
                \widebar{q}_r\left(\Significance\right) = \max\limits_{m=1,\ldots,M}\left(\gpPred\left(\mathbf{x}_{r,m}\right)+Z_{1-\frac{\Significance}{2}}\sqrt{\gpVar\left(\mathbf{x}_{r,m}\right)}\right),
                \underbar{q}_r\left(\Significance\right) = \min\limits_{m=1,\ldots,M}\left(\gpPred\left(\mathbf{x}_{r,m}\right)-Z_{1-\frac{\Significance}{2}}\sqrt{\gpVar\left(\mathbf{x}_{r,m}\right)}\right)\nonumber
            \end{eqnarray}
        \ENDFOR
        \STATE Minimum and maximum $\Significance$-quantile:
            \begin{eqnarray}
                \widehat{\widebar{Q}}_{j}\left(\Significance\right) = \frac{1}{R}\sum\limits_{i=1}^{R}\widebar{q}_r\left(\Significance\right),
                \Var\left(\widehat{\widebar{Q}}_{j}\left(\Significance\right)\right) = \frac{\Var\left(\widebar{q}_r\left(\Significance\right)\right)}{R},
                \hat{\underbar{Q}}_{j}\left(\Significance\right) = \frac{1}{R}\sum\limits_{i=1}^{R}\underbar{q}_r\left(\Significance\right),
                \Var\left(\hat{\underbar{Q}}_{j}\left(\Significance\right)\right) = \frac{\Var\left(\underbar{q}_r\left(\Significance\right)\right)}{R}\nonumber
            \end{eqnarray}
    \end{algorithmic}}
\end{algorithm}
\begin{algorithm}[htbp]
       \caption{Classification of a subregion (\texttt{Classify})}
       \label{alg::classify}
    \small{
    \begin{algorithmic}
       \STATE {\bfseries Input:} subregion $\Subreg^{\gamma}_{ijk}\subset \mathbb{R}^d$, current subregion type $\gamma\in\left(+,-,r\right)$, Gaussian process model $\left(\gpPred(\mathbf{x}), \gpVar(\mathbf{x})\right)$;
       \STATE {\bfseries Output:} Return region type $\left(r+,r-,+,-,r,u\right)$ ;
       \vspace{2pt}
       \hrule
       \vspace{2pt}
       \IF{$v\left(\Subreg_{j}\right)\le \prod^{D}_{d=1}\minVol\cdot X_d$}
            \STATE $\gamma = u$;
       \ELSIF{$\gamma=+$}
            \IF{$\widehat{\underbar{Q}}_{j}\left(\Significance\right)-Z_{1-\Significance/2}\Var\left(\widehat{\underbar{Q}}_{j}\left(\Significance\right)\right)\le 0$}
                \STATE $\gamma = r+$;
            \ELSE
                \STATE $\gamma = +$;
            \ENDIF
        \ELSIF{$\gamma=-$}
            \IF{$\widehat{\widebar{Q}}_{j}\left(\Significance\right)+Z_{1-\Significance/2}\Var\left(\widehat{\widebar{Q}}_{j}\left(\Significance\right)\right)\ge 0$}
                \STATE $\gamma = r-$;
            \ELSE
                \STATE $\gamma = -$;
            \ENDIF
        \ELSIF{$\gamma=r$}
            \IF{$\widehat{\widebar{Q}}_{j}\left(\Significance\right)+Z_{1-\Significance/2}\Var\left(\widehat{\widebar{Q}}_{j}\left(\Significance\right)\right)< 0$}
                \STATE $\gamma = -$;
            \ELSIF{$\widehat{\underbar{Q}}_{j}\left(\Significance\right)-Z_{1-\Significance/2}\Var\left(\widehat{\underbar{Q}}_{j}\left(\Significance\right)\right) > 0$}
                \STATE $\gamma = +$;
            \ELSE
                \STATE $\gamma = r$;
            \ENDIF
        \ENDIF
    \end{algorithmic}}
\end{algorithm}

Once the estimation procedure is complete, we classify each subregion $\Subreg^{r}_{ijk}$ according to the maximum and minimum quantile of the robustness function across the subregion. Specifically if the maximum quantile in a subregion satisfies $\widehat{\widebar{Q}}_{ijk}\left(\Significance\right)+Z_{1-\Significance/2}\Var\left(\widehat{\widebar{Q}}_{ijk}\left(\Significance\right)\right)<0$ the region is classified as violating the requirement and we update $\hat{\RegionMinus}_{k}\leftarrow \hat{\RegionMinus}_{k}\cup \Subreg^{r}_{ijk}$. On the other hand, if $\widehat{\widebar{Q}}_{ijk}\left(\Significance\right)-Z_{1-\Significance/2}\Var\left(\widehat{\widebar{Q}}_{ijk}\left(\Significance\right)\right)>0$ the region is classified as satisfying the requirement and we update $\hat{\RegionPlus}_{k}\leftarrow \hat{\RegionPlus}_{k}\cup \Subreg^{r}_{ijk}$. If a subregion is classified at iteration $k$, we verify if the corresponding classification criteria are violated and, in such scenario, re-classify the region as remaining $\Subreg^{r}_{ijk}$. 

Each time a subregion is classified as either satisfying or violating the requirements, the remaining region ($\RegionR$) is updated by removing the classified subregions. Given the remaining subregions $\hat{\RegionR}_{k}$, a branching algorithm is called that randomly selects a direction and cuts each subregion along that dimension into $B$ equal volume subregions. In particular, we allow subregions to be branched in direction $h=1,\ldots,d$ only if the size of the hypercube in that dimension is larger than $\minVol\times X_{d}$, where $\minVol$ is an input parameter, and  $X_{d}$ is the maximum length of the reward support along dimension $d$. Part-X terminates when the maximum number of function evaluations has been reached. We refer to this as the total evaluation budget $T$. 

\begin{algorithm}[H]
       \caption{Partitioning with Continued X-distributed Sampling}
       \label{alg::partX}
    \small{
    \begin{algorithmic}
       \STATE {\bfseries Input:} Input space $\Designspace$, function $f\left(\mbf{x}\right)$, initialization budget $n_0$, Bayesian optimization budget, $n_{\mbox{\tiny{BO}}}$, and unclassified subregions budget $n_{c}$, total number of evaluations $T$. Define branching operator $\mathcal{P}:A\rightarrow\left(A_{i}\right)_{i}:\bigcup_{i} A_{i}=A,\bigcap_{i} A_{i}=\emptyset$. Number of Monte Carlo iterations $R$, number of evaluations per iteration $M$; number of cuts per dimension per subregion $B$, classification percentile $\Significance$, $\minVol$;
       \STATE Set the iteration index $k\leftarrow 1$, Initialize the sets $\hat{\RegionMinus}_{k}=\hat{\RegionPlus}_{k}=\hat{\RegionRp}_{k}=\hat{\RegionRm}_{k}=\hat{\RegionUnd}_{k}=\emptyset$, $\RegionR_{k}\leftarrow\Designspace$;
       \STATE {\bfseries Output:} $\hat{\RegionMinus},\hat{\RegionPlus},\hat{\RegionRp},\hat{\RegionRm},\hat{\RegionR},\hat{\RegionUnd}$;
       \vspace{2pt}
       \hrule
       \vspace{2pt}
       \WHILE{$T_k\ge 0$}
        \STATE \underline{Branching}
            \FOR{$\Subreg^{\gamma}_{ijk}\in\left(\hat{\RegionRp}_{k}\cup \hat{\RegionRm}_{k}\cup \hat{\RegionR}_{k}\right)$}
                \STATE Return $\left(\Subreg^{r}_{i,j+1,k}\right)^{B}_{i=1}=\mathcal{P}\left(\Subreg^{\gamma}_{ijk}\right)$;
            \ENDFOR
            \STATE Count $N^{\widebar{C}}_{k}\leftarrow$ number of non-classified leaves of the partitioning tree at iteration $k$; $N^{C}_{k}\leftarrow$ number of classified leaves of the partitioning tree at iteration $k$;
            \IF{$T_{k}\ge n_{\mbox{\tiny{BO}}}\cdot N^{\widebar{C}}_{k} + \sum_{\Subreg^{\gamma}_{ijk}:\gamma\in\left\lbrace r+,r-,r\right\rbrace}\max\left(n_{jk}-n_0,0\right)$}
                \FOR{All unclassified subregions $\Subreg^{r}_{i,j,k}, \forall j, i(j)$}
                    \STATE Execute \texttt{SampleBO}$\left(\Subreg^{r}_{i,j,k},n_{\mbox{\tiny{BO}}},n_{0},n_{jk}\right)$;
                    \STATE Return the quantiles for the minimum and maximum function value executing \texttt{MCstep}$\left(R,M,\Subreg^{r}_{i,j,k},\gpPred,\gpVar\right)$;
                    \STATE Update subregions type: $\gamma\leftarrow$\texttt{Classify}$\left(\Subreg^{r}_{i,j,k},\gpPred,\gpVar\right),\hat{\Theta^{\gamma}}_{k}\leftarrow \hat{\Theta^{\gamma}}_{k}\bigcup \sigma_{j}, \hat{\Theta^{r}}_{k}\leftarrow\hat{\Theta^{r}}_{k}\setminus \sigma_{j}$;
                    \STATE $N^{\gamma}_{jk}\leftarrow N^{\gamma}_{jk}+1$;
                    \STATE $T_{k}\leftarrow T_{k}-n_{\mbox{\tiny{BO}}}-\max\left(n_{jk}-n_0,0\right)$;
                \ENDFOR
                \FOR{$j=1,\ldots,N^{+}_{k}\cup N^{-}_{k}$}
                    \STATE Allocate $n_c$ across the subregions proportionally to the probability metric in eqn.~\eqref{eqn::pm};
                    \STATE Return the quantiles for the minimum and maximum function value executing \texttt{MCstep}$\left(R,M,\Subreg^{\gamma}_{i,j,k},\gpPred,\gpVar\right)$;
                    \STATE Update subregions type: $\gamma\leftarrow$\texttt{Classify}$\left(\Subreg^{r}_{i,j,k},\gpPred,\gpVar\right),\hat{\Theta^{\gamma}}_{k}\leftarrow \hat{\Theta^{\gamma}}_{k}\bigcup \Subreg^{\gamma}_{ijk}, \hat{\Theta^{r}}_{k}\leftarrow\hat{\Theta^{r}}_{k}\setminus \Subreg^{\gamma}_{ijk}$;
                \ENDFOR
                \STATE $T_{k}\leftarrow T_{k}-n_c$;
            \ELSE
                \STATE Allocate $T_{k}$ to the subregions proportionally to the volume;
                \STATE Evaluate the function at the sampled point and update the Gaussian processes;
                \STATE $T_{k}\leftarrow 0$;
            \ENDIF
        \STATE $k\leftarrow k+1$;       
       \ENDWHILE
       \STATE Return $\FalsVol = \frac{v\left(\hat{\RegionMinus}_{k-1}\right)}{v\left(\Designspace\right)}$ 
    \end{algorithmic}}
\end{algorithm}

\vspace{2pt}
\noindent{\textbf{Partitioning with X-distributed sampling}}
The procedure in Algorithm~\ref{alg::partX} summarizes the phases of the proposed approach. In the algorithm, we use the notation $v\left(\cdot\right)$ to refer to the volume of a region. Since the regions in Part-X are hyperboxes, volumes are easily computable.

\begin{remark}[Part-X as evaluation tool]
 As mentioned in section~\ref{sec::contrib}, Part-X can be used by any test generation to evaluate it in its ability to identify falsifying regions. In order to do so Algorithm~\ref{alg::partX} can be executed without sampling. Doing so, the algorithm will keep branching until a subregion is classified, there are less than $n_{0}$ points in a subregion, the subregion has reached the  minimum volume. Doing so the falsification volume will be computed and different test generation tools can be compared.
\end{remark}

\section{Part-X Theoretical Analysis}\label{sec::pXtheory}
 
Figure~\ref{fig::thmQsv2} shows the 7\textsuperscript{th}, and 8\textsuperscript{th} iterations of Part-X applied to the 2-$d$ Himmelblau's function (section~\ref{sec::empirical}). The Figure~\ref{fig::thmQsv2} highlights the main quantitie at the core of our algorithm analysis, i.e., the concept of mis-classification volumes for the classified regions (for brevity referred to as $\Delta^{+}_{k}$ and $\Delta^{-}_{k}$ in Figure~\ref{fig::thmQsv2} for the satisfying and violating region, respectively), and mis-classification events $C^{+}_{k}$, and $C^{-}_{k}$. The following results provide bounds, with associated guarantees, of the 
mis-classification volume \textit{at each iteration of the algorithm}. We start providing an important result from the literature that allows us to analyse the error within a single subregion. Based on this result, we bound misclassification error and recovery of such error, both for satisfying as well violating subregions. Such bound is probabilistic in nature and it is defined at each level of the partitioning tree, and for each iteration of the algorithm (Lemmas~\ref{lem::misclassifyRp}-\ref{lem::misclassifym}). We then extend these result to the overall error in Theorems~\ref{thm:seq_misclassifyPlusCondC}-\ref{thm:seq_misclassifyMinusCondC}. 

\begin{figure}[h!]
\centering
		{\includegraphics[width=0.45\textwidth]{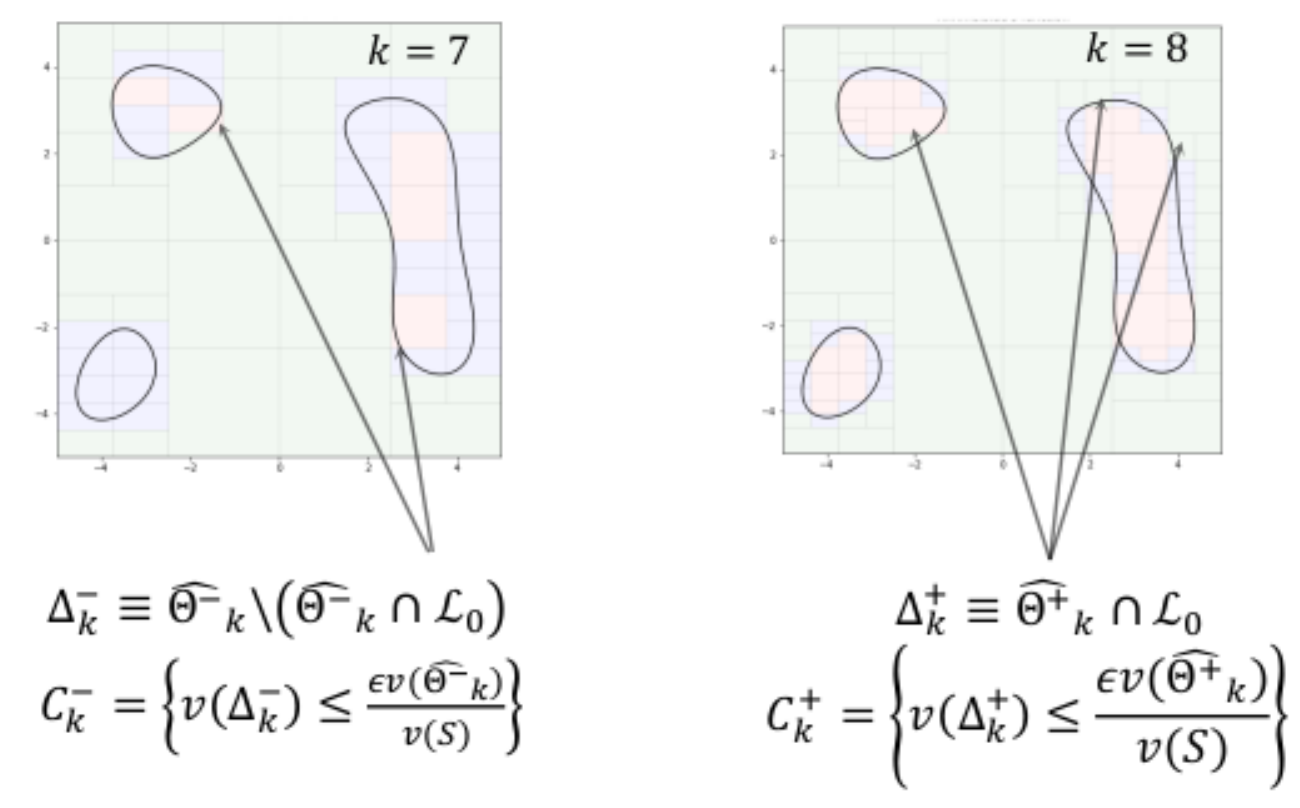}
		\caption{An example of two consecutive iterations for the Part-X algorithm. On the left we see a mis-classification event due to the fact that two subregions classified as violating, are in fact containing satisfying locations. On the right, the opposite scenario is shown. While we cannot, even asymptotically, eliminate this error, we will give probabilistic guarantees on its associated volume measure at each iteration of Part-X.\label{fig::thmQsv2}}}
		\vspace{-9pt}
\end{figure}

\subsection{Assumptions and Notation}~\label{Sec::assunot}
\begin{assumption}\label{ass::compactX}
	$\Designspace$ is a compact space.
\end{assumption}

\begin{assumption}\label{ass::locsmooth}
	$f\left(\mbf{x}\right)$ is locally smooth, i.e., there exists a collection of subregions with positive Lebesgue measure such that $f\left(\mbf{x}\right)$ is smooth within each subregion.
\end{assumption}

\begin{assumption} \label{known params}
	The hyperparameters $\tau$, $\boldsymbol{\theta}$ of the Gaussian process model are assumed to be known.
\end{assumption}

\begin{assumption}\label{cross valid}
	The initial sample set of $n_{0}$ points $\{\boldsymbol{x}_{i}\}_{i=1}^{n_0}$, produces a Gaussian process model that satisfies cross-validation.
\end{assumption}

\begin{assumption}\label{bounded function}
	The true function to be optimized, $f$, is bounded over $\Designspace$.
\end{assumption}

\begin{definition}[Significance]
\label{def::sign}
Let $\alpha_{j}$, be the significance of a probabilistic statement at level $j$ of the partitioning tree. Then, given any $\alpha>0$, we will have that the significance at level $j$ satisfies:
\begin{eqnarray}
\alpha_{j}=\begin{cases}
\alpha & \mbox{If }j=1\\
\alpha_{j-1}/B & \mbox{If }j>1
\end{cases}\label{eqn::signStatements}
\end{eqnarray}
\end{definition}

\begin{definition}
We will refer to the $0$-level set $\LO$ as
\begin{eqnarray}
    \LO=\{\mathbf{x}\in S: f(\mathbf{x})\le 0\}\label{eqn::L0}
\end{eqnarray}
\end{definition}

\begin{definition}[$\epsilon^{+}_{j,k},\epsilon^{+}_{k},\widehat{\RegionPlus}_{j,k},\widehat{\RegionPlus}_{k}$]\label{def::epsp}
Let $\epsilon^{+}_{j,k} = v(\widehat{\RegionPlus}_{j,k} \cap \LO), j=1,\ldots,k$ denote the volume incorrectly classified as $\RegionPlus$ at the $j$\textsuperscript{th} level of the partitioning tree, up to the $k$\textsuperscript{th} iteration of the algorithm. Then, $\epsilon^{+}_{k} = \sum^{k}_{j=1}v(\widehat{\RegionPlus}_{j,k} \cap \LO)$ will be the volume incorrectly classified as $\RegionPlus$ up to the $k$\textsuperscript{th} iteration of the algorithm. Equivalently, $\epsilon^{+}_{k} = v(\widehat{\RegionPlus}_{k} \cap \LO)$, where $\widehat{\RegionPlus}_{k}=\bigcup^{k}_{j=1}\widehat{\RegionPlus}_{jk}$.
\end{definition}
\begin{definition}[$\epsilon^{-}_{j,k},\epsilon^{-}_{k},\widehat{\RegionMinus}_{j,k},\widehat{\RegionMinus}_{k}$]\label{def::epsm}
Let $\epsilon^{-}_{j,k} = v(\widehat{\RegionMinus}_{jk}) - v(\widehat{\RegionMinus}_{jk} \cap \LO), j=1,\ldots,k$ denote the volume incorrectly classified as $\RegionMinus$ at the $j$\textsuperscript{th} level of the partitioning tree, up to the $k$\textsuperscript{th} iteration of the algorithm. Then, $\epsilon^{-}_{k} = \sum^{k}_{j=1}\left[v(\widehat{\RegionMinus}_{jk}) - v(\widehat{\RegionMinus}_{jk} \cap \LO)\right]$ will be the volume incorrectly classified as $\RegionMinus$ up to the $k$\textsuperscript{th} iteration of the algorithm. Equivalently, $\epsilon^{-}_{k} = v(\widehat{\RegionMinus}_{k})-v(\RegionMinus_{k} \cap \LO)$, where $\widehat{\RegionMinus}_{k}=\bigcup^{k}_{j=1}\widehat{\RegionMinus}_{jk}$.  
\end{definition}
In the first part of the proof, we will assume that (level formulations will also be used applying definitions~\ref{def::epsp}-\ref{def::epsm}).
\begin{eqnarray} 
0 \le \epsilon^{+}_{k} \le \frac{\epsilon v\left(\widehat{\RegionPlus}_k\right)}{v\left(S\right)} ,
0 \le \epsilon^{-}_k \le \frac{\epsilon v\left(\widehat{\RegionMinus}_k\right)}{v\left(s\right)}. \nonumber
\end{eqnarray}
Where the user-defined parameter $\epsilon$ represents, for each subregion, the volume that the user tolerates to be wrongly classified. 
\begin{definition}[Misclassification events]\label{def::misclassE}
Let $C_k = (C_k^+\cap C_k^-)$ denote the mis-classification event at the $k$\textsuperscript{th} iteration to be defined as: 
\begin{eqnarray} 
\begin{aligned}
C^{+}_k &=\left\lbrace v\left(\widehat{\RegionPlus}_k \cap \LO\right) \le \frac{\epsilon v\left(\widehat{\RegionPlus}_k\right)}{v\left(s\right)}\right\rbrace. \label{eqn::cplus}\\
C^{-}_k &=\left\lbrace  v\left(\widehat{\RegionMinus}_k \cap \LO\right) \ge v\left(\widehat{\RegionMinus}_k\right)-\frac{\epsilon v(\widehat{\RegionMinus}_k)}{v\left(s\right)}\right\rbrace.
\end{aligned}
\end{eqnarray}

let event $C_k = \left(C_k^+\cap C_k^-\right)$ to ensure that the volume of incorrectly classified subregions satisfy the upper bound.

\end{definition}

Part-X returns $\hat{\RegionMinus}_{k}$, as an estimate for the true, unknown $0$-level set $\LO=\{\mathbf{x}\in S: f(\mathbf{x})\le 0\}$.

\subsection{Main Results}\label{sec::main_result}
The first result we provide exploits the PAC guarantee in~\cite{mcallester1999some} (Thm. 1, p. 358), and reformulates it for Part-X when the focus is the classification of the $i$\textsuperscript{th} subregion, at the $j$\textsuperscript{th} level of the tree, at the $k$\textsuperscript{th} iteration of the algorithm. Consider Algorithm~\ref{alg::classify} that classifies each subregion $\hat{\sigma}^{\cdot}_{ijk}$, then, the following can be proved.
\begin{lemma}[General PAC guarantee for subregion classification]\label{lem::PACgen}
Let us refer to $\mathcal{H}$ as the classification of the subregion $\hat{\sigma}^{\mathcal{H}}_{ijk}$. Then, we have for $\alpha_{j}>0$, that with probability at least $1-\alpha_{j}$ over the choice of a sample of $n_{jk}$ observations, the subregion $\hat{\sigma}^{\mathcal{H}}_{ijk}$, which is such that every element of $\hat{\sigma}^{\mathcal{H}}_{ijk}$ is consistent with the sample and the subregion is measurable ($P(\hat{\sigma}^{\mathcal{H}}_{ijk}) > 0$), satisfies the following.
\begin{eqnarray}
\varepsilon^{\mathcal{H}}_{ijk}\le \frac{\ln\frac{1}{P(\hat{\sigma}^{\mathcal{H}}_{ijk})}+\ln\frac{1}{\alpha_{j}}+2\ln n_{jk}+1}{n_{jk}}.\nonumber
\end{eqnarray}
where, the error rate $\varepsilon^{\mathcal{H}}_{ijk}$ associated to $\hat{\sigma}^{\mathcal{H}}_{ijk}$ is defined for a given desired hypothesis (target) $\mathcal{H}$ to be the probability over the choice of the input $x$ over $\hat{\sigma}^{\mathcal{H}}_{ijk}$ disagrees with $\mathcal{H}$.
\end{lemma}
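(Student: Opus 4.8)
The plan is to read this statement as a direct specialization of the realizable (consistent-hypothesis) PAC bound of \cite{mcallester1999some} (Thm.~1) to the Part-X setting, so the substance of the argument is not re-deriving the inequality but exhibiting the dictionary between the learning-theoretic objects of that theorem and the partitioning quantities of Part-X, and then checking that its hypotheses hold here. I would present the instantiation that makes the right-hand side match verbatim, with $m=n_{jk}$, confidence $\delta=\alpha_j$, prior weight $P(h)=P(\hat{\sigma}^{\mathcal{H}}_{ijk})$, and generalization error $\epsilon(h)=\varepsilon^{\mathcal{H}}_{ijk}$, since the term $2\ln n_{jk}+1$ in the statement is exactly the signature of that theorem's consistent-case form.

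First I would fix the correspondence concretely. The role of a ``hypothesis'' is played by the assertion attached to $\hat{\sigma}^{\mathcal{H}}_{ijk}$, namely that \emph{every} point of the subregion carries the label $\mathcal{H}$ (satisfying, resp.\ violating), acting only on that subregion. The hypothesis class is the collection of $B$-adic sub-boxes of $\Designspace$ produced by the branching operator $\mathcal{P}$; crucially this class and the prior on it are fixed \emph{before} sampling, so the data-dependent choice of which $\hat{\sigma}^{\mathcal{H}}_{ijk}$ to classify is legitimate because the cited bound holds uniformly over all consistent hypotheses. The prior weight assigned to a box I identify with its (normalized) measure $P(\hat{\sigma}^{\mathcal{H}}_{ijk})$, positive by the measurability hypothesis $P(\hat{\sigma}^{\mathcal{H}}_{ijk})>0$; since the boxes at a level partition $\Designspace$ these weights sum to one, as a prior must. ``Consistent with the sample'' becomes: each of the $n_{jk}$ evaluated points falling in the subregion agrees in sign with the \texttt{Classify} decision $\mathcal{H}$, and $\varepsilon^{\mathcal{H}}_{ijk}$ is the disagreement probability of the region-hypothesis, i.e.\ the mass of the misclassified portion of $\hat{\sigma}^{\mathcal{H}}_{ijk}$ — precisely the generalization error in \cite{mcallester1999some}.

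With the dictionary in place I would invoke Thm.~1 of \cite{mcallester1999some} with $\delta=\alpha_j$ and sample size $n_{jk}$: with probability at least $1-\alpha_j$ over the sample, every consistent measurable hypothesis $h$ satisfies $\varepsilon(h)\le(\ln\frac{1}{P(h)}+\ln\frac{1}{\alpha_j}+2\ln n_{jk}+1)/n_{jk}$, and substituting $h=\hat{\sigma}^{\mathcal{H}}_{ijk}$ reproduces the claim. I would note that the level-indexed significance $\alpha_j$ of Definition~\ref{def::sign} is just the per-level confidence budget consumed here; the geometric schedule $\alpha_j=\alpha_{j-1}/B$ is what later lets a union bound over the $B$-way split at each level be absorbed without inflating the overall significance, but that bookkeeping belongs to Lemmas~\ref{lem::misclassifyRp}--\ref{lem::misclassifym}, not to this base step.

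The step I expect to be the main obstacle is verifying that the hypotheses of \cite{mcallester1999some} genuinely apply, since that theorem is stated for an i.i.d.\ sample whereas Part-X draws points adaptively — a Latin-hypercube seed followed by Expected-Improvement acquisitions, \eqref{eqn::eidef}. I would address this in one of two ways: either phrase the guarantee with respect to the reference sampling measure and let the ``sample'' carrying the PAC claim be the exchangeably drawn initial $n_0$ design, treating the EI points as refining the surrogate but not as the statistical carrier; or, leaning on Assumptions~\ref{known params}--\ref{cross valid} (known GP hyperparameters and a cross-validated initial model), argue that a classifier consistent on the seed remains consistent on the adaptive sample so that the effective hypothesis class and its prior are exactly those fixed before sampling. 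Making this reduction airtight — so that ``consistent'' and ``measurable'' mean precisely what the cited theorem requires — is the only nontrivial point; everything after it is substitution.
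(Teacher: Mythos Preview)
Your approach is exactly the paper's: the authors' proof consists solely of the sentence ``This Lemma is a readaptation of the result in Thm.~1, p.~358 of~\cite{mcallester1999some}, therefore we omit the proof here,'' so your dictionary-and-invoke plan is not merely aligned with but strictly more detailed than what the paper provides. Your concern about the i.i.d.\ hypothesis versus adaptive EI sampling is well-taken and is not addressed in the paper at all; since the paper treats the lemma as a black-box citation, there is nothing further to compare.
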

\begin{proof}
This Lemma is a readaptation of the result in Thm. 1, p. 358 of~\cite{mcallester1999some}, therefore we omit the proof here.
\end{proof}

Lemma~\ref{lem::misclassifyP} and \ref{lem::misclassifym} bound the classification error for the positively and negatively classified volume, respectively, at each iteration $k$ of the Part-X algorithm. Lemma~\ref{lem::misclassifyRp} and \ref{lem::misclassifyRm} produce a bound on the reclassified error volume at each level $j$ of the partitioning tree, at iteration $k$ of the algorithm, conditional upon the event $C_k$. This volume is important because it is responsible for a decrease in the classification error due to the subregions wrongly classified as safe in previous iterations. Reclassification cannot occur at the first iteration (i.e., $k>1$) and the recovered error volume at the $k$\textsuperscript{th} is clearly bounded by the error at the previous iteration $k-1$.

\begin{lemma}[Positive classified error bound]\label{lem::misclassifyP}
Let $\LO$ denote the $0$-level set defined in~\eqref{eqn::L0}, and $\widehat{\Subreg}^{+}_{jk}$ be the set of subregions classified as satisfying at the $j$\textsuperscript{th} level of the Part-X tree, at the $k$\textsuperscript{th} iteration of the algorithm. Refer to $\widehat{\Subreg}^{+}_{k}$ as the set of subregions classified as satisfying at the $k$\textsuperscript{th} iteration of the algorithm. Assume the event $C_k$ in Definition~\ref{def::misclassE} to be true. Then, the following holds for the volume misclassified as satisfying at the $k$\textsuperscript{th} iteration of Part-X:  
\begin{eqnarray} 
P\left(v\left(\widehat{\Subreg}^{+}_{jk} \cap \LO \right)\le N^{+}_{jk}\epsilon^{+}_{jk} | C_k\right)\ge \prod^{N^{+}_{j,k}}_{i=1}\left[1-\delta_{ijk}\right]. \label{eqn::volspjk}
\end{eqnarray}
where, under event $C^{+}_{k}$, $0 \le \epsilon^{+}_{jk} \le \frac{\epsilon v\left(\widehat{\RegionPlus}_{jk}\right)}{v\left(S\right)}$, and $\delta_{ijk}=\frac{\ln\frac{1}{p_{ijk}}+\ln\frac{1}{\alpha_{jk}}+2\ln n_{jk}+1}{n_{jk}}$.
By construction:
\begin{eqnarray} 
P\left(v\left(\widehat{\Subreg}^{+}_{k} \cap \LO \right)\le N^{+}_{k}\epsilon_{k} | C_k\right)\ge \prod^{k}_{j=1}\prod^{N^{+}_{j,k}}_{i=1}\left[1-\delta_{ijk}\right].\label{eqn::volspk}
\end{eqnarray}
where, under event $C^{+}_{k}$, $0 \le \epsilon^{+}_{k} \le \frac{\epsilon v\left(\widehat{\RegionPlus}_{k}\right)}{v\left(S\right)}$.
\end{lemma}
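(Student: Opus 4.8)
The plan is to lift the single-subregion PAC bound of Lemma~\ref{lem::PACgen} to a union of subregions by identifying the ``positive misclassification'' with the PAC error rate, and then combining the per-subregion guarantees multiplicatively. First I would fix the target hypothesis $\mathcal{H}$ associated with a subregion $\widehat{\Subreg}^{+}_{ijk}$ classified as satisfying to be the statement ``$f(\mbf{x})>0$ for every $\mbf{x}$ in the subregion''. A sampled point falsifies this hypothesis exactly when it lands in $\LO$, so the error rate of Lemma~\ref{lem::PACgen} specializes to the (normalized) volume that the classifier places in $\RegionPlus$ but that actually belongs to the zero level set, i.e.\ $\varepsilon^{+}_{ijk}$ is proportional to $v(\widehat{\Subreg}^{+}_{ijk}\cap\LO)$ with the normalizing factor $v(\Designspace)$ used to define $P(\widehat{\Subreg}^{+}_{ijk})=p_{ijk}$. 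Conditioning on the event $C^{+}_{k}$ of Definition~\ref{def::misclassE} supplies the ceiling $\epsilon^{+}_{jk}\le \epsilon\, v(\widehat{\RegionPlus}_{jk})/v(\Designspace)$ that the per-subregion bound must realize.

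Then I would apply Lemma~\ref{lem::PACgen} to each of the $N^{+}_{jk}$ subregions at level $j$: with probability at least $1-\delta_{ijk}$ over its $n_{jk}$ samples, the error rate is at most $\delta_{ijk}$, which translates into $v(\widehat{\Subreg}^{+}_{ijk}\cap\LO)\le \epsilon^{+}_{jk}$. Because the subregions of a partition are disjoint, the misclassified volumes add, $v(\widehat{\Subreg}^{+}_{jk}\cap\LO)=\sum_{i=1}^{N^{+}_{jk}}v(\widehat{\Subreg}^{+}_{ijk}\cap\LO)$, so the event $\{v(\widehat{\Subreg}^{+}_{jk}\cap\LO)\le N^{+}_{jk}\epsilon^{+}_{jk}\}$ holds whenever every per-subregion event holds. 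Since each subregion's predictor is estimated from its own independently drawn sample, the per-subregion events are independent, and their joint probability is the product $\prod_{i=1}^{N^{+}_{jk}}(1-\delta_{ijk})$, which yields~\eqref{eqn::volspjk}. The ``by construction'' bound~\eqref{eqn::volspk} then follows by repeating the disjointness/independence argument across the levels $j=1,\ldots,k$: volumes still add over the disjoint level-$j$ contributions, giving $v(\widehat{\Subreg}^{+}_{k}\cap\LO)\le N^{+}_{k}\epsilon_{k}$, while the independent samples across levels let the confidences multiply into $\prod_{j=1}^{k}\prod_{i=1}^{N^{+}_{jk}}(1-\delta_{ijk})$.

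I expect the main obstacle to be the bookkeeping that reconciles the PAC \emph{confidence} level with the PAC \emph{error} value: Lemma~\ref{lem::PACgen} states confidence $1-\alpha_{jk}$ together with bound value $\delta_{ijk}$, whereas~\eqref{eqn::volspjk} uses $1-\delta_{ijk}$ as the confidence, so I must make the choice of $\alpha_{jk}$ explicit (tying it to $\delta_{ijk}$) and verify that the measurability and consistency hypotheses of Lemma~\ref{lem::PACgen} hold for each classified subregion. The second delicate point is justifying the \emph{product} (rather than a union bound) of the per-subregion confidences, which hinges on the independence of the samples drawn in distinct subregions; I would state this as the operative consequence of the sampling scheme in Algorithm~\ref{alg::SampleBO}. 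The additivity of volume over the disjoint partition and the conditioning on $C^{+}_{k}$ that turns $\epsilon^{+}_{jk}$ into the stated fraction of $v(\widehat{\RegionPlus}_{jk})$ are then routine.
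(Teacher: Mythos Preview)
Your proposal is essentially the same route the paper takes: instantiate Lemma~\ref{lem::PACgen} per subregion by identifying the PAC error rate with the normalized misclassified volume $v(\widehat{\Subreg}^{+}_{ijk}\cap\LO)/v(\Designspace)$, sum volumes over the disjoint subregions at a level, multiply confidences via independence of the per-subregion samples to obtain~\eqref{eqn::volspjk}, and then repeat the disjointness/independence step across levels $j=1,\dots,k$ for~\eqref{eqn::volspk}. The one place the paper differs from your write-up is the choice of $p_{ijk}$: you take $p_{ijk}=P(\widehat{\Subreg}^{+}_{ijk})$ as the volume fraction, whereas the paper instantiates the ``$P(U)$'' of McAllester's bound with the GP-implied probability $P(f\le 0\mid \mathcal{S}_{ijk})$ induced by the kernel in~\eqref{eqn::yhat}--\eqref{eqn::mvar}; this changes the numerical value of $\delta_{ijk}$ but not the structure of the argument. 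The two ``delicate points'' you flag (the $1-\alpha_{jk}$ versus $1-\delta_{ijk}$ bookkeeping, and the product-versus-union-bound justification) are treated in the paper exactly as you anticipate, and with about the same level of informality.
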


\begin{proof}
By the definition of $\LO$, 
\begin{eqnarray} 
P\left(v\left(\widehat{\Subreg}^{+}_{j,k} \cap \LO \right)\le N^{+}_{j,k}\epsilon^{+}_{k}| C_{k}\right) = P\left(v\left(\mathbf{x}_k: f\left(\mathbf{x}_{j,k}\right)\le 0, \mathbf{x}_{j,k}\in \widehat{\Subreg}^{+}_{j,k}  \right)\le N^{+}_{j,k}\epsilon^{+}_{k}| C_{k}\right), \forall k.\label{eqn::lemma_volpsk1}
\end{eqnarray}
We assume that there exists $y_{j,k}^{+}$, such that
\begin{eqnarray} 
P\left(v\left(\mathbf{x}_{j,k}: f\left(\mathbf{x}_{j,k}\right)\le y_{j,k}^{+}, \mathbf{x}_{j,k}\in \widehat{\Subreg}^{+}_{j,k}  \right)\right) = \frac{v\left(\mathbf{x}_{j,k}: f\left(\mathbf{x}_{j,k}\right)\le  y_{j,k}^{+}\right)}{v\left(\widehat{\Subreg}^{+}_{j,k} \right)} = \frac{N^{+}_{j,k}\epsilon^{+}_{k}}{v\left(\widehat{\Subreg}^{+}_{j,k} \right)}.\label{eqn::lemma_volpsk2}
\end{eqnarray}
Therefore, by applying equation \eqref{eqn::lemma_volpsk2} to equation \eqref{eqn::lemma_volpsk1}, we have 
\begin{eqnarray}
P\left(v\left(\widehat{\Subreg}^{+}_{j,k} \cap \LO \right)\le N^{+}_{j,k}\epsilon^{+}_{k}| C_{k}\right) =
P\left(v\left(\mathbf{x}_{j,k}: f\left(\mathbf{x}_{j,k}\right)\le 0  \right)\le v\left(\mathbf{x}_{j,k}: f\left(\mathbf{x}_{j,k}\right)\le  y_{j,k}^{+}| C_{k}\right)\right).\nonumber
\end{eqnarray}
which is equal to $P\left(y_{j,k}^{+}<0| C_{k-1}\right)$. Part-X decides the characterization of each subregion based on the minimum quantile. Let us define the event $F_{ijk}$ occurring when $v\left(\widehat{\Subreg}^{+}_{i,j,k} \cap \LO \right)\le \epsilon^{+}_{i,j,k}$, such that:
\begin{eqnarray}
0\le\epsilon^{+}_{i,j,k}\le\frac{\epsilon v\left(\widehat{\Subreg}^{+}_{ij,k}\right)}{v\left(S\right)}.\nonumber
\end{eqnarray}
Consider as the PAC sampling measure, i.e., $P\left(U\right)$ in Lemma~\ref{lem::PACgen}, the $P\left(f\le 0|\mathcal{S}_{ijk}\right)$, where $\mathcal{S}_{ijk}$ is the set of sampled points in the $i$\textsuperscript{th} subregion at level $j$, at iteration $k$. Such probability is implied by our separable kernel in equations~\eqref{eqn::yhat}-\eqref{eqn::mvar}, and it is therefore known at each iteration. Since each subregion has associated a different Gaussian process, we refer to this probability as $p_{ijk}$. Then the probability associated to the event $F_{ijk}$ is:
\begin{eqnarray}
P\left(F_{ijk}\right)=P\left(v\left(\widehat{\Subreg}^{+}_{i,j,k} \cap \LO \right)\le \epsilon^{+}_{i,j,k}| C_{k}\right). \nonumber
\end{eqnarray}
where $\widehat{\Subreg}^{+}_{i,j,k}$ is the $i$\textsuperscript{th} subregion at level $j$ at iteration $k$. Let us now refer to the mis-classification probability of this positively classified volume as $\varepsilon^{+}_{ijk}$. Then, 
from Lemma~\ref{lem::PACgen}, we know that:
\begin{eqnarray}
P\left(\varepsilon^{+}_{ijk}\le \frac{\ln\frac{1}{p_{ijk}}+\ln\frac{1}{\alpha_{jk}}+2\ln n_{jk}+1}{n_{jk}}\right)\ge 1-\alpha_{jk}.\nonumber
\end{eqnarray}
It is important to connect the classified volume and the error rate $\varepsilon^{+}_{ijk}$. In particular, we have that:
\begin{eqnarray}
\varepsilon^{+}_{ijk} = \left[v\left(\widehat{\Subreg}^{+}_{i,j,k} \cap \LO \right)/v\left(S\right)\right].\label{eqn::relvrpvarepsPOS}
\end{eqnarray}
And, conditional on the event $C_{k}$, the relationship~\eqref{eqn::relvrpvarepsPOS} implies the following:
\begin{eqnarray}
\varepsilon^{+}_{ijk} \le \frac{\epsilon^{+}_{ijk}}{v\left(S\right)}.\nonumber
\end{eqnarray}
Given the definition of our event $F_{ijk}$, 
the following holds:
\begin{eqnarray}
P\left(v\left(\widehat{\Subreg}^{+}_{i,j,k} \cap \LO \right)\le \epsilon^{+}_{i,j,k}| C_{k}\right)\ge 1-\varepsilon^{+}_{ijk} \ge 1-\frac{\ln\frac{1}{p_{ijk}}+\ln\frac{1}{\alpha_{jk}}+2\ln n_{jk}+1}{n_{jk}}.\nonumber
\end{eqnarray}
Now, we want to derive the \textit{level-probability} as the measure associated to the event $F_{jk}=\bigcap_{i}F_{ijk}$. Since the $N^{+}_{j,k}$ subregions that get positively classified are independent, the following holds:
\begin{eqnarray}
P\left(F_{jk}|C_{k}\right)\ge \prod_{i}\left[1-\frac{\ln\frac{1}{p_{ijk}}+\ln\frac{1}{\alpha_{jk}}+2\ln n_{jk}+1}{n_{jk}}\right].\nonumber
\end{eqnarray}
Finally, we need to derive a bound for our original probability, i.e., $P\left(v\left(\widehat{\Subreg}^{+}_{j,k} \cap \LO \right)\le N^{+}_{j,k}\epsilon^{+}_{j,k}| C_{k}\right)$, clearly the following holds:
\begin{eqnarray}
P\left(v\left(\widehat{\Subreg}^{+}_{j,k} \cap \LO \right)\le N^{+}_{j,k}\epsilon^{+}_{j,k}| C_{k}\right)\ge P\left(F_{jk}|C_{k}\right).\nonumber
\end{eqnarray}
Hence:
\begin{eqnarray}
P\left(v\left(\widehat{\Subreg}^{+}_{j,k} \cap \LO \right)\le N^{+}_{j,k}\epsilon^{+}_{j,k}| C_{k}\right)\ge \prod_{i}\left[1-\frac{\ln\frac{1}{p_{ijk}}+\ln\frac{1}{\alpha_{jk}}+2\ln n_{jk}+1}{n_{jk}}\right]
=\prod^{N^{+}_{j,k}}_{i=1}\left[1-\delta_{ijk}\right].\nonumber
\end{eqnarray}
Now, we need to derive the iteration-level guarantee. We notice that, for different levels $1<j\le k$, the sampling and classification are independent processes. In fact, as the volume $v\left(\widehat{\Subreg}^{+}_{jk} \cap \LO \right)$ is the union of the disjoint regions $v\left(\widehat{\Subreg}^{+}_{ijk} \cap \LO \right)$, so it is the volume $v\left(\widehat{\Subreg}^{+}_{k} \cap \LO \right)$ resulting from the union of the disjoint volumes $v\left(\widehat{\Subreg}^{+}_{jk} \cap \LO \right), j\le k$. 

Let us define the event $A_{jk}$ to occur when $v\left(\widehat{\Subreg}^{+}_{j,k} \cap \LO \right)\ge N^{+}_{j,k}\epsilon^{+}_{j,k}| C_{k}$. Then, we are interested in the $P\left(\bigcap_{j}A_{jk}\right) =\prod_{j}P\left(A_{jk}\right)\ge\prod_{j}\prod_{i}\left(\left(1-\delta_{ijk}\right)\right)$. Finally, we have our result:
\begin{eqnarray}
P\left(v\left(\widehat{\Subreg}^{+}_{k} \cap \LO \right)\le N^{+}_{k}\epsilon^{+}_{k} | C_{k}\right)\ge \prod_{j}\prod_{i}\left(1-\delta_{ijk}\right), \forall k.\nonumber
\end{eqnarray}
This completes the proof.
\end{proof}

\begin{lemma}[Positive reclassified error bound]\label{lem::misclassifyRp}
Let $\LO$ denote the $0$-level set defined in~\eqref{eqn::L0}, 
and $\widehat{\Subreg}^{r,+}_{j,k}$ be the set of subregions re-classified from satisfying at the $j$\textsuperscript{th} level of the Part-X tree, at the $k$\textsuperscript{th} iteration of the algorithm. Refer to $\widehat{\Subreg}^{r,+}_{k}$ as the set of subregions re-classified from satisfying at the $k$\textsuperscript{th} iteration of the algorithm.
Assume the event $C_k$ in Definition~\ref{def::misclassE} to be true. Then, the following holds:
\begin{eqnarray} 
P\left(v\left(\widehat{\Subreg}^{r,+}_{j,k} \cap \LO \right)\ge N^{r,+}_{j,k}\epsilon^{+}_{j,k-1} | C_{k}\right)\ge\prod^{N^{r,+}_{j,k}}_{i=1}\left[1-\delta_{ijk}\right], k>1.\label{eqn::volsrpjk}
\end{eqnarray}
where, under event $C^{+}_{k}$, $0 \le \epsilon^{+}_{jk} \le \frac{\epsilon v\left(\widehat{\RegionPlus}_{jk}\right)}{v\left(S\right)}$, and $\delta_{ijk}=\frac{\ln\frac{1}{p_{ijk}}+\ln\frac{1}{\alpha_{jk}}+2\ln n_{jk}+1}{n_{jk}}$. Accounting for all levels yields the following result:
\begin{eqnarray} 
P\left(v\left(\widehat{\Subreg}^{r,+}_{k} \cap \LO \right)\ge N^{r,+}_{k}\epsilon^{+}_{k-1} | C_{k}\right) \ge \prod^{k}_{j=1} \prod^{N^{r,+}_{j,k}}_{i=1}\left[1-\delta_{ijk}\right], k>1.\label{eqn::volsrpk}
\end{eqnarray}
where, under event $C^{+}_{k}$, $0 \le \epsilon^{+}_{k} \le \frac{\epsilon v\left(\widehat{\RegionPlus}_{k}\right)}{v\left(S\right)}$.
\end{lemma}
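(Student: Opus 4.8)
The plan is to mirror the argument used for Lemma~\ref{lem::misclassifyP}, replacing the error-accumulation (upper-bound) direction by an error-recovery (lower-bound) direction and shifting the per-subregion tolerance back to iteration $k-1$. First I would rewrite the target event using the definition of $\LO$ in~\eqref{eqn::L0}, so that $v(\widehat{\Subreg}^{r,+}_{j,k}\cap\LO)$ becomes the volume of points $\mathbf{x}_{j,k}\in\widehat{\Subreg}^{r,+}_{j,k}$ with $f(\mathbf{x}_{j,k})\le 0$. As in the positive-classified case, I would introduce a threshold $y^{r,+}_{j,k}$ for which the event $\{v(\widehat{\Subreg}^{r,+}_{j,k}\cap\LO)\ge N^{r,+}_{j,k}\epsilon^{+}_{j,k-1}\}$ is equivalent to a sign statement on $y^{r,+}_{j,k}$; here the relevant sign is $y^{r,+}_{j,k}\ge 0$, reflecting that reclassification overturns a prior satisfying label precisely when the quantile criterion in Algorithm~\ref{alg::classify} detects falsifying mass. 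Since reclassification is only defined for $k>1$, this step already encodes the restriction appearing in the statement.

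Next I would define, for each reclassified subregion, the per-subregion recovery event $G_{ijk}=\{v(\widehat{\Subreg}^{r,+}_{i,j,k}\cap\LO)\ge\epsilon^{+}_{i,j,k-1}\}$, the counterpart of $F_{ijk}$ in the proof of Lemma~\ref{lem::misclassifyP}, with $0\le\epsilon^{+}_{i,j,k-1}\le\epsilon\,v(\widehat{\RegionPlus}_{j,k-1})/v(S)$. Taking the reclassification label $\mathcal{H}=r{+}$ as the target hypothesis in Lemma~\ref{lem::PACgen}, and using the known separable-kernel probability $p_{ijk}=P(f\le 0\mid\mathcal{S}_{ijk})$ as the PAC sampling measure, I would obtain the error-rate bound $\varepsilon^{r,+}_{ijk}\le\delta_{ijk}$ with probability at least $1-\alpha_{jk}$, where $\delta_{ijk}=\frac{\ln\frac{1}{p_{ijk}}+\ln\frac{1}{\alpha_{jk}}+2\ln n_{jk}+1}{n_{jk}}$. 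A correspondence in the spirit of~\eqref{eqn::relvrpvarepsPOS}, but relating the recovered $\LO$-mass to the complementary error rate, then converts the bounded error rate into $P(G_{ijk}\mid C_k)\ge 1-\delta_{ijk}$.

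Finally I would aggregate exactly as in Lemma~\ref{lem::misclassifyP}. Because the $N^{r,+}_{j,k}$ reclassified subregions at level $j$ are disjoint and their Gaussian processes are estimated independently, $P(\bigcap_i G_{ijk}\mid C_k)\ge\prod_{i=1}^{N^{r,+}_{j,k}}(1-\delta_{ijk})$, and since $\bigcap_i G_{ijk}$ implies $v(\widehat{\Subreg}^{r,+}_{j,k}\cap\LO)\ge N^{r,+}_{j,k}\epsilon^{+}_{j,k-1}$, this yields the level-$j$ bound~\eqref{eqn::volsrpjk}. Invoking the same level-independence argument, together with the disjointness of the per-level recovered volumes, produces the product over $j$ and hence the iteration-level bound~\eqref{eqn::volsrpk}.

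The main obstacle is the lower-bound direction in the second paragraph: Lemma~\ref{lem::PACgen} is naturally phrased as an upper bound on a misclassification rate, so I must justify that, conditional on a region being reclassified from positive and on $C_k$, the genuinely recovered $\LO$-mass is at least the tolerance $\epsilon^{+}_{j,k-1}$ that was admissible at iteration $k-1$. This requires defining the $r{+}$ error rate via the \emph{complementary} volume so that its PAC bound flips into a lower bound on the recovered mass, and arguing that a reclassification event cannot recover less than the previously tolerated per-subregion error without contradicting the sign statement on $y^{r,+}_{j,k}$. Care is also needed because the conditioning event is $C_k$ while the tolerance is indexed at $k-1$; the monotonicity remark preceding the lemma (recovered volume bounded by the prior-iteration error) must be used to keep the two indices consistent.
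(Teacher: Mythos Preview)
Your proposal is correct and takes essentially the same approach as the paper: the paper's own proof reads in full ``The proof follows the previous and it is therefore omitted here,'' so mirroring the argument of Lemma~\ref{lem::misclassifyP} with the obvious sign/index substitutions is exactly what is intended. Your write-up is in fact more explicit than the paper's, and the subtlety you flag about flipping the PAC upper bound into a lower bound on recovered mass is a genuine detail the paper simply passes over.
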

\begin{proof}
The proof follows the previous and it is therefore omitted here.
\end{proof}

The same results can be proved for the negatively classified volume, leading to lemma~\ref{lem::misclassifym}-\ref{lem::misclassifyRm}, presented below.

\begin{lemma}[Negative classified error bound]\label{lem::misclassifym}
Let $\LO$ denote the $0$-level set defined in~\eqref{eqn::L0}, and $\widehat{\Subreg}^{-}_{jk}$ be the set of subregions classified as violating at the $j$\textsuperscript{th} level of the Part-X tree, at the $k$\textsuperscript{th} iteration of the algorithm. Refer to $\widehat{\Subreg}^{-}_{k}$ as the set of subregions classified as violating at the $k$\textsuperscript{th} iteration of the algorithm. Assume the event $C_k$ in Definition~\ref{def::misclassE} to be true. Then, the following holds:  
\begin{eqnarray} 
P\left(v\left(\hat{\sigma}^-_{jk} \cap \left(S\setminus\LO\right)\right)\le N_{j,k}^-\epsilon^{-}_{j,k}| C_k   \right)\ge \prod^{N^{-}_{j,k}}_{i=1}\left[1-\delta_{ijk}\right]. \label{eqn::volsmjk}
\end{eqnarray}
where, under event $C^{-}_{k}$, $0 \le \epsilon^{-}_{jk} \le \frac{\epsilon v\left(\widehat{\RegionMinus}_{jk}\right)}{v\left(S\right)}$. 
By construction:
\begin{eqnarray} 
P\left(v\left(\hat{\sigma}^-_{k} \cap \left(S\setminus\LO\right)\right)\le N_{k}^-\epsilon^{-}_{k}| C_k   \right)\ge \prod^{k}_{j=1}\prod^{N^{-}_{j,k}}_{i=1}\left[1-\delta_{ijk}\right]. \label{eqn::volsmk}
\end{eqnarray}
where, under event $C^{-}_{k}$, $0 \le \epsilon^{-}_{k} \le \frac{\epsilon v\left(\widehat{\RegionMinus}_{k}\right)}{v\left(S\right)}$.
\end{lemma}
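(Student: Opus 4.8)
The plan is to mirror the structure of the proof of Lemma~\ref{lem::misclassifyP}, since Lemma~\ref{lem::misclassifym} is the exact negative-side analogue: the positive classification controls the volume incorrectly classified as satisfying (i.e.\ $\widehat{\Subreg}^{+}\cap\LO$, points that truly falsify but are marked safe), whereas here we must control the volume incorrectly classified as violating, namely $\widehat{\sigma}^{-}_{jk}\cap\left(S\setminus\LO\right)$, the points with $f(\mathbf{x})>0$ that were nonetheless placed in $\RegionMinus$. First I would translate the target volume statement into a statement about a threshold value: by the definition of $\LO$ in~\eqref{eqn::L0}, the event $\left\{v\left(\widehat{\sigma}^{-}_{jk}\cap(S\setminus\LO)\right)\le N^{-}_{j,k}\epsilon^{-}_{j,k}\right\}$ is equivalent to asserting that the volume of points in $\widehat{\sigma}^{-}_{jk}$ with $f(\mathbf{x})>0$ is small, which I would encode through an auxiliary value $y^{-}_{j,k}$ defined by the analogue of~\eqref{eqn::lemma_volpsk2}, so that the statement reduces to $P\left(y^{-}_{j,k}>0\mid C_k\right)$. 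The key sign-flip relative to the positive case is that Part-X classifies a subregion as violating based on the \emph{maximum} quantile $\widehat{\widebar{Q}}_{j}(\Significance)$ falling below zero, so the relevant PAC sampling measure is $P\left(f>0\mid \mathcal{S}_{ijk}\right)$ rather than $P\left(f\le 0\mid\mathcal{S}_{ijk}\right)$.

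Next I would invoke Lemma~\ref{lem::PACgen} at the subregion level, identifying the hypothesis $\mathcal{H}$ with ``violating'' and the error rate $\varepsilon^{-}_{ijk}$ with the probability that a point in $\widehat{\sigma}^{-}_{i,j,k}$ disagrees with that classification, i.e.\ truly satisfies $f(\mathbf{x})>0$. The lemma then gives, with probability at least $1-\alpha_{jk}$, the bound $\varepsilon^{-}_{ijk}\le\delta_{ijk}$ with the same $\delta_{ijk}=\frac{\ln\frac{1}{p_{ijk}}+\ln\frac{1}{\alpha_{jk}}+2\ln n_{jk}+1}{n_{jk}}$. I would then establish the bridge relation $\varepsilon^{-}_{ijk}=v\left(\widehat{\sigma}^{-}_{i,j,k}\cap(S\setminus\LO)\right)/v(S)$, the direct analogue of~\eqref{eqn::relvrpvarepsPOS}, and use the conditioning event $C^{-}_{k}$ from Definition~\ref{def::misclassE} to convert this into $\varepsilon^{-}_{ijk}\le \epsilon^{-}_{ijk}/v(S)$ with $0\le\epsilon^{-}_{ijk}\le\frac{\epsilon v(\widehat{\RegionMinus}_{ij,k})}{v(S)}$. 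This yields the per-subregion guarantee $P\left(v\left(\widehat{\sigma}^{-}_{i,j,k}\cap(S\setminus\LO)\right)\le\epsilon^{-}_{i,j,k}\mid C_k\right)\ge 1-\delta_{ijk}$.

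Finally I would aggregate, exactly as in the positive proof, in two stages. Across the $N^{-}_{j,k}$ subregions at a fixed level $j$, independence of the classification decisions lets me multiply the per-subregion success probabilities to obtain the level bound~\eqref{eqn::volsmjk}, $\prod^{N^{-}_{j,k}}_{i=1}[1-\delta_{ijk}]$; here I would note that the union of the disjoint per-subregion misclassified volumes gives the level misclassified volume $v\left(\widehat{\sigma}^{-}_{jk}\cap(S\setminus\LO)\right)$, and that the worst-case events $A^{-}_{jk}$ intersect cleanly because the subregions are disjoint hyperboxes. Then, invoking independence of sampling and classification across distinct levels $1\le j\le k$, I would take the product over $j$ to obtain the iteration-level bound~\eqref{eqn::volsmk}, $\prod^{k}_{j=1}\prod^{N^{-}_{j,k}}_{i=1}[1-\delta_{ijk}]$. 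The main obstacle I anticipate is not any single inequality but getting the sign conventions and the membership direction consistent throughout: the misclassification for $\RegionMinus$ is an \emph{over}-inclusion of satisfying points (hence the ``$\ge$'' flavor in the $C^{-}_{k}$ event and the complement $S\setminus\LO$), which must be tracked carefully so that the PAC error rate, the volume relation, and the conditioning event all point the same way; the independence justification across levels, while asserted as in the positive case, is the other place where the argument is doing real (and somewhat delicate) work.
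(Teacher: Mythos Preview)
Your proposal is correct and matches the paper's approach exactly: the paper states that the proof of Lemma~\ref{lem::misclassifym} is identical to the positive case (Lemma~\ref{lem::misclassifyP}) and omits the details, and you have faithfully reproduced that argument with the appropriate sign-flips (replacing $\LO$ by $S\setminus\LO$, the minimum quantile by the maximum quantile, and the PAC sampling measure by $P(f>0\mid\mathcal{S}_{ijk})$). The two-stage aggregation via independence across subregions and then across levels is likewise exactly what the paper does in the positive proof.
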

\begin{proof}
The proof is identical to the one of Lemma~\ref{lem::misclassifyRp}, and it is therefore omitted here.
\end{proof}

\begin{lemma}[Negative reclassified error bound]\label{lem::misclassifyRm}
Let $\LO$ denote the $0$-level set defined in~\eqref{eqn::L0}, and $\widehat{\Subreg}^{r,-}_{j,k}$ be the set of subregions re-classified from violating at the $j$\textsuperscript{th} level of the Part-X tree, at the $k$\textsuperscript{th} iteration of the algorithm. Refer to $\widehat{\Subreg}^{r,-}_{k}$ as the set of subregions re-classified from violating at the $k$\textsuperscript{th} iteration of the algorithm.
Assume the event $C_k$ in Definition~\ref{def::misclassE} to be true. Then, the following holds: 
\begin{eqnarray} 
P\left(v\left(\widehat{\Subreg}^{r,-}_{j,k} \cap \left(S\setminus\LO\right) \right)\ge N^{r,-}_{j,k}\epsilon^{-}_{j,k-1} | C_{k-1}\right)\ge \prod^{N^{r,-}_{j,k}}_{i=1}\left[1-\delta_{ijk}\right], k>1.\label{eqn::volsrmjk}
\end{eqnarray}
where, under event $C^{-}_{k}$, $0 \le \epsilon^{-}_{jk} \le \frac{\epsilon v\left(\widehat{\RegionMinus}_{jk}\right)}{v\left(S\right)}$, and $\delta_{ijk}=\frac{\ln\frac{1}{p_{ijk}}+\ln\frac{1}{\alpha_{jk}}+2\ln n_{jk}+1}{n_{jk}}$. By construction:
\begin{eqnarray} 
P\left(v\left(\widehat{\Subreg}^{r,-}_{k} \cap \left(S\setminus\LO\right) \right)\ge N^{r,-}_{k}\epsilon^{-}_{k-1} | C_{k}\right)\ge\prod^{k}_{j=1} \prod^{N^{r,-}_{j,k}}_{i=1}\left[1-\delta_{ijk}\right], k>1.\label{eqn::volsrmk}
\end{eqnarray}
where, under event $C^{-}_{k}$, $0 \le \epsilon^{-}_{k} \le \frac{\epsilon v\left(\widehat{\RegionMinus}_{k}\right)}{v\left(S\right)}$. 
\end{lemma}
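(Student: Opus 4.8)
The plan is to mirror the argument of Lemma~\ref{lem::misclassifyP} (and its omitted positive-reclassification counterpart Lemma~\ref{lem::misclassifyRp}), replacing the sublevel set $\LO$ by its complement $\Designspace\setminus\LO$ throughout, and flipping the direction of every inequality so as to produce a \emph{lower} bound on the recovered volume rather than an upper bound on the committed error. First I would rewrite the target event: by the definition of $\LO$ in~\eqref{eqn::L0}, the volume $v(\widehat{\Subreg}^{r,-}_{j,k}\cap(\Designspace\setminus\LO))$ is exactly the measure of the points of the reclassified-from-violating region that in truth satisfy the requirement, i.e.\ the error that the reclassification step recovers. As in the positive proof I would posit a threshold $y^{-}_{j,k}$ for which the fraction of $\widehat{\Subreg}^{r,-}_{j,k}$ on which $f\ge y^{-}_{j,k}$ equals $N^{r,-}_{j,k}\epsilon^{-}_{j,k-1}/v(\widehat{\Subreg}^{r,-}_{j,k})$ (an intermediate-value argument, justified by Assumption~\ref{ass::locsmooth}), so that the target probability reduces to $P(y^{-}_{j,k}>0\mid C_{k-1})$.

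Next I would work one subregion at a time. For each $i$ I would define the event $F_{ijk}=\{v(\widehat{\Subreg}^{r,-}_{i,j,k}\cap(\Designspace\setminus\LO))\ge \epsilon^{-}_{i,j,k-1}\}$ and invoke Lemma~\ref{lem::PACgen} with the PAC sampling measure taken to be $p_{ijk}=P(f>0\mid\mathcal{S}_{ijk})$ — the complementary kernel-implied probability, since here the classification hypothesis is ``violating'' and a disagreeing point is one with $f>0$. This yields, with probability at least $1-\alpha_{jk}$, the error-rate bound $\varepsilon^{-}_{ijk}\le\delta_{ijk}$ with $\delta_{ijk}=\big(\ln\tfrac{1}{p_{ijk}}+\ln\tfrac{1}{\alpha_{jk}}+2\ln n_{jk}+1\big)/n_{jk}$, and the identification $\varepsilon^{-}_{ijk}=v(\widehat{\Subreg}^{r,-}_{i,j,k}\cap(\Designspace\setminus\LO))/v(\Designspace)$ ties the error rate to the recovered volume. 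Conditioning on the misclassification event — on $C_{k-1}$ at the level of~\eqref{eqn::volsrmjk} and on $C_k$ for the cumulative bound~\eqref{eqn::volsrmk} — puts the iteration-$(k-1)$ bound $\epsilon^{-}_{j,k-1}$ in force and transfers this into the per-subregion statement $P(F_{ijk}\mid C_{k-1})\ge 1-\delta_{ijk}$.

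I would then assemble the bound in two stages exactly as in Lemma~\ref{lem::misclassifyP}. Since the $N^{r,-}_{j,k}$ reclassified subregions carry independent Gaussian processes, the level event $F_{jk}=\bigcap_i F_{ijk}$ satisfies $P(F_{jk}\mid C_{k-1})\ge\prod_{i=1}^{N^{r,-}_{j,k}}(1-\delta_{ijk})$, which gives~\eqref{eqn::volsrmjk}; and because sampling and classification at distinct levels $1<j\le k$ are independent, intersecting over $j$ multiplies the level bounds and yields the cumulative statement~\eqref{eqn::volsrmk}. The restriction $k>1$ enters because reclassification cannot occur at the first iteration, so the quantity $\epsilon^{-}_{j,k-1}$ is only defined for $k>1$.

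The main obstacle I anticipate is the bookkeeping that links iteration $k$ to iteration $k-1$: unlike the plain classification lemmas, the recovered volume must be compared against the \emph{previous} iteration's error bound $\epsilon^{-}_{j,k-1}$, so I must argue that each reclassified-from-violating subregion is a branching descendant of a subregion counted as violating at $k-1$, and that the error it now recovers is bounded below by the portion of $\epsilon^{-}_{j,k-1}$ it inherits. Getting the inequality direction correct here (a lower bound on recovered volume, hence the $\ge$ in the event and the conditioning on $C_{k-1}$) is the delicate point; once the inheritance is established, the PAC step and the independence-based products go through verbatim from Lemma~\ref{lem::misclassifyP}.
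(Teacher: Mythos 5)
Your proposal is correct and takes essentially the same route as the paper: the paper omits this proof entirely, stating it is identical to that of Lemma~\ref{lem::misclassifyRp} (itself deferred to the proof of Lemma~\ref{lem::misclassifyP}), and your reconstruction is precisely that adaptation --- replacing $\LO$ by $\Designspace\setminus\LO$, taking the complementary kernel-implied PAC measure $p_{ijk}=P(f>0\mid\mathcal{S}_{ijk})$ in Lemma~\ref{lem::PACgen}, flipping the inequality to a lower bound on recovered volume, and multiplying the per-subregion and per-level bounds via the same independence arguments. Your closing observation about the iteration-$(k-1)$ bookkeeping (that reclassified subregions inherit their bound from the previous iteration's classified error) is a genuine subtlety that the paper glosses over, but it does not change the structure of the argument.
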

\begin{proof}
The proof is identical to the one of Lemma~\ref{lem::misclassifyRp}, and it is therefore omitted here.
\end{proof}

We are now ready to study the cumulated error of the algorithm at each iteration considering both classification and reclassification.

\begin{theorem}\label{thm:seq_misclassifyPlusCondC}
Let $\LO$ and $\epsilon$ be the $0$-level set defined in~\eqref{eqn::L0}, and the maximum tolerated error for Part-X, respectively. Assume the event $C_k$ in Definition~\ref{def::misclassE} to be true. Then, at each iteration $k$ of the algorithm, the following bounds hold for the accumulated probability of mis-classification:
\begin{eqnarray}
P\left(v\left(\widehat{\RegionPlus}_k \cap \LO\right)\le \sum^{k}_{j>1}\left(N^{+}_{j}\epsilon^{+}_{j}-N^{r,+}_{j}\epsilon^{+}_{j-1}\right)|\bigcap_{k}C_{k}\right) \ge \prod_{j=1}^{k} \eta^{+}_j. \label{eqn::thm:seq_misclassifyPlusCondC}
\end{eqnarray}
where $\eta^{+}_j=\left(\prod^{k}_{h=1}\prod^{N^{+}_{j,h}}_{i=1}\left(1-\delta_{ijh}\right)\right)\cdot\left(\prod^{k}_{h=2}\prod^{N^{r,+}_{j,h}}_{i=1}\left(1-\delta_{ijh}\right)\right)$.
\end{theorem}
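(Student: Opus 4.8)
The plan is to express the net positively-misclassified volume at iteration $k$ as the accumulated classification error minus the accumulated reclassification recovery, and then to bound the two pieces simultaneously by combining Lemmas~\ref{lem::misclassifyP} and~\ref{lem::misclassifyRp}. First I would write the bookkeeping identity for the positive set: at each iteration $\widehat{\RegionPlus}$ grows by the newly classified positive subregions and shrinks by the subregions that are reclassified away from positive into remaining. Organizing by level $j$ of the partitioning tree and iteration $h\le k$, the total volume wrongly classified as satisfying at iteration $k$ decomposes as
\begin{eqnarray}
v\left(\widehat{\RegionPlus}_k \cap \LO\right) = \sum_{j=1}^{k}\left[\sum_{h=1}^{k} v\left(\widehat{\Subreg}^{+}_{jh}\cap\LO\right) - \sum_{h=2}^{k} v\left(\widehat{\Subreg}^{r,+}_{jh}\cap\LO\right)\right],\nonumber
\end{eqnarray}
where the first inner sum accumulates the error injected by positive classifications and the second subtracts the error recovered by positive reclassifications (which cannot occur before $h=2$).

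Next I would observe that the event in~\eqref{eqn::thm:seq_misclassifyPlusCondC} is implied by the intersection, over all levels $j$ and iterations $h$, of the single-step events of Lemmas~\ref{lem::misclassifyP} and~\ref{lem::misclassifyRp}: namely that each classification step keeps its misclassified volume below $N^{+}_{jh}\epsilon^{+}_{j}$ and each reclassification step recovers at least $N^{r,+}_{jh}\epsilon^{+}_{j-1}$. Because over-recovery and under-injection both push the net volume downward, the inequality direction is preserved, so the probability of the stated bound is at least the probability of this intersection.

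Finally I would factorize that probability. Following the independence argument already used in the proof of Lemma~\ref{lem::misclassifyP}, sampling and classification across distinct levels $j$ are independent, and within a level each subregion at each iteration carries its own independently estimated Gaussian process, so the joint probability factorizes into a product over $j$. Grouping, within each level, the classification factors $\prod_{h=1}^{k}\prod_{i=1}^{N^{+}_{j,h}}(1-\delta_{ijh})$ and the reclassification factors $\prod_{h=2}^{k}\prod_{i=1}^{N^{r,+}_{j,h}}(1-\delta_{ijh})$ yields exactly $\eta^{+}_j$, and taking the product over $j=1,\ldots,k$ gives the claimed bound $\prod_{j=1}^{k}\eta^{+}_j$.

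The main obstacle I anticipate is making the conditioning rigorous: the lemmas are each stated conditional on a single $C_k$, whereas the theorem conditions on $\bigcap_k C_k$, so I must verify that conditioning on the full intersection disturbs neither the per-step bounds nor the independence used in the factorization. A related delicate point is the net-accounting inequality itself, since the volume recovered at iteration $h$ is tied to the error committed at iteration $h-1$; I would need to confirm that subtracting the recovery bound from the classification bound correctly dominates the true net misclassified volume and that no region is double-counted across levels.
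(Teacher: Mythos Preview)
Your proposal is correct and follows essentially the same route as the paper: decompose $v(\widehat{\RegionPlus}_k\cap\LO)$ level-by-level into classification error injected minus reclassification error recovered, invoke Lemmas~\ref{lem::misclassifyP} and~\ref{lem::misclassifyRp} for the two pieces, and then factorize across levels and iterations using the independence argument from Lemma~\ref{lem::misclassifyP} to obtain $\prod_{j}\eta^{+}_j$. The two concerns you flag (conditioning on $\bigcap_k C_k$ versus a single $C_k$, and the legitimacy of subtracting the recovery lower bound from the classification upper bound) are not addressed any more rigorously in the paper's own proof than in your sketch, so you are not missing anything the paper supplies.
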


\begin{proof}
Let us first remind that for $\widehat{\RegionPlus}_k$, and $\widehat{\RegionPlus}_{jk}$, the following holds
\begin{eqnarray}
\widehat{\RegionPlus}_k = \bigcup_{k}\bigcup^{k}_{j=1}\left(\hat{\sigma}^{+}_{jk}\setminus\hat{\sigma}^{r,+}_{jk}\right), \widehat{\RegionPlus}_{jk} = \bigcup_{k}\left(\hat{\sigma}^{+}_{jk}\setminus\hat{\sigma}^{r,+}_{jk}\right).\nonumber
\end{eqnarray}
Consequently, we study
\begin{eqnarray}
P\left(v\left(\bigcup_{k}\bigcup^{k}_{j=1}\left(\hat{\sigma}^{+}_{jk}\setminus\hat{\sigma}^{r,+}_{jk}\right) \cap \LO\right)\le \epsilon|\bigcap_{k}C_{k}\right).\nonumber
\end{eqnarray}
Let us define the event $F_{jk}$, such that if $F_{jk}$ is true, then $v\left(\left(\hat{\sigma}^{+}_{jk}\setminus\hat{\sigma}^{r,+}_{jk}\right) \cap \LO\right)\le \epsilon_j$, where $\epsilon_j=\epsilon\frac{v\left(\widehat{\RegionPlus}_{jk}\right)}{v\left(\widehat{\RegionPlus}_k\right)}$. Here we are particularly interested in the event $F^{k}_{j}$, which looks at the cumulation of error \textit{up to iteration }$k$. Hence, $F^{k}_{j}$ holds if $v\left(\left(\bigcup^{k}_{h=1}\hat{\sigma}^{+}_{jh}\right) \cap \LO\right)-v\left(\left(\bigcup^{k}_{h=1}\hat{\sigma}^{r,+}_{jh} \right) \cap \LO \right)\le \epsilon_j$. From Lemma~\ref{lem::misclassifyRp}-\ref{lem::misclassifyP}, we can derive a upper bound for $v\left(\left(\bigcup^{k}_{h=1}\hat{\sigma}^{+}_{jh}\right) \cap \LO\right)$ and a lower bound for $v\left(\left(\bigcup^{k}_{h=1}\hat{\sigma}^{r,+}_{jh} \right) \cap \LO \right)$.

For the reclassified volume, given an iteration $h$, we know the following from Lemma~\ref{lem::misclassifyP}-\ref{lem::misclassifyRp}:
\begin{eqnarray} 
P\left(v\left( \hat{\sigma}^{+}_{jh} \cap \LO\right)-v\left(\widehat{\Subreg}^{r,+}_{jh} \cap \LO \right)\le N^{+}_{jh}\epsilon^{+}_{j,h}-N^{r,+}_{jh}\epsilon^{+}_{j,h-1} | C_{h}\right)\ge \prod^{N^{+}_{j,h}}_{i=1}\left(1-\delta_{ijh}\right)\prod^{N^{r,+}_{j,h}}_{i=1}\left(1-\delta_{ijh}\right), h>1.\nonumber
\end{eqnarray}
Then if we want to consider the iterations $h=1,\ldots,k$, we obtain the following, assuming the event $\bigcap^{k}_{h=1}C_{h}$ holds:
\begin{eqnarray} 
P\left(v\left(\left(\bigcup^{k}_{h=1}\hat{\sigma}^{+}_{jh}\right) \cap \LO\right)-v\left(\left(\bigcup^{k}_{h=1}\widehat{\Subreg}^{r,+}_{jh}\right) \cap \LO \right)\le N^{+}_{jh}\epsilon^{+}_{j,h}-N^{r,+}_{jh}\epsilon^{+}_{j,h-1} | \bigcap^{k}_{h=1}C_{h}\right) \ge \nonumber\\ \left(\prod^{k}_{h=1}\prod^{N^{+}_{j,h}}_{i=1}\left(1-\delta_{ijh}\right)\right)\cdot\left(\prod^{k}_{h=2}\prod^{N^{r,+}_{j,h}}_{i=1}\left(1-\delta_{ijh}\right)\right) = \eta^{+}_j. \nonumber
\end{eqnarray}
Then, considering the condition to hold at \textit{all }levels, we have:
\begin{eqnarray}
P\left(v\left(\widehat{\RegionPlus}_k \cap \LO\right)\le \sum^{k}_{j>1}\left(N^{+}_{j}\epsilon^{+}_{j}-N^{r,+}_{j}\epsilon^{+}_{j-1}\right)|\bigcap_{k}C_{k}\right) \ge \prod_{j=1}^{k} \eta^{+}_j.\nonumber
\end{eqnarray}
\end{proof}

\begin{theorem}\label{thm:seq_misclassifyMinusCondC}
Let $\LO$ and $\epsilon$ be the $0$-level set defined in~\eqref{eqn::L0}, and the maximum tolerated error for Part-X, respectively. Assume the event $C_k$ in Definition~\ref{def::misclassE} to be true. Then, at each iteration $k$ of the algorithm, the following bounds hold for the accumulated probability of mis-classification:
\begin{eqnarray}
P\left(v\left(\widehat{\RegionMinus}_k\setminus\left(\widehat{\RegionMinus}_k \cap \LO\right)\right)\le \sum^{k}_{j>1}\left(N^{-}_{j}\epsilon^{-}_{j}-N^{r,-}_{j}\epsilon^{-}_{j-1}\right)|\bigcap_{k}C_{k}\right) \ge \prod_{j=1}^{k}\eta^{-}_j.\label{eqn::thm:seq_misclassifyMinusCondC} 
\end{eqnarray}
where $\eta^{-}_j=\left(\prod^{k}_{h=1}\prod^{N^{-}_{j,h}}_{i=1}\left(1-\delta_{ijh}\right)\right)\cdot\left(\prod^{k}_{h=2}\prod^{N^{r,-}_{j,h}}_{i=1}\left(1-\delta_{ijh}\right)\right)$.
\end{theorem}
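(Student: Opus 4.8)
The plan is to mirror the argument of Theorem~\ref{thm:seq_misclassifyPlusCondC}, substituting the negative-classification bounds of Lemmas~\ref{lem::misclassifym} and~\ref{lem::misclassifyRm} for the positive-classification bounds used there. First I would record the decomposition of the reported violating set that accounts for reclassification,
\begin{eqnarray}
\widehat{\RegionMinus}_k = \bigcup_{k}\bigcup^{k}_{j=1}\left(\hat{\sigma}^{-}_{jk}\setminus\hat{\sigma}^{r,-}_{jk}\right),\qquad \widehat{\RegionMinus}_{jk} = \bigcup_{k}\left(\hat{\sigma}^{-}_{jk}\setminus\hat{\sigma}^{r,-}_{jk}\right),\nonumber
\end{eqnarray}
and then rewrite the quantity of interest using the elementary set identity $v\left(\widehat{\RegionMinus}_k\setminus\left(\widehat{\RegionMinus}_k\cap\LO\right)\right)=v\left(\widehat{\RegionMinus}_k\cap\left(S\setminus\LO\right)\right)$, so that the target probability concerns the portion of the violating estimate that actually satisfies the requirement, which is precisely the quantity controlled by the negative lemmas.

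Next, fixing a level $j$ and iteration $h>1$, I would combine the upper bound on the misclassified violating volume from Lemma~\ref{lem::misclassifym}, namely $v\left(\hat{\sigma}^{-}_{jh}\cap\left(S\setminus\LO\right)\right)\le N^{-}_{jh}\epsilon^{-}_{j,h}$, with the lower bound on the recovered volume from Lemma~\ref{lem::misclassifyRm}, namely $v\left(\hat{\sigma}^{r,-}_{jh}\cap\left(S\setminus\LO\right)\right)\ge N^{r,-}_{jh}\epsilon^{-}_{j,h-1}$, giving the net per-iteration bound
\begin{eqnarray}
P\left(v\left(\hat{\sigma}^{-}_{jh}\cap\left(S\setminus\LO\right)\right)-v\left(\hat{\sigma}^{r,-}_{jh}\cap\left(S\setminus\LO\right)\right)\le N^{-}_{jh}\epsilon^{-}_{j,h}-N^{r,-}_{jh}\epsilon^{-}_{j,h-1}\mid C_h\right)\ge \prod^{N^{-}_{j,h}}_{i=1}\left(1-\delta_{ijh}\right)\prod^{N^{r,-}_{j,h}}_{i=1}\left(1-\delta_{ijh}\right).\nonumber
\end{eqnarray}
Because the subregions within a level are disjoint, and sampling and classification are independent across iterations as invoked in Lemma~\ref{lem::misclassifyP}, I would accumulate over $h=1,\ldots,k$ by multiplying these probabilities, obtaining exactly $\eta^{-}_j$ as the level-$j$ guarantee, with the first iteration contributing no reclassification factor, matching the $h$-ranges in the definition of $\eta^{-}_j$. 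Finally, since the level contributions $v\left(\widehat{\RegionMinus}_{jk}\cap\left(S\setminus\LO\right)\right)$ are disjoint across $j$ and governed by independent Gaussian processes, I would intersect the level events over $j=1,\ldots,k$ and multiply to reach $\prod_{j=1}^{k}\eta^{-}_j$, which is the claimed bound~\eqref{eqn::thm:seq_misclassifyMinusCondC}.

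The main obstacle is the same one that is glossed in the positive case: rigorously justifying that the per-level and per-iteration events may be treated as independent, so that their probabilities multiply rather than merely admitting a union bound, and that this factorization survives conditioning on $\bigcap_k C_k$. Closely related is the bookkeeping of the reclassification index, since the recovered volume at iteration $h$ offsets error incurred at iteration $h-1$; one must verify that the telescoped sum $\sum^{k}_{j>1}\left(N^{-}_{j}\epsilon^{-}_{j}-N^{r,-}_{j}\epsilon^{-}_{j-1}\right)$ aggregates the net per-level and per-iteration bounds without double-counting, and that the $k>1$ restriction (no reclassification at the first iteration) is consistently propagated. The measure-theoretic rewrite and the two lemma invocations are otherwise routine.
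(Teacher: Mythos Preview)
Your proposal is correct and follows exactly the approach the paper takes: the paper's own proof simply states that the argument is identical to that of Theorem~\ref{thm:seq_misclassifyPlusCondC} and omits the details, so your mirroring of that proof with Lemmas~\ref{lem::misclassifym} and~\ref{lem::misclassifyRm} in place of Lemmas~\ref{lem::misclassifyP} and~\ref{lem::misclassifyRp} is precisely what is intended. The concerns you raise about independence and index bookkeeping are inherited verbatim from the positive case and are not addressed any more rigorously there.
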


\begin{proof}
The proof is identical to the one of Theorem~\ref{thm:seq_misclassifyPlusCondC}, and it is therefore omitted here.
\end{proof}

Note that $\epsilon = \sum^{k}_{j>1}\left(N^{+}_{j}\epsilon^{+}_{j}-N^{r,+}_{j}\epsilon^{+}_{j-1}\right)+\sum^{k}_{j>1}\left(N^{-}_{j}\epsilon^{-}_{j}-N^{r,-}_{j}\epsilon^{-}_{j-1}\right)$ is the maximum overall error allowed over the input space, the study of the error will be shown in the analysis of the density associated to the event $C_{h}$ in Lemma~\ref{thm::misclEDistrib}. Before the error event analysis, let us provide two results that follow from the previous theorems.

\begin{corollary}[Per iteration positive classification error]\label{cor::posVolk}
Let $\LO$ and $\epsilon$ be the $0$-level set defined in~\eqref{eqn::L0}, and the maximum tolerated error for Part-X, respectively. Assume the event $C_k$ in Definition~\ref{def::misclassE} to be true. Then, at each iteration $k$ of the algorithm, the following bounds hold for the probability of mis-classification of positive volume at iteration $k$:
\begin{eqnarray} \label{eqn::cor:seq_misclassifyPlusCondCK}
P\left(v\left(\widehat{\sigma}^{+}_k \cap \LO\right) - v\left(\widehat{\sigma}^{r+}_k \cap \LO\right)\le N^{+}_{k}\epsilon^{+}_{k}-N^{r,+}_{k}\epsilon^{+}_{k-1} |C_{k}\right) \ge \prod_{j=1}^{k}\left( \prod^{N^{+}_{j,k}}_{i=1}\left(1-\delta_{ijk}\right)\prod^{N^{r,+}_{j,k}}_{i=1}\left(1-\delta_{ijk}\right)\right)=\gamma^{+}_{k}, k>1.
\end{eqnarray}
\end{corollary}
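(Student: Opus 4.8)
The plan is to recognize that Corollary~\ref{cor::posVolk} is simply the single-iteration specialization of Theorem~\ref{thm:seq_misclassifyPlusCondC}: rather than accumulating the misclassified volume across all iterations under the event $\bigcap_k C_k$, I fix a single iteration $k$, condition only on $C_k$, and track the volume classified and reclassified at that iteration. The two building blocks are already available as the iteration-level statements of Lemma~\ref{lem::misclassifyP} and Lemma~\ref{lem::misclassifyRp}, so no new probabilistic machinery is required.

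First I would invoke the iteration-level bound~\eqref{eqn::volspk} of Lemma~\ref{lem::misclassifyP}, which gives, conditional on $C_k$, an upper bound on the volume wrongly placed into the satisfying set,
\begin{eqnarray}
P\left(v\left(\widehat{\sigma}^{+}_{k} \cap \LO\right)\le N^{+}_{k}\epsilon^{+}_{k} \mid C_k\right)\ge \prod_{j=1}^{k}\prod_{i=1}^{N^{+}_{j,k}}\left(1-\delta_{ijk}\right).\nonumber
\end{eqnarray}
Next I would invoke the companion bound~\eqref{eqn::volsrpk} of Lemma~\ref{lem::misclassifyRp} for the volume recovered by reclassification at the same iteration, namely a lower bound on the erroneously-classified volume that the reclassification step removes,
\begin{eqnarray}
P\left(v\left(\widehat{\sigma}^{r,+}_{k} \cap \LO\right)\ge N^{r,+}_{k}\epsilon^{+}_{k-1} \mid C_k\right)\ge \prod_{j=1}^{k}\prod_{i=1}^{N^{r,+}_{j,k}}\left(1-\delta_{ijk}\right).\nonumber
\end{eqnarray}

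The key combinatorial step is to observe that the target event in~\eqref{eqn::cor:seq_misclassifyPlusCondCK} contains the intersection of the two events above: whenever $v(\widehat{\sigma}^{+}_{k}\cap\LO)\le N^{+}_{k}\epsilon^{+}_{k}$ and $v(\widehat{\sigma}^{r,+}_{k}\cap\LO)\ge N^{r,+}_{k}\epsilon^{+}_{k-1}$ hold simultaneously, subtracting the second inequality from the first yields $v(\widehat{\sigma}^{+}_{k}\cap\LO)-v(\widehat{\sigma}^{r,+}_{k}\cap\LO)\le N^{+}_{k}\epsilon^{+}_{k}-N^{r,+}_{k}\epsilon^{+}_{k-1}$. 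Hence the probability of the target event is at least the probability of that intersection. Because the classified and reclassified subregions at iteration $k$ are distinct leaves of the partitioning tree, each equipped with its own independently estimated Gaussian process (Assumption~\ref{known params} together with the separable kernel used in Lemma~\ref{lem::misclassifyP}), the classification and reclassification processes are treated as independent, so the joint probability factors into the product of the two bounds above. Regrouping the two products by level $j$ gives exactly $\gamma^{+}_{k}=\prod_{j=1}^{k}\left(\prod_{i=1}^{N^{+}_{j,k}}(1-\delta_{ijk})\prod_{i=1}^{N^{r,+}_{j,k}}(1-\delta_{ijk})\right)$, completing the argument.

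I expect the main obstacle to be the rigorous justification of the independence used to multiply the two probabilities (and, within each, to factor across levels $j$ and subregions $i$). This mirrors exactly the independence invoked in the proof of Theorem~\ref{thm:seq_misclassifyPlusCondC}, so consistency with that argument is the cleanest route; if one wished to dispense with the independence assumption, a Fr\'echet--Bonferroni bound of the form $P(A\cap B)\ge P(A)+P(B)-1$ could be substituted, at the cost of a weaker constant than $\gamma^{+}_{k}$.
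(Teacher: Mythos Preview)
Your proposal is correct and mirrors the paper's approach: the paper presents the corollary as an immediate consequence of the preceding theorems (``two results that follow from the previous theorems''), and your combination of Lemma~\ref{lem::misclassifyP} and Lemma~\ref{lem::misclassifyRp} via the intersection-and-independence argument is exactly the single-iteration specialization of the step already carried out in the proof of Theorem~\ref{thm:seq_misclassifyPlusCondC}. Your discussion of the independence assumption and the Fr\'echet--Bonferroni alternative is a useful addition that the paper does not make explicit.
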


Similarly, for the negatively classified volume, we have:
\begin{corollary}[Per iteration negatively classification error]\label{cor::negVolk}
Let $\LO$ and $\epsilon$ be the $0$-level set defined in~\eqref{eqn::L0}, and the maximum tolerated error for Part-X, respectively. Assume the event $C_k$ in Definition~\ref{def::misclassE} to be true. Then, at each iteration $k$ of the algorithm, the following bounds hold for the probability of mis-classification of negative volume at iteration $k$:
\begin{eqnarray} \label{eqn::cor:seq_misclassifyMinusCondCK}
P\left(v\left(\widehat{\sigma}^{-}_k \cap \left(S\setminus\LO\right)\right) - v\left(\widehat{\sigma}^{r-}_k \cap \left(S\setminus\LO\right)\right)\le N^{-}_{k}\epsilon^{-}_{k}-N^{r,-}_{k}\epsilon^{-}_{k-1} |C_{k}\right)\ge\nonumber\\
\ge\prod_{j=1}^{k}\left(\prod^{N^{-}_{j,k}}_{i=1}\left(1-\delta_{ijk}\right)\prod^{N^{r,-}_{j,k}}_{i=1}\left(1-\delta_{ijk}\right)\right)=\gamma^{-}_{k}, k>1.\nonumber
\end{eqnarray}
\end{corollary}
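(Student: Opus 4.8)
The plan is to mirror the argument of Theorem~\ref{thm:seq_misclassifyMinusCondC}, but to restrict attention to the single iteration $k$ so that the accumulation runs over the $k$ levels of the partitioning tree only, rather than over all past iterations. First I would decompose the newly classified violating volume and the newly reclassified volume at iteration $k$ into their disjoint level-wise contributions, $\widehat{\sigma}^{-}_k = \bigcup_{j=1}^{k}\widehat{\Subreg}^{-}_{jk}$ and $\widehat{\sigma}^{r-}_k = \bigcup_{j=1}^{k}\widehat{\Subreg}^{r,-}_{jk}$. Because these unions are disjoint, the measure $v(\cdot)$ is additive across levels, so the event controlling the total net misclassified negative volume factors into a level-wise statement and the threshold $N^{-}_{k}\epsilon^{-}_{k}-N^{r,-}_{k}\epsilon^{-}_{k-1}$ splits as $\sum_{j=1}^{k}\left(N^{-}_{jk}\epsilon^{-}_{jk}-N^{r,-}_{jk}\epsilon^{-}_{j,k-1}\right)$.

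Next, at each level $j$ I would invoke the two per-level lemmas conditional on $C_k$: Lemma~\ref{lem::misclassifym} supplies the upper bound $v\!\left(\widehat{\Subreg}^{-}_{jk}\cap(S\setminus\LO)\right)\le N^{-}_{jk}\epsilon^{-}_{jk}$ with probability at least $\prod_{i=1}^{N^{-}_{jk}}(1-\delta_{ijk})$, while Lemma~\ref{lem::misclassifyRm} supplies the lower bound $v\!\left(\widehat{\Subreg}^{r,-}_{jk}\cap(S\setminus\LO)\right)\ge N^{r,-}_{jk}\epsilon^{-}_{j,k-1}$ with probability at least $\prod_{i=1}^{N^{r,-}_{jk}}(1-\delta_{ijk})$. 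Since the $N^{-}_{jk}$ newly classified subregions and the $N^{r,-}_{jk}$ newly reclassified subregions at a fixed level are sampled and tested independently (the within-level and across-level independence of the sampling and classification processes is established in the proof of Lemma~\ref{lem::misclassifyP}), I would intersect these two good events and multiply their probabilities. Pairing an upper bound on the classified misclassified volume with a lower bound on the reclassified recovered volume is exactly what is needed to upper-bound the net difference, so under the combined event the level-$j$ quantity $v\!\left(\widehat{\Subreg}^{-}_{jk}\cap(S\setminus\LO)\right)-v\!\left(\widehat{\Subreg}^{r,-}_{jk}\cap(S\setminus\LO)\right)$ is at most $N^{-}_{jk}\epsilon^{-}_{jk}-N^{r,-}_{jk}\epsilon^{-}_{j,k-1}$ with probability at least $\prod_{i=1}^{N^{-}_{jk}}(1-\delta_{ijk})\prod_{i=1}^{N^{r,-}_{jk}}(1-\delta_{ijk})$.

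Finally, I would take the intersection of these level-wise good events over $j=1,\ldots,k$ and, using across-level independence, multiply their probabilities to obtain $\gamma^{-}_{k}=\prod_{j=1}^{k}\left(\prod_{i=1}^{N^{-}_{jk}}(1-\delta_{ijk})\prod_{i=1}^{N^{r,-}_{jk}}(1-\delta_{ijk})\right)$, while additivity of $v(\cdot)$ sums the level-wise volume bounds into $N^{-}_{k}\epsilon^{-}_{k}-N^{r,-}_{k}\epsilon^{-}_{k-1}$, which is precisely the claimed statement. The main obstacle I anticipate is the careful bookkeeping of the conditioning event: the per-level reclassification bound in Lemma~\ref{lem::misclassifyRm} is stated conditional on $C_{k-1}$, whereas the corollary conditions on $C_k$, so I would need to argue that under $C_k$ the recovery bound still applies because the volume reclassified at iteration $k$ is dominated by the misclassified violating volume already present at iteration $k-1$ (this is also why $k>1$ is required, so that reclassification is well defined). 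A secondary subtlety is verifying that the subtracted reclassified term $N^{r,-}_{k}\epsilon^{-}_{k-1}$ is counted exactly once; this is guaranteed by the disjointness of the level-wise decompositions, which prevents any double-counting when the per-level bounds are summed.
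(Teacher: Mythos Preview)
Your proposal is correct and follows essentially the same route the paper implicitly takes: the paper presents this corollary without proof, stating only that it and its positive counterpart ``follow from the previous theorems,'' and your argument is precisely the single-iteration specialization of the Theorem~\ref{thm:seq_misclassifyMinusCondC} machinery built from Lemmas~\ref{lem::misclassifym} and~\ref{lem::misclassifyRm} together with the level-wise independence established in the proof of Lemma~\ref{lem::misclassifyP}. The conditioning subtlety you flag ($C_{k-1}$ versus $C_k$ in Lemma~\ref{lem::misclassifyRm}) is a genuine loose end in the paper's own statements rather than a defect in your plan.
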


\begin{lemma}\label{thm::misclEDistrib} 
Let $C_{k}=C^{+}_{k} \cap C^{-}_{k}$ be the event that, at any iteration $k > 0$ of Part-X, the volume of the incorrectly classified subregions is smaller than $\epsilon$. Let $n$ be the number of function evaluations in each, remaining, subregion up to iteration $k$. Then, the following bound holds for the probability of the event $\cap^{k}_{j=1}C_{j}$
\begin{eqnarray}
P\left(\bigcap^{k}_{j=1} \left\lbrace C^{+}_{j} \cap C^{-}_{j}\right\rbrace \right)\ge \prod^{k}_{h>1}\gamma^{+}_{h}\cdot\prod^{k}_{h>1}\gamma^{-}_{h}.\label{lem::misclEDistrib}
\end{eqnarray} 
\end{lemma}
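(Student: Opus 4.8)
The plan is to decompose the joint event along two orthogonal structures---the sign of the classification and the iteration index---and then feed each piece into the per-iteration guarantees already established in Corollaries~\ref{cor::posVolk} and~\ref{cor::negVolk}. First I would split
\[
\bigcap_{j=1}^{k}\left(C_j^+\cap C_j^-\right)=\left(\bigcap_{j=1}^{k}C_j^+\right)\cap\left(\bigcap_{j=1}^{k}C_j^-\right).
\]
The positive events $C_j^+$ constrain only the volume $v(\misRegPlus_j\cap\LO)$ accumulated in subregions classified as satisfying, whereas the negative events $C_j^-$ constrain $v(\misRegMinus_j)-v(\misRegMinus_j\cap\LO)$ in subregions classified as violating. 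Since at any iteration a given subregion belongs to $\misRegPlus$ or to $\misRegMinus$ but never both, and each subregion carries its own independently sampled Gaussian process, the positive and negative error processes act on disjoint collections of subregions. Invoking the same subregion-level independence used throughout Section~\ref{sec::main_result}, I would factor
\[
P\left(\bigcap_{j=1}^{k}C_j\right)=P\left(\bigcap_{j=1}^{k}C_j^+\right)\cdot P\left(\bigcap_{j=1}^{k}C_j^-\right).
\]

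Next I would treat each factor by a chain-rule / telescoping argument over iterations. Focusing on the positive factor, the accumulated event $\bigcap_{j\le k}C_j^+$ is driven by the per-iteration increments $v(\widehat{\sigma}^{+}_h\cap\LO)-v(\widehat{\sigma}^{r+}_h\cap\LO)$, whose tolerance bound is exactly the content of Corollary~\ref{cor::posVolk}, delivering a factor $\gamma^+_h$ for each $h>1$ (the product starting at $h>1$ because reclassification cannot occur at the first iteration, so that step is a base case). Using the independence of sampling and classification across distinct levels $j$ and iterations $h$---the same property exploited in the proof of Lemma~\ref{lem::misclassifyP}, where $v(\misRegPlus_k\cap\LO)$ is written as a union of disjoint, independently generated subregion volumes---I would conclude
\[
P\left(\bigcap_{j=1}^{k}C_j^+\right)\ge\prod^{k}_{h>1}\gamma^+_h,
\]
and by the identical argument applied to Corollary~\ref{cor::negVolk}, $P\left(\bigcap_{j=1}^{k}C_j^-\right)\ge\prod^{k}_{h>1}\gamma^-_h$. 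Multiplying the two bounds then yields the claim.

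The main obstacle I anticipate is not the algebra but the probabilistic bookkeeping of the conditioning. Corollaries~\ref{cor::posVolk}--\ref{cor::negVolk} state their guarantees \emph{conditionally on} $C_k$, whereas the chain rule requires the increment at iteration $h$ to retain probability at least $\gamma^+_h$ after conditioning on the entire prior history $\bigcap_{i<h}C_i$. Making this rigorous requires verifying that conditioning on the earlier accumulated-error events does not degrade the current-iteration guarantee, which is precisely where the across-iteration independence structure (disjoint subregions, independent per-subregion Gaussian processes) must be invoked carefully rather than merely the marginal statements of the corollaries. A secondary point to state explicitly is that the factorization between the positive and negative families is licensed by the same subregion-level independence assumption, so that the product in the claim---as opposed to a weaker union-bound surrogate---is justified; otherwise one would have to fall back on a Fréchet-type inequality and lose the clean product form.
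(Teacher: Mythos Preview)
Your proposal is correct and uses the same building blocks as the paper---the chain rule over iterations together with Corollaries~\ref{cor::posVolk}--\ref{cor::negVolk}---but you organize the decomposition in the opposite order. You first split globally by sign, asserting $P\bigl(\bigcap_j C_j^+\cap\bigcap_j C_j^-\bigr)=P\bigl(\bigcap_j C_j^+\bigr)\,P\bigl(\bigcap_j C_j^-\bigr)$, and then telescope each factor over iterations. The paper instead runs an explicit induction on $k$ for the \emph{joint} event $\bigcap_{h\le k}(C_h^+\cap C_h^-)$: at the inductive step it writes $P(\bigcap_{h\le j+1}C_h)=P(C_{j+1}^+\cap C_{j+1}^-\mid\bigcap_{h\le j}C_h)\,P(\bigcap_{h\le j}C_h)$, bounds the conditional by $P(C_{j+1}^+\mid\cdot)\,P(C_{j+1}^-\mid\cdot)$, and then invokes the recursive identity $\widehat{\RegionPlus}_{j+1}=\widehat{\RegionPlus}_{j}\cup\hat{\Subreg}^{+}_{j}\setminus\hat{\Subreg}^{r+}_{j}$ (and its negative analogue) to feed directly into the corollaries, picking up exactly one factor $\gamma^+_{j+1}\gamma^-_{j+1}$ per step. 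Your global-factorization route demands a stronger independence statement (the two entire sign families decouple), while the paper only needs the per-step factorization of the conditional; conversely, your route makes the appearance of the separate products $\prod_h\gamma^+_h$ and $\prod_h\gamma^-_h$ more transparent. The conditioning concern you flag in your last paragraph is exactly the place where the paper's inductive framing is cleaner: because the corollaries are already stated conditionally on $C_k$, the induction hypothesis supplies precisely the conditioning event needed at step $j{+}1$, so no extra ``history does not degrade the bound'' argument is required.
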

\begin{proof}
By definition we have:
\begin{eqnarray}
P\left(C_{k}\right) = & P\left(\left\lbrace v\left(\widehat{\RegionPlus}_k\cap \LO\right)\le \frac{\epsilon v\left(\widehat{\RegionPlus}_k\right)}{v\left(S\right)}\right\rbrace\bigcap\left\lbrace v\left(\widehat{\RegionMinus}_k \cap \LO\right) \ge v\left(\widehat{\RegionMinus}_k\right)-\frac{\epsilon v(\widehat{\RegionMinus}_k)}{v\left(S\right)}\right\rbrace\right)\nonumber\\
= & P\left(C^{+}_k\cap C^{-}_{k}\right).\label{eqn::eventCkiter}
\end{eqnarray}
Now let's study the probability that the event in~\eqref{eqn::eventCkiter} holds \textit{at each iteration}. Now, we work toward the estimation of the probability that the estimated $0$-level set captures the true level set, that is:
\begin{eqnarray}
P\left(\bigcap^{k}_{h=1}\left\lbrace v\left(\widehat{\RegionPlus}_h\cap \LO\right) + v\left(\widehat{\RegionMinus}_h \cap \left(S\setminus\LO\right)\right) \right\rbrace \le \epsilon \right)
\ge P\left(\bigcap^{k}_{h=1}\left\lbrace C^{+}_h\cap C^{-}_{h}\right\rbrace\right).\nonumber
\end{eqnarray}
We show that:
\begin{eqnarray}
P\left(\bigcap^{k}_{h=1}\left\lbrace C^{+}_h\cap C^{-}_{h}\right\rbrace\right)\ge \prod^{k}_{h>1}\gamma^{+}_{h}\prod^{k}_{h>1}\gamma^{-}_{h}.\label{eqn::toshow}
\end{eqnarray}
The result in~\eqref{eqn::toshow} holds for the first iteration since the classified sets are empty. Let us now assume that~\eqref{eqn::toshow} holds for $k=j$, i.e.:
\begin{eqnarray}
P\left(\bigcap^{j}_{h=1}\left\lbrace C^{+}_h\cap C^{-}_{h}\right\rbrace\right) = \prod^{j}_{h>1}\gamma^{+}_{h}\prod^{j}_{h>1}\gamma^{-}_{h}.\label{eqn::holdsforj}
\end{eqnarray}
Then, for $k=j+1$, we have:
\begin{eqnarray}
P\left(\bigcap^{j+1}_{h=1}\left\lbrace C^{+}_h\cap C^{-}_{h}\right\rbrace\right) = P\left(\left\lbrace C^{+}_{j+1}\cap C^{-}_{j+1}\right\rbrace|\bigcap^{j}_{h=1}\left\lbrace C^{+}_{h}\cap C^{-}_{h}\right\rbrace\right)\cdot P\left(\bigcap^{j}_{h=1}\left\lbrace C^{+}_h\cap C^{-}_{h}\right\rbrace\right)\ge\nonumber\\
\ge P\left(C^{+}_{j+1}|\bigcap^{j}_{h=1}\left\lbrace C^{+}_{h}\cap C^{-}_{h}\right\rbrace\right)\cdot P\left(C^{+}_{j+1}|\bigcap^{j}_{h=1}\left\lbrace C^{+}_{h}\cap C^{-}_{h}\right\rbrace\right)\cdot P\left(\bigcap^{j}_{h=1}\left\lbrace C^{+}_h\cap C^{-}_{h}\right\rbrace\right).\label{eqn::partResulLem8}
\end{eqnarray}
Consider the Corollary~\ref{cor::posVolk}-\ref{cor::negVolk} that bound the error for the classified volume at a specific iteration. In particular, since $\widehat{\RegionPlus}_{j+1}=\widehat{\RegionPlus}_{j}\cup\hat{\Subreg}^{+}_{j}\setminus \hat{\Subreg}^{r+}_{j},\widehat{\RegionMinus}_{j+1}=\widehat{\RegionMinus}_{j}\cup\hat{\Subreg}^{-}_{j}\setminus \hat{\Subreg}^{r-}_{j}$, we have:
\begin{eqnarray}
\eqref{eqn::partResulLem8}\ge P\left(v\left(\hat{\Subreg}^{+}_{j}\cap\LO\right)-v\left(\hat{\Subreg}^{r+}_{j}\cap \LO\right)\le\Delta^{+,\epsilon}_j|\bigcap^{j}_{h=1}\left\lbrace C^{+}_{h}\cap C^{-}_{h}\right\rbrace\right)\cdot\nonumber\\
\cdot P\left(v\left(\hat{\Subreg}^{-}_{j}\cap\left(S\setminus\LO\right)\right)-v\left(\hat{\Subreg}^{r-}_{j} \cap\left(S\setminus\LO\right)\right)\le\Delta^{-,\epsilon}_j|\bigcap^{j}_{h=1}\left\lbrace C^{+}_{h}\cap C^{-}_{h}\right\rbrace\right)\cdot P\left(\bigcap^{j}_{h=1}\left\lbrace C^{+}_h\cap C^{-}_{h}\right\rbrace\right).\label{eqn::partResul2Lem8}
\end{eqnarray}
where $\Delta^{+,\epsilon}_j = N^{+}_{j}\epsilon^{+}_{j}-N^{r+}_{j}\epsilon^{+}_{j-1}$, and $\Delta^{-,\epsilon}_j = N^{-}_{j}\epsilon^{-}_{j}-N^{r-}_{j}\epsilon^{-}_{j-1}$. Then, considering Corollary~\ref{cor::posVolk}-\ref{cor::negVolk}, and the assumption~\eqref{eqn::holdsforj}, we have:
\begin{eqnarray}
\eqref{eqn::partResul2Lem8}\ge \gamma^{+}_{j+1}\cdot \gamma^{-}_{j+1}\cdot \prod^{j}_{h>1}\gamma^{+}_{h}\prod^{j}_{h>1}\gamma^{-}_{h}=\prod^{j+1}_{h>1}\gamma^{+}_{h}\prod^{j+1}_{h>1}\gamma^{-}_{h}.\nonumber
\end{eqnarray}
Thus proving the claim.
\end{proof}

\begin{theorem}\label{thm:seq_misclassifyPlus}
Let $\LO$ and $\epsilon$ be the $0$-level set defined in~\eqref{eqn::L0}, and the maximum tolerated error for Part-X, respectively. Then, at each iteration $k$ of the algorithm, the following bounds hold for the accumulated probability of mis-classification:
\begin{eqnarray} \label{eqn::thm:seq_misclassifyPlus}
P\left(v\left(\widehat{\RegionPlus}_k \cap \LO\right)\le \epsilon^{+}\right) \ge \prod^{k}_{h>1}\gamma^{-}_{h}\cdot\prod^{k}_{h>1}\left(\gamma^{+}_{h}\right)^{2}.\label{thm::overallErrP}
\end{eqnarray}
\end{theorem}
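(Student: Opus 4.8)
The plan is to remove the conditioning on $\bigcap_{h=1}^{k} C_h$ that appears in Theorem~\ref{thm:seq_misclassifyPlusCondC} by paying the price of $P\!\left(\bigcap_{h=1}^{k} C_h\right)$, which Lemma~\ref{thm::misclEDistrib} already lower-bounds. Writing $A_k = \{v(\widehat{\RegionPlus}_k \cap \LO)\le \epsilon^{+}\}$ and identifying $\epsilon^{+}=\sum_{j>1}^{k}\left(N^{+}_{j}\epsilon^{+}_{j}-N^{r,+}_{j}\epsilon^{+}_{j-1}\right)$ as the positive portion of the total tolerated error (as in the note following Theorem~\ref{thm:seq_misclassifyMinusCondC}), the elementary inequality $P(A_k)\ge P\!\left(A_k \cap \bigcap_{h=1}^{k} C_h\right)=P\!\left(A_k \mid \bigcap_{h=1}^{k} C_h\right)P\!\left(\bigcap_{h=1}^{k} C_h\right)$ reduces the statement to two factors that have already been bounded. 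No independence between $A_k$ and $\bigcap_{h=1}^{k} C_h$ is needed, since this inequality holds for arbitrary events.

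First I would invoke Theorem~\ref{thm:seq_misclassifyPlusCondC}, which gives $P\!\left(A_k \mid \bigcap_{h=1}^{k} C_h\right)\ge \prod_{j=1}^{k}\eta^{+}_j$. Then I would apply Lemma~\ref{thm::misclEDistrib} to obtain $P\!\left(\bigcap_{h=1}^{k} C_h\right)\ge \prod_{h>1}^{k}\gamma^{+}_h\cdot\prod_{h>1}^{k}\gamma^{-}_h$. Multiplying the two lower bounds yields $P(A_k)\ge \left(\prod_{j=1}^{k}\eta^{+}_j\right)\cdot\prod_{h>1}^{k}\gamma^{+}_h\cdot\prod_{h>1}^{k}\gamma^{-}_h$, so the only remaining work is to re-express $\left(\prod_{j=1}^{k}\eta^{+}_j\right)\cdot\prod_{h>1}^{k}\gamma^{+}_h$ as $\left(\prod_{h>1}^{k}\gamma^{+}_h\right)^{2}$, which produces the squared $\gamma^{+}$ factor in the claimed bound.

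The key algebraic step is to interchange the order of the level product over $j$ and the iteration product over $h$ inside $\prod_{j=1}^{k}\eta^{+}_j$ and compare the result with $\prod_{h>1}^{k}\gamma^{+}_h$. After expanding the definitions of $\eta^{+}_j$ and $\gamma^{+}_h$ and using that $N^{+}_{j,h}=N^{r,+}_{j,h}=0$ whenever $j>h$ (a level-$j$ leaf cannot exist before iteration $j$), both products collapse to $\prod_{h}\prod_{j\le h}\prod_i(1-\delta_{ijh})$ over their classification and reclassification factors. The reclassification factors coincide exactly, as both range over $h=2,\ldots,k$; the only discrepancy is that the classification factor in $\eta^{+}_j$ ranges over $h=1,\ldots,k$ whereas in $\gamma^{+}_h$ it ranges over $h=2,\ldots,k$. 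The extra $h=1$ block equals $\prod_{i=1}^{N^{+}_{1,1}}(1-\delta_{i11})$, and this is $1$ because no subregion is classified at the first iteration (the classified sets are empty at $k=1$, exactly as observed in the proof of Lemma~\ref{thm::misclEDistrib}). Hence $\prod_{j=1}^{k}\eta^{+}_j=\prod_{h>1}^{k}\gamma^{+}_h$, and the combined bound collapses to the stated $\prod_{h>1}^{k}\gamma^{-}_h\cdot\prod_{h>1}^{k}\left(\gamma^{+}_h\right)^{2}$.

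I expect the main obstacle to be precisely this index bookkeeping: keeping the level index $j$, the iteration index $h$, and the within-region index $i$ straight while swapping product orders, and carefully justifying that the $h=1$ classification block is vacuous (so that the equality, rather than a strict inequality in the wrong direction, holds). The probabilistic content is routine once the $P(A)\ge P(A\mid B)P(B)$ decomposition is set up; the delicate point is that the bound is tight only under the empty-first-iteration convention, so I would state that convention explicitly before carrying out the product manipulation.
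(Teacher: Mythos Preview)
Your proposal is correct and is essentially what the paper intends: the paper's own proof is the single line ``The result follows from Lemma~\ref{thm::misclEDistrib},'' and your argument supplies exactly the details that line suppresses. Your decomposition $P(A_k)\ge P(A_k\mid \bigcap_h C_h)\,P(\bigcap_h C_h)$, followed by Theorem~\ref{thm:seq_misclassifyPlusCondC} for the first factor and Lemma~\ref{thm::misclEDistrib} for the second, is the natural route; the index manipulation showing $\prod_{j=1}^{k}\eta^{+}_j=\prod_{h>1}^{k}\gamma^{+}_h$ (using $N^{+}_{j,h}=N^{r,+}_{j,h}=0$ for $j>h$ and the empty-first-iteration convention) is precisely the missing bookkeeping that turns the product of the two bounds into the stated $\prod_{h>1}^{k}\gamma^{-}_h\cdot\prod_{h>1}^{k}(\gamma^{+}_h)^{2}$.
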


\begin{proof}
The result follows from Lemma~\ref{thm::misclEDistrib}.
\end{proof}
Similarly, for the negatively classified volume:
\begin{theorem}\label{thm:seq_misclassifyMinus}
Let $\LO$ and $\epsilon$ be the $0$-level set defined in~\eqref{eqn::L0}, and the maximum tolerated error for Part-X, respectively. Assume the event $C_k$ in Definition~\ref{def::misclassE} to be true. Then, at each iteration $k$ of the algorithm, the following bounds hold for the accumulated probability of mis-classification:
\begin{eqnarray} \label{eqn::thm:seq_misclassifyMinus}
P\left(v\left(\widehat{\RegionMinus}_k\right)-v\left(\widehat{\RegionMinus}_k \cap \LO\right)\le \epsilon^{-}\right) \ge \prod^{k}_{h>1}\gamma^{+}_{h}\cdot\prod^{k}_{h>1}\left(\gamma^{-}_{h}\right)^{2}.\label{thm::overallErrM}
\end{eqnarray}
\end{theorem}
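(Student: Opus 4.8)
The plan is to obtain the stated unconditional bound by peeling off the conditioning event $\bigcap_{h=1}^{k} C_h$ and combining the conditional estimate of Theorem~\ref{thm:seq_misclassifyMinusCondC} with the lower bound on $P\left(\bigcap_{h=1}^{k} C_h\right)$ supplied by Lemma~\ref{thm::misclEDistrib}. This mirrors the companion argument for $\widehat{\RegionPlus}_k$, and it explains structurally why the stated bound carries a \emph{squared} $\gamma^{-}$ factor together with a single $\gamma^{+}$ factor: one power of $\gamma^{-}$ comes from the conditional misclassification estimate, while the remaining $\gamma^{+}\cdot\gamma^{-}$ pair is contributed by the probability of the conditioning event itself.

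First I would record that the quantity controlled by the theorem is precisely the negative misclassified volume. Since $\widehat{\RegionMinus}_k \cap \LO \subseteq \widehat{\RegionMinus}_k$, we have $v\left(\widehat{\RegionMinus}_k\right) - v\left(\widehat{\RegionMinus}_k \cap \LO\right) = v\left(\widehat{\RegionMinus}_k \cap \left(S \setminus \LO\right)\right) = \epsilon^{-}_k$ in the notation of Definition~\ref{def::epsm}, i.e., the volume of points classified as violating that in fact satisfy the requirement. Writing $A_k = \left\lbrace v\left(\widehat{\RegionMinus}_k\right) - v\left(\widehat{\RegionMinus}_k \cap \LO\right) \le \epsilon^{-}\right\rbrace$ and $B_k = \bigcap_{h=1}^{k} C_h$, the elementary inequality $P(A_k) \ge P(A_k \cap B_k) = P(A_k \mid B_k)\,P(B_k)$ reduces the claim to bounding the two factors separately.

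Next I would invoke the two results already established. For the conditional factor, Theorem~\ref{thm:seq_misclassifyMinusCondC} gives, under the identification $\epsilon^{-} = \sum^{k}_{j>1}\left(N^{-}_{j}\epsilon^{-}_{j}-N^{r,-}_{j}\epsilon^{-}_{j-1}\right)$ (as recorded in the remark following that theorem), that $P(A_k \mid B_k) \ge \prod_{j=1}^{k} \eta^{-}_j$. For the conditioning event, Lemma~\ref{thm::misclEDistrib} gives $P(B_k) \ge \prod^{k}_{h>1}\gamma^{+}_{h}\cdot\prod^{k}_{h>1}\gamma^{-}_{h}$. Multiplying the two bounds and using the bookkeeping identity $\prod_{j=1}^{k} \eta^{-}_j = \prod^{k}_{h>1}\gamma^{-}_{h}$ then yields $P(A_k) \ge \prod^{k}_{h>1}\gamma^{+}_{h}\cdot\left(\prod^{k}_{h>1}\gamma^{-}_{h}\right)^{2}$, which is exactly \eqref{eqn::thm:seq_misclassifyMinus}.

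The main obstacle is this bookkeeping identity $\prod_{j=1}^{k} \eta^{-}_j = \prod^{k}_{h>1}\gamma^{-}_{h}$: the factor $\eta^{-}_j$ aggregates the classification and reclassification terms of a fixed level $j$ across all iterations, whereas $\gamma^{-}_h$ aggregates all levels at a fixed iteration $h$, so I must verify that reindexing over the triangular set $\{(j,h):1\le j\le h\le k\}$ collects exactly the same factors $(1-\delta_{ijh})$, with the convention that a level $j$ contributes only trivial factors at iterations $h<j$ and that the first iteration $h=1$ (where reclassification is absent and the classified sets are initialized empty) is correctly excluded. Beyond this reindexing, the only remaining subtlety is confirming that the tolerated error $\epsilon^{-}$ of the statement is consistent with the per-level, per-iteration budgets $N^{-}_{j}\epsilon^{-}_{j}-N^{r,-}_{j}\epsilon^{-}_{j-1}$ of Theorem~\ref{thm:seq_misclassifyMinusCondC}, so that $A_k$ is genuinely the event whose conditional probability that theorem controls.
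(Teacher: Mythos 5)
Your proposal is correct and follows essentially the same route as the paper: the paper's own proof is the single remark that the result ``follows from Lemma~\ref{thm::misclEDistrib},'' and the implicit argument is precisely your decomposition $P(A_k)\ge P(A_k\mid B_k)\,P(B_k)$, combining the conditional bound of Theorem~\ref{thm:seq_misclassifyMinusCondC} with the bound on $P\left(\bigcap_{h=1}^{k}C_h\right)$ from Lemma~\ref{thm::misclEDistrib} to produce the single $\gamma^{+}$ factor and the squared $\gamma^{-}$ factor. Your explicit treatment of the reindexing identity $\prod_{j=1}^{k}\eta^{-}_{j}=\prod_{h>1}^{k}\gamma^{-}_{h}$ (with trivial factors for levels $j>h$ and the empty first iteration) is in fact more careful than the paper, which leaves all of this unstated.
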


\begin{proof}
The result follows from Lemma~\ref{thm::misclEDistrib}.
\end{proof}

\section{Numerical Results}\label{sec::results}

We have implemented Part-X as a Python library that can be used as a stand-alone level-set approximation function, or as a module within our SBTG tool.
In Section \ref{sec::empirical}, we first present results on using Part-X on benchmark functions from the optimization literature.   
Two of these functions are 2 dimensional, which is convenient for plotting the zero-level sets.
Then, in Section \ref{sec::falsification}, we apply our SBTG tool on the F-16 benchmark problem~\cite{heidlauf2018verification}.

Part-X produces among the outputs the \textit{falsification volume}, which we define below and further characterize in the analysis.

\begin{definition}[Falsification Volume]
Let $\left\lbrace\Subreg^{\gamma}_{ijk}\right\rbrace$ be the subregions produced by the Part-X algorithm up to the $k$\textsuperscript{th} iteration each with associated surrogate model $\gpPred\left(\mbf{x}\right), \mbf{x}\in\Subreg^{\gamma}_{ijk}$, then, the falsification volume calculated with $\Significance$-quantile results:
\begin{eqnarray}
V^{\Significance}|\left(\mathbf{x},\mathbf{y}\right) = \int_{\bigcup\Subreg^{\gamma}_{ijk}}I_{\underbar{q}_{\Significance}<0}.
\end{eqnarray}
\end{definition}

\subsection{Non Linear Non Convex Optimization Examples}\label{sec::empirical}
In this section, we test Part-X for level set estimation with several non-linear non-convex test functions: the Rosenbrock, Goldstein, and Himmelblau's functions, defined as:
\begin{itemize}
    \item Rosenbrock's function $\left(-1 \le x_i \le 1, i = 1,\ldots d\right)$
    \begin{eqnarray}
    f\left(\mathbf{x}\right) =\sum_{i=1}^{d-1}\left\lbrace 100\left(x_{i+1} - x_i^{2}\right)^2 + \left(1-x_i\right)^2 \right\rbrace - 20\nonumber
    \end{eqnarray}
    this function has symmetric level set in region $\Designspace = \left[-1,1\right]\times\left[-1,1\right]$.
    \item Goldstein-Price function $\left(-1 \le x,y \le 1 \right)$:
    \begin{dmath*}
    f\left(x,y\right) = \left\lbrace 1 + \left(x+y+1\right)^2 \left(19-14x+3x^2-14y+6xy+3y^2\right) \right\rbrace
    \left\lbrace 30 +\left(2x - 3y \right)^2\left(18-32x+12x^2+48y-36xy+27y^2\right) \right\rbrace - 50.
    \end{dmath*}
    The Goldstein function has smaller volume $\LO$ compared to the Rosenbrock and the domain is $\Designspace = \left[-1,1\right]\times\left[-1,1\right]$,
    \item Himmelblau's function $\left(-5 \le x,y \le 5 \right)$
    \begin{eqnarray}
    f\left(x,y\right)= \left(x^2+y-11\right)^2 +\left(x+y^2 -7\right)^2 -40.\nonumber
    \end{eqnarray}
     The level set $\LO$ for the Himmelblau's is disconnected and the function support is $\Designspace = \left[-5,5\right]\times\left[-5,5\right]$. 
\end{itemize} 

Part-X was ran with initialization budget $n_0 = 10$, unclassified subregions, per-subregion, budget $n_{\mbox{\tiny{BO}}} = 10$, classified subregions budget $n_c=100$, maximum budget $T=5000$, number of Monte Carlo iterations $R=10$, number of evaluations per iterations $M=100$, number of cuts $B = 2$,
 classification percentile $\Significance = 0.05$. Also, we used $\minVol = 0.001$ to define unbranchable dimensions. 

Figs.~\ref{fig::rosenS1}-\ref{fig::rosenS3} show the partitioning obtained from Part-X from a randomly chosen macroreplication. The black solid lines identify the contour of the true level set $\mathcal{L}_0$. The red subregions are classified by Part-X as violating, while the green subregions are classified as satisfying the requirements. We can see that the algorithm has quite good performance in correctly identifying the relevant region. We observe that, as the grid size increases, from Fig.~\ref{fig::rosenS1} to~\ref{fig::rosenS3}, no clear pattern is observed from the results. This is a positive outcome, since it shows the Robustness of Part-X with respect to the size of the grid adopted.
Observing the pattern formed by the sampled locations, we can see how samples concentrate around the border of the level set, as expected, while areas away from the border receive much less effort. Similar observations can be gathered from Figs.~\ref{fig::goldS1}-\ref{fig::goldS3}.

\begin{figure}[htbp]
\begin{center}
    \subfigure[Grid Size $R\times M=10\times 100$. \label{fig::rosenS1}]{
    \includegraphics[width=0.28\textwidth]{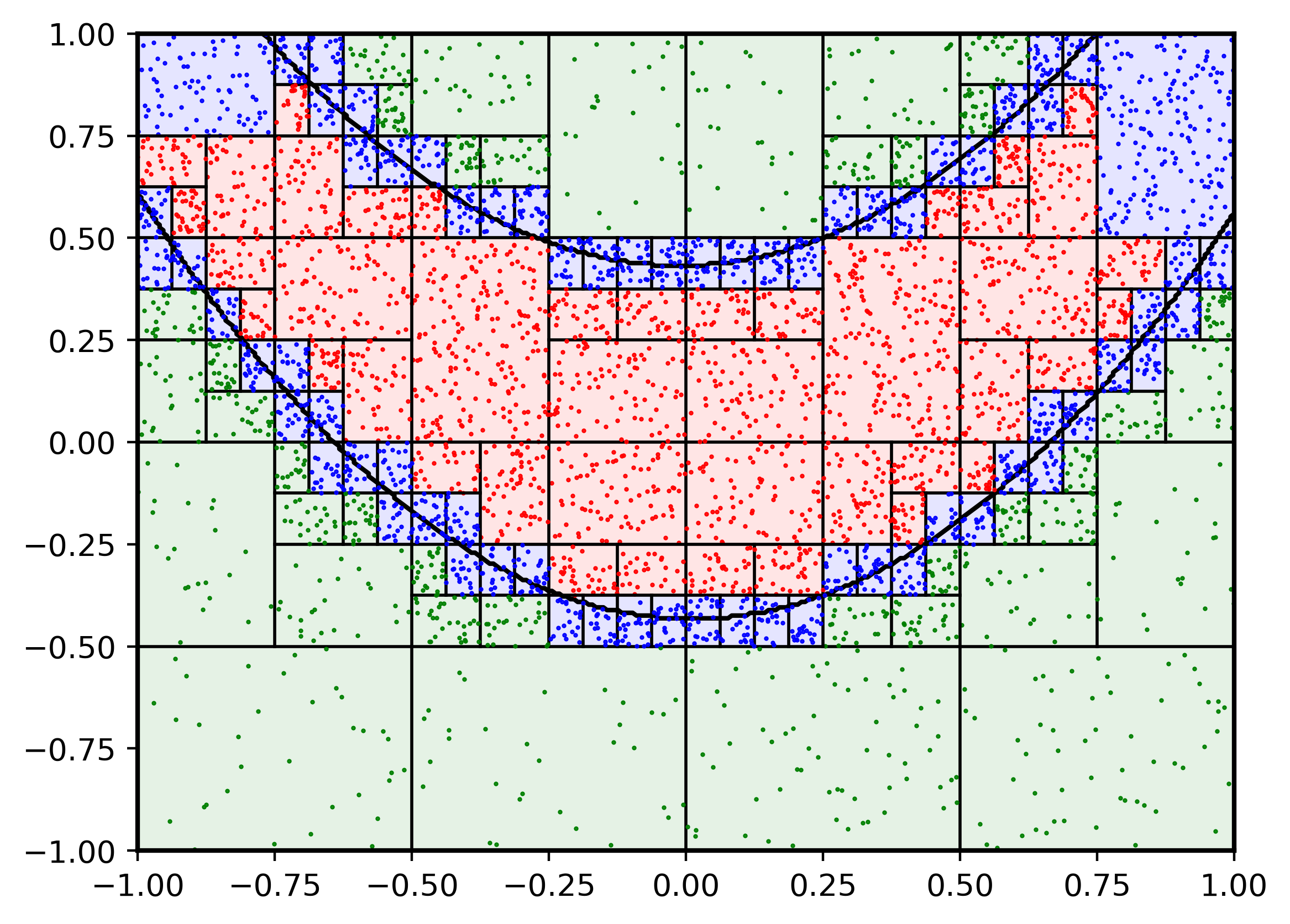}}\quad
    \subfigure[Grid Size $R\times M=10\times 500$. \label{fig::rosenS2}]{
    \includegraphics[width=0.28\textwidth]{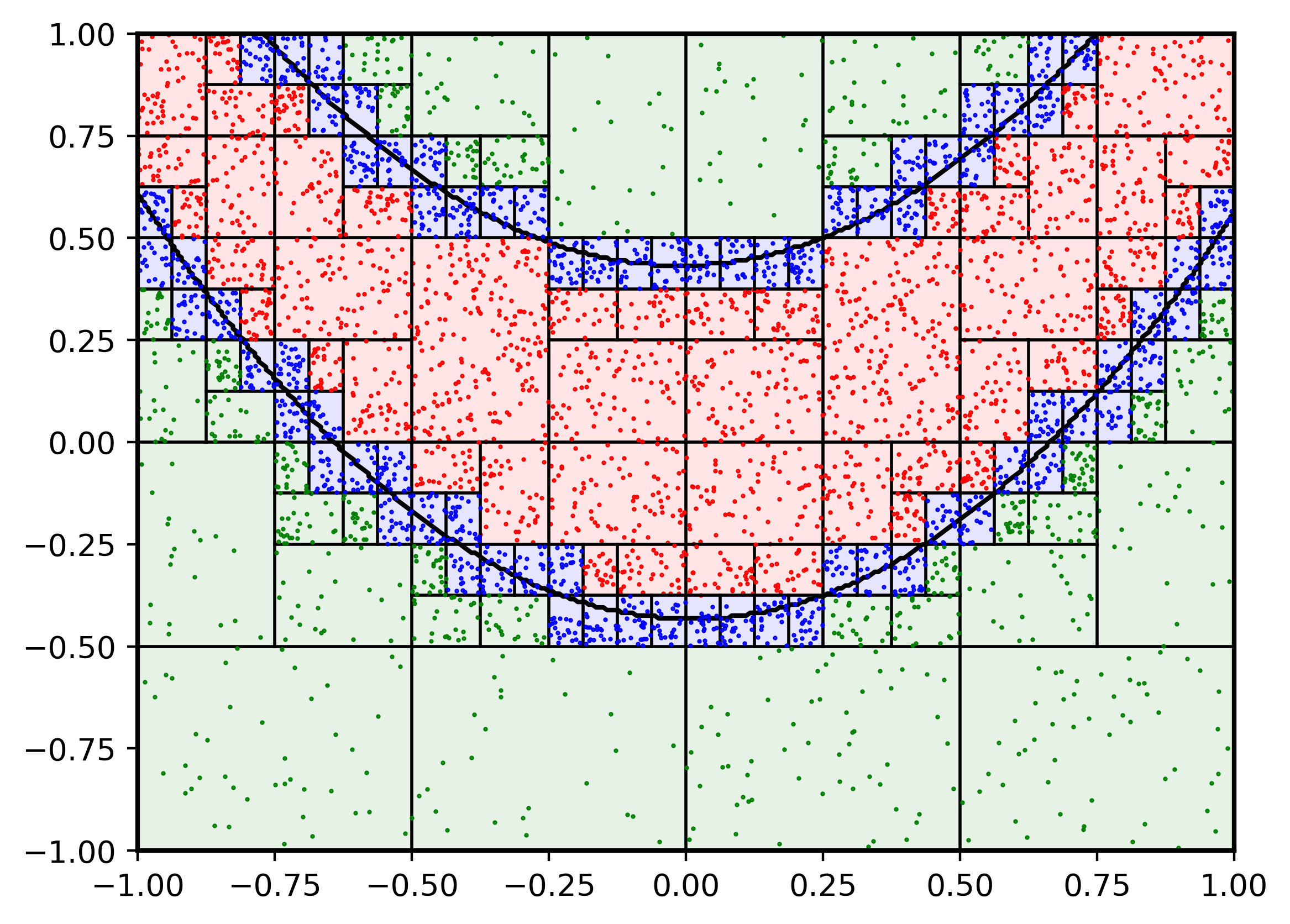}}\quad
    \subfigure[Grid Size $R\times M=20\times 500$. \label{fig::rosenS3}]{
    \includegraphics[width=0.28\textwidth]{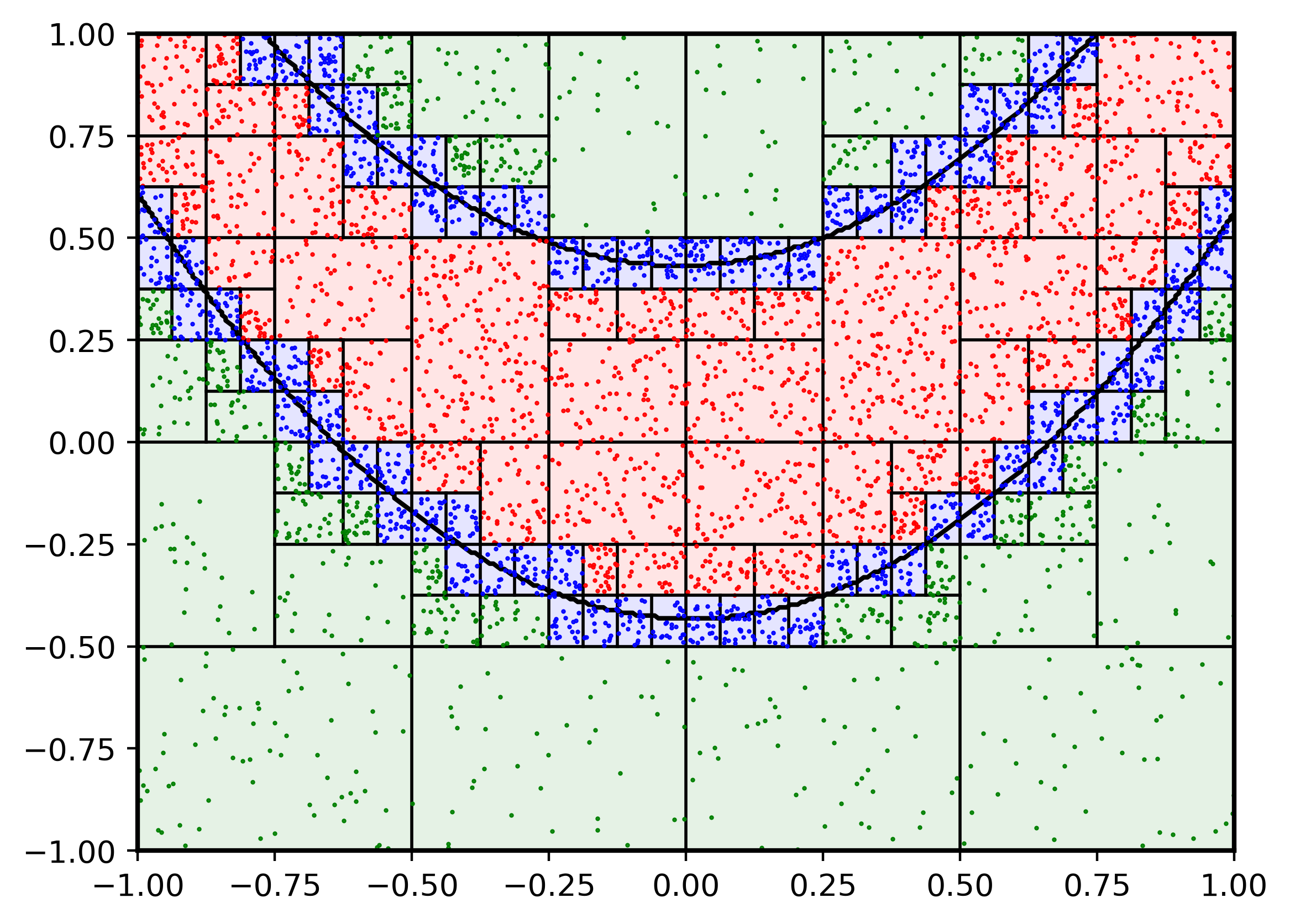}}
    \caption{Sampling pattern for the Rosenbrock 0-level set identification.}\label{fig::samplelocationRosen}
\end{center}
\end{figure}
\begin{figure}[htbp]
\centering
\subfigure[Grid Size $R\times M=10\times 100$. \label{fig::goldS1}]{
\includegraphics[width=0.28\textwidth]{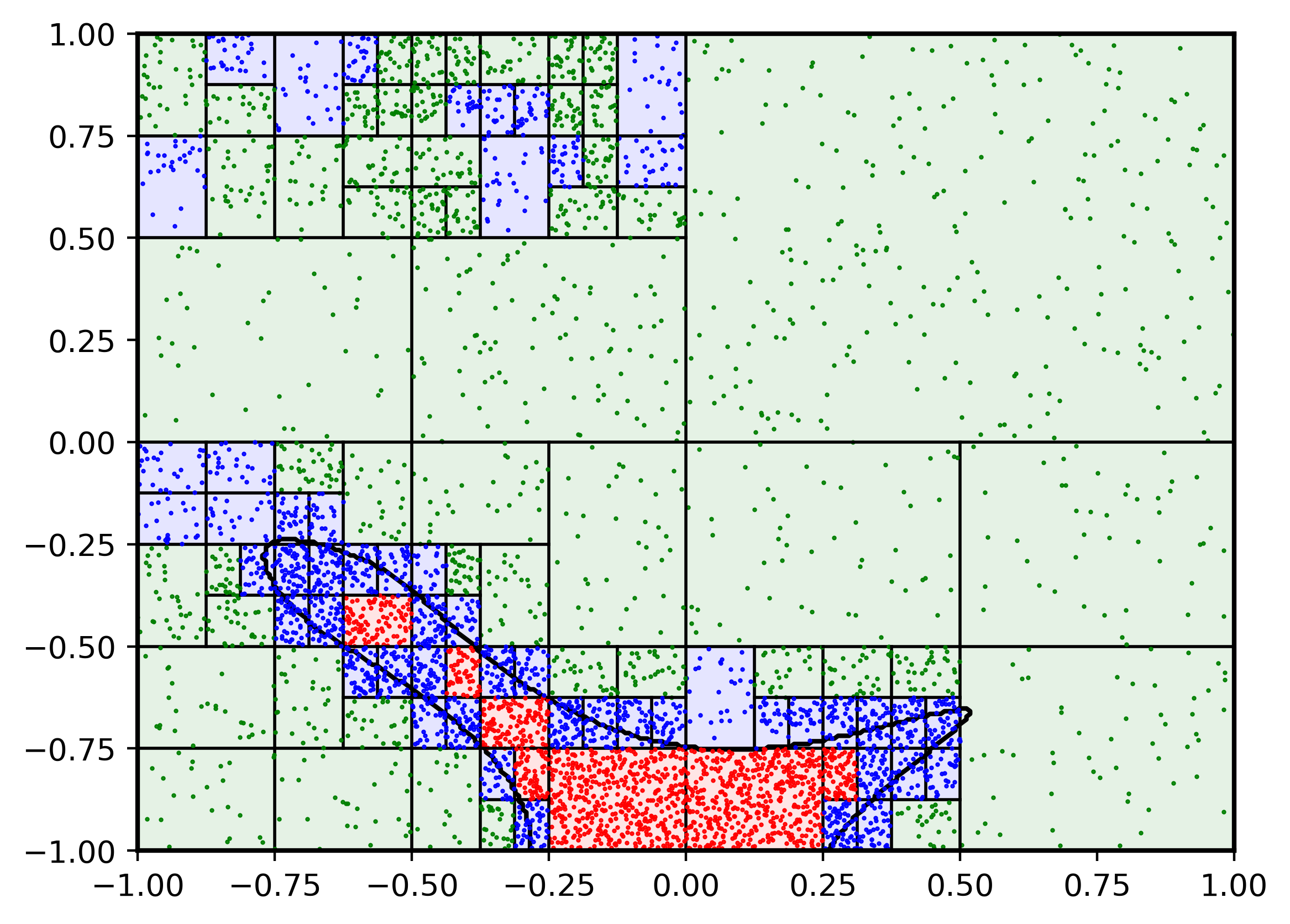}}\quad
\subfigure[Grid Size $R\times M=10\times 500$. \label{fig::goldS2}]{
\includegraphics[width=0.28\textwidth]{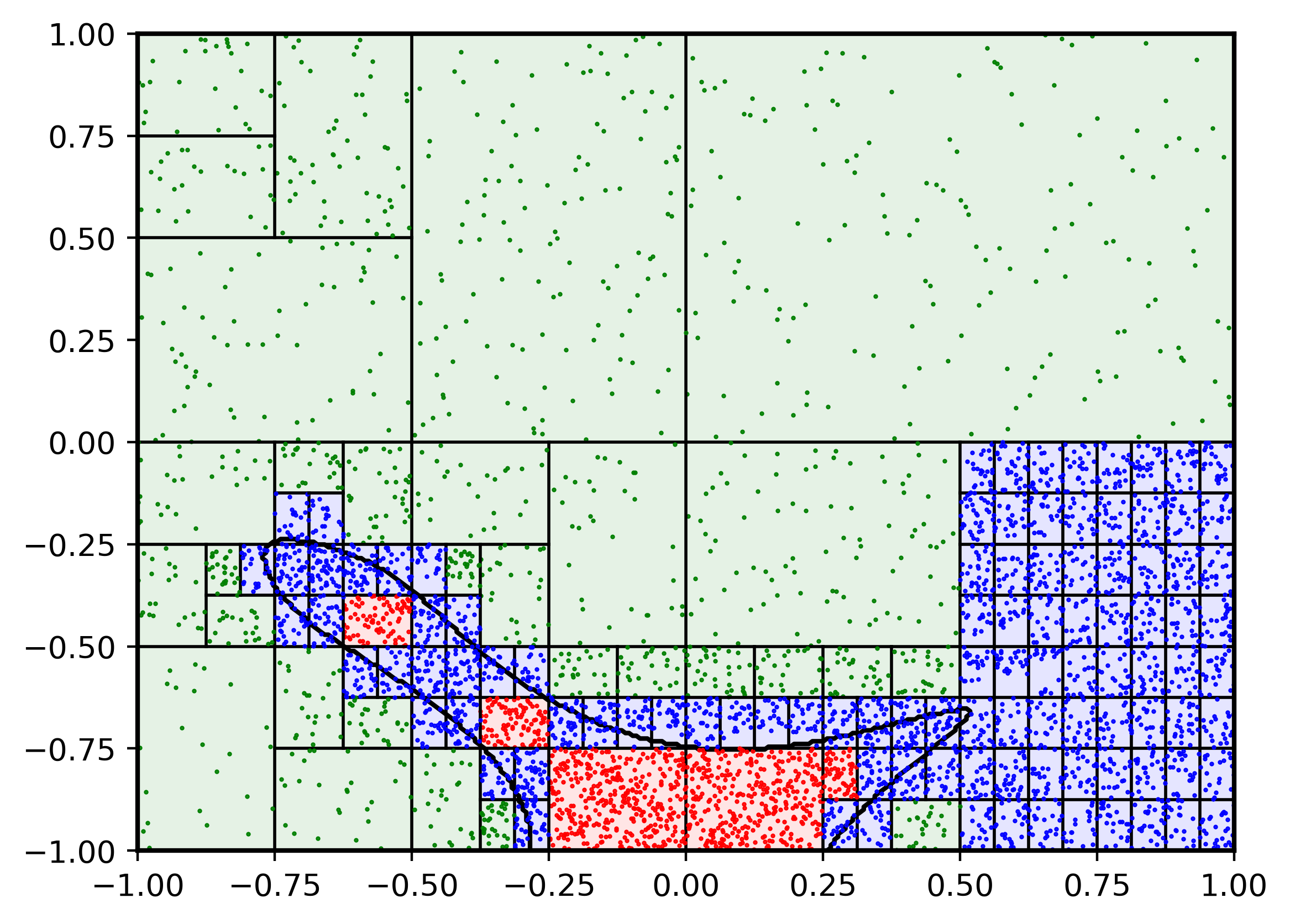}}\quad
\subfigure[Grid Size $R\times M=20\times 500$. \label{fig::goldS3}]{
\includegraphics[width=0.28\textwidth]{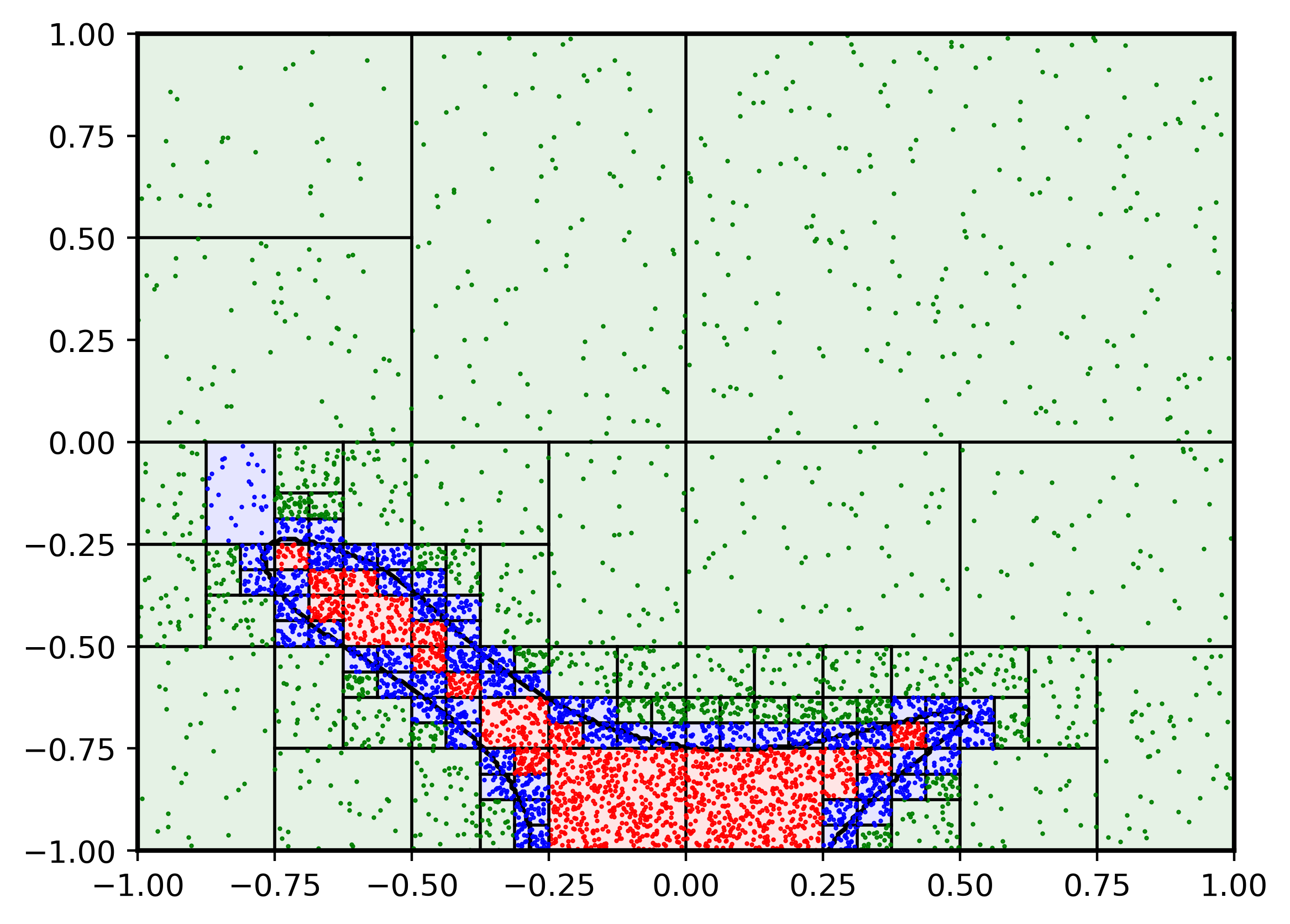}}
\caption{Sampling pattern for the Goldstein-Price 0-level set identification.}\label{fig::samplelocationGold}
\end{figure}
\begin{figure}[htbp]
\centering
\subfigure[Grid Size $R\times M=10\times 100$. \label{fig::himmS1}]{
\includegraphics[width=0.28\textwidth]{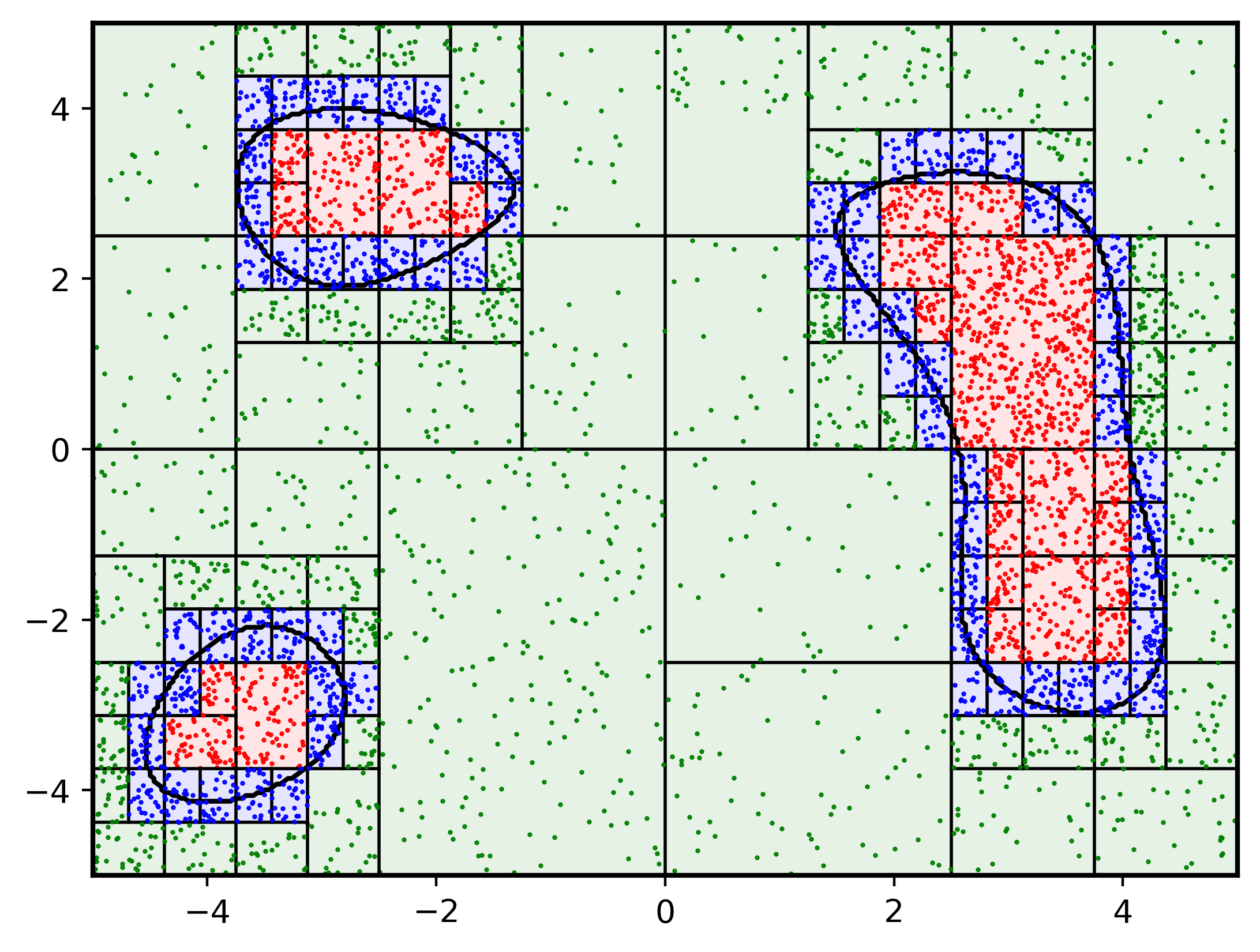}}
\subfigure[Grid Size $R\times M=10\times 500$. \label{fig::himmS2}]{
\includegraphics[width=0.28\textwidth]{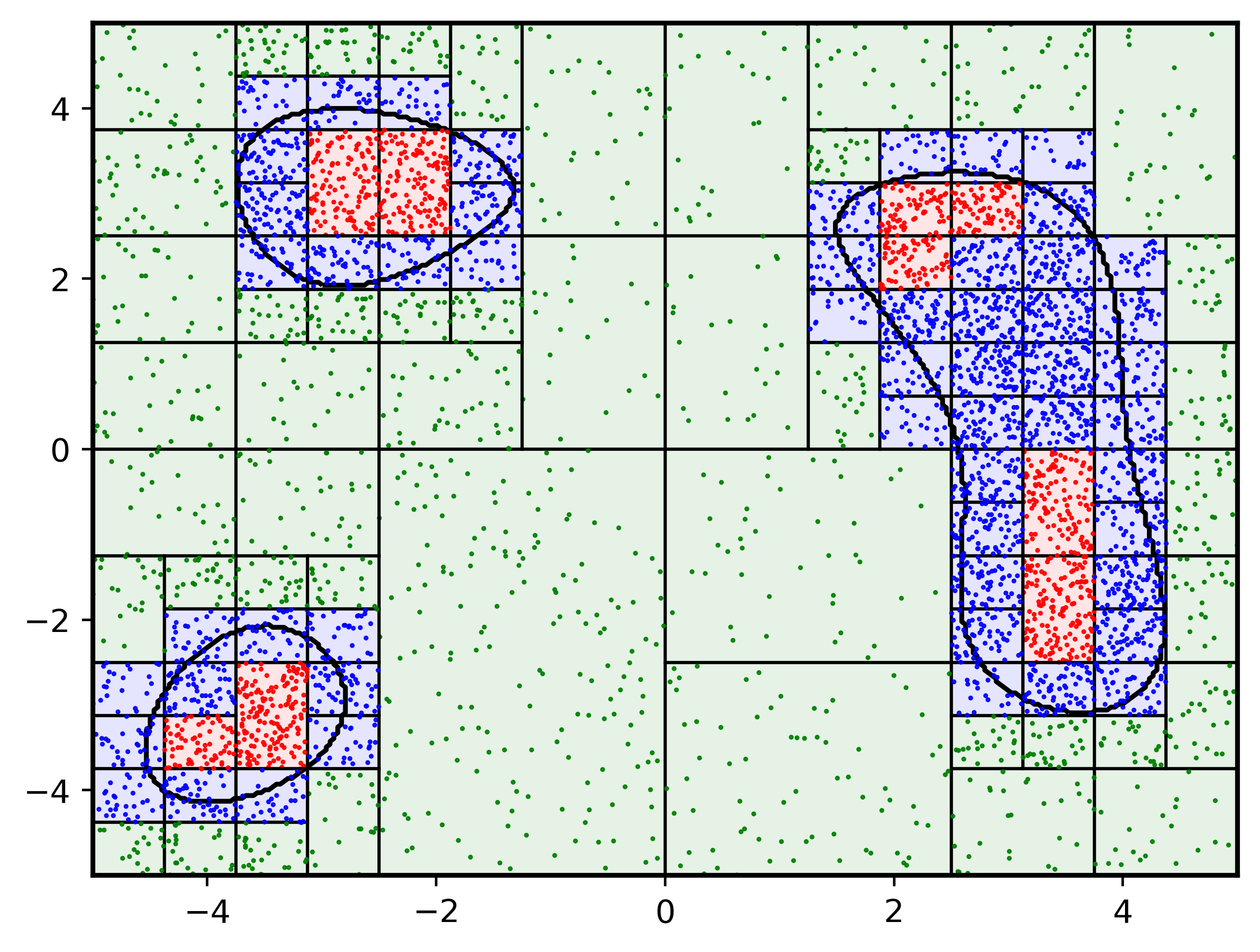}}
\subfigure[Grid Size $R\times M=20\times 500$. \label{fig::himmS3}]{
\includegraphics[width=0.28\textwidth]{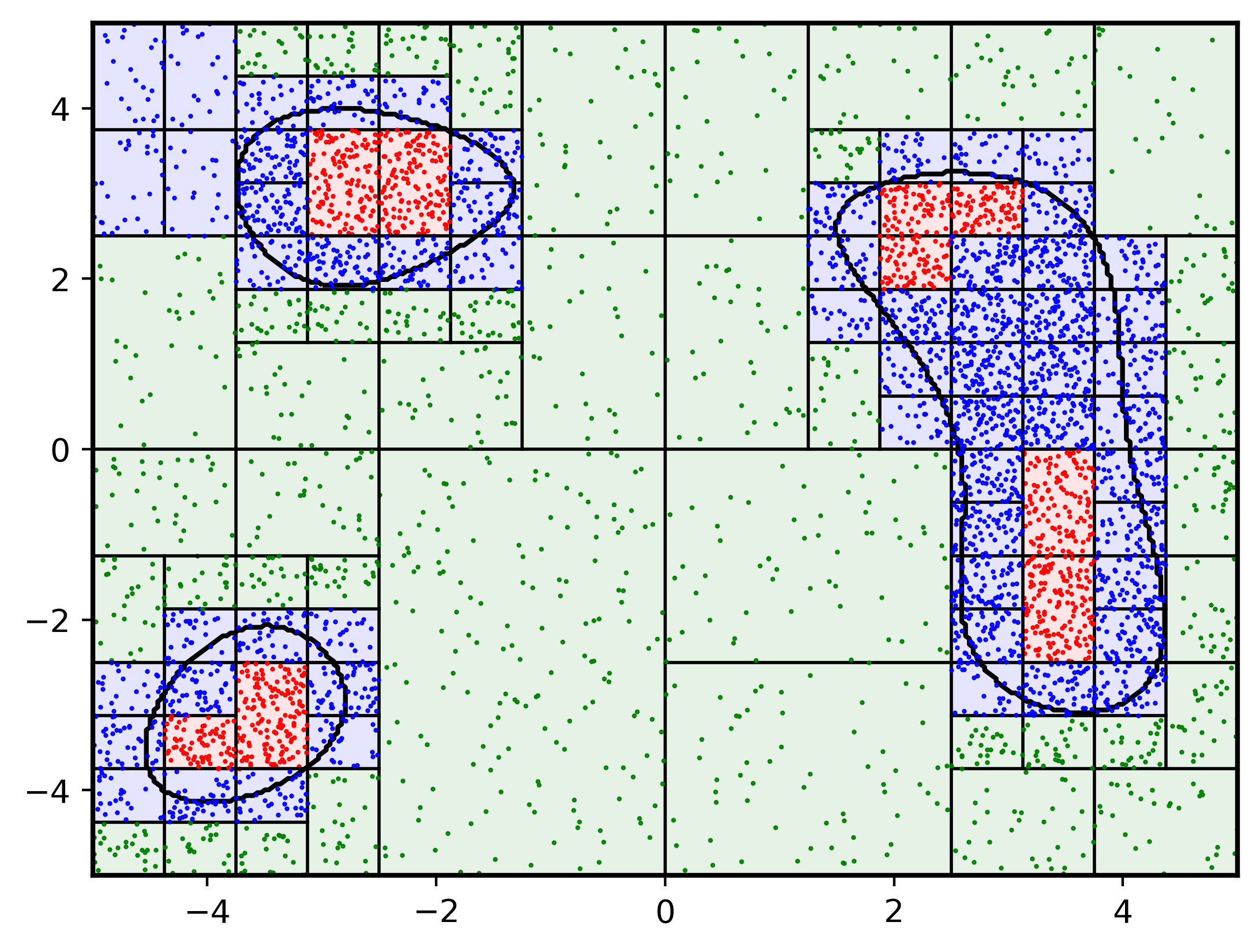}}
\caption{Sampling pattern for the Himmellblau 0-level set identification.}\label{fig::samplelocationHimm}
\end{figure}

Tables~\ref{tab:resVaolR10M100}-\ref{tab:resVaolR20M500} show the results for the falsification volume $\FalsVol$ obtained using: (1) the sum of the violating and remaining volumes; (2) the estimate obtained with the Gaussian process prediction using the $\falsq$-quantile. Results are reported for each of the three test functions. The standard error for the metric was obtained as a result of the $50$ macro replications of the algorithm. 
The Monte Carlo estimation (Monte Carlo row in Tables~\ref{tab:resVaolR10M100}-\ref{tab:resVaolR20M500}) is the point estimate for the falsification volume obtained with $R\times M$ function evaluations. We can observe that using the Gaussian process as a means to estimate of the violating volume produces accurate estimates and results robust with respect to the selected $\Significance$-quantile. This is due to the fact that we achieve particularly high density of sampling in violating regions so that the model variance is also very low within the level set. 

\begin{table}[H]
  \centering
  \caption{Falsification volume obtained with $R=10, M=100$. We estimate the falsifying volume as the volume of the remaining and violating hyperboxes (P-X). P-X ($\falsq$) algorithm variants use the Gaussian processes $\falsq$-quantile to estimate the falsifying volume. Monte Carlo uses the same number of evaluations ($5000\times 30$) to perform a one-shot estimation of the falsifying volume, so no standard error is provided.}
    \begin{tabular}{lcccccc}
    \toprule
    \multicolumn{1}{c}{\multirow{2}[0]{*}{Algorithm}} & \multicolumn{2}{c}{Rosenbrock} & \multicolumn{2}{c}{Goldstein} & \multicolumn{2}{c}{Himmellblau} \\
          & \multicolumn{1}{l}{Mean} & \multicolumn{1}{l}{std err} & \multicolumn{1}{l}{Mean} & \multicolumn{1}{l}{std err} & \multicolumn{1}{l}{Mean} & \multicolumn{1}{l}{std err} \\
    \hline
    P-X & 1.901 & \multicolumn{1}{r}{0.05E-02} & 1.176 & \multicolumn{1}{r}{6.75E-02} & 31.953 & \multicolumn{1}{r}{25.122} \\
    P-X ($\falsq=0.5$) & 1.628 & \multicolumn{1}{r}{1.43E-07} & 0.304 & \multicolumn{1}{r}{1.27E-07} & 17.671 & \multicolumn{1}{r}{9.37E-05} \\
    P-X ($\falsq=0.95$) & 1.628 & \multicolumn{1}{r}{1.43E-07} & 0.304 & \multicolumn{1}{r}{1.27E-07} & 17.671 & \multicolumn{1}{r}{9.37E-05} \\
    P-X ($\falsq=0.99$) & 1.628 & \multicolumn{1}{r}{1.43E-07} & 0.304 & \multicolumn{1}{r}{1.27E-07} & 17.671 & \multicolumn{1}{r}{9.37E-05} \\
    \hline
    Monte Carlo & \multicolumn{2}{c}{1.626} & \multicolumn{2}{c}{0.302} & \multicolumn{2}{c}{17.030} \\
    \bottomrule
    \end{tabular}%
  \label{tab:resVaolR10M100}%
\end{table}%

Table~\ref{tab:resVaolR10M500} shows the results obtained with the larger grid obtained with $R=10, M=500$.  

\begin{table}[H]
  \centering
  \caption{Falsification volume obtained with $R=10, M=500$. We estimate the falsifying volume as the volume of the remaining and violating hyperboxes (P-X). P-X ($\falsq$) algorithm variants use the Gaussian processes $\falsq$-quantiles to estimate the falsifying volume. Monte Carlo uses the same number of evaluations ($5000\times 30$) to perform a one-shot estimation of the falsifying volume, so no standard error is provided.}
    \begin{tabular}{lcccccc}
    \toprule
    \multicolumn{1}{c}{\multirow{2}[0]{*}{Algorithm}} & \multicolumn{2}{c}{Rosenbrock} & \multicolumn{2}{c}{Goldstein} & \multicolumn{2}{c}{Himmelblau} \\
          & \multicolumn{1}{l}{Mean} & \multicolumn{1}{l}{std err} & \multicolumn{1}{l}{Mean} & \multicolumn{1}{l}{std err} & \multicolumn{1}{l}{Mean} & \multicolumn{1}{l}{std err} \\
    \hline
    P-X & 1.913 & \multicolumn{1}{r}{5.74E-05} & 1.234 & \multicolumn{1}{r}{5.82E-02} & 27.930 & \multicolumn{1}{r}{6.050} \\
    P-X ($\falsq=0.5$) & 1.628 & \multicolumn{1}{r}{1.09E-08} & 0.305 & \multicolumn{1}{r}{1.97E-08} & 17.651 & \multicolumn{1}{r}{1.48E-05} \\
    P-X ($\falsq=0.95$) & 1.628 & \multicolumn{1}{r}{1.09E-08} & 0.305 & \multicolumn{1}{r}{1.97E-08} & 17.651 & \multicolumn{1}{r}{1.48E-05} \\
    P-X ($\falsq=0.99$) & 1.628 & \multicolumn{1}{r}{1.09E-08} & 0.305 & \multicolumn{1}{r}{1.97E-08} & 17.651 & \multicolumn{1}{r}{1.48E-05} \\
    \hline
    Monte Carlo & \multicolumn{2}{c}{1.626} & \multicolumn{2}{c}{0.302} & \multicolumn{2}{c}{17.030} \\
    \bottomrule
    \end{tabular}%
  \label{tab:resVaolR10M500}%
\end{table}%

Finally, Table~\ref{tab:resVaolR20M500} shows the results obtained for the largest grid with $R=20, M=50$.

\begin{table}[H]
  \centering
  \caption{Falsification volume obtained with $R=20, M=500$. We estimate the falsifying volume as the volume of the remaining and violating hyperboxes (P-X). P-X ($\falsq$) algorithm variants use the Gaussian processes $\falsq$-quantiles to estimate the falsifying volume. Monte Carlo uses the same number of evaluations ($5000\times 30$) to perform a one-shot estimation of the falsifying volume, so no standard error is provided.}
    \begin{tabular}{lcccccc}
    \toprule
    \multicolumn{1}{c}{\multirow{2}[0]{*}{Algorithm}} & \multicolumn{2}{c}{Rosenbrock} & \multicolumn{2}{c}{Goldstein} & \multicolumn{2}{c}{Himmelblau} \\
          & \multicolumn{1}{l}{Mean} & \multicolumn{1}{l}{std err} & \multicolumn{1}{l}{Mean} & \multicolumn{1}{l}{std err} & \multicolumn{1}{l}{Mean} & \multicolumn{1}{l}{std err} \\
    \hline
    P-X & 1.909 & \multicolumn{1}{r}{5.86E-05} & 0.966 & \multicolumn{1}{r}{1.13E-03} & \multicolumn{1}{r}{25.078} & \multicolumn{1}{r}{0.220} \\
    P-X ($\falsq=0.5$) & 1.628 & \multicolumn{1}{r}{4.23E-08} & 0.304 & \multicolumn{1}{r}{6.53E-08} & 17.653 & \multicolumn{1}{r}{6.08E-06} \\
    P-X ($\falsq=0.95$) & 1.628 & \multicolumn{1}{r}{4.23E-08} & 0.304 & \multicolumn{1}{r}{6.53E-08} & 17.653 & \multicolumn{1}{r}{6.08E-06} \\
    P-X ($\falsq=0.99$) & 1.628 & \multicolumn{1}{r}{4.23E-08} & 0.304 & \multicolumn{1}{r}{6.53E-08} & 17.653 & \multicolumn{1}{r}{6.08E-06} \\
    \hline
    Monte Carlo & \multicolumn{2}{c}{1.626} & \multicolumn{2}{c}{0.302} & \multicolumn{2}{c}{17.030} \\
    \bottomrule
    \end{tabular}%
  \label{tab:resVaolR20M500}%
\end{table}

\subsection{Automated Requirement Falsification}\label{sec::falsification}

To run the Part-X function on falsification benchmarks, we developed an SBTG Python package%
\footnote{We plan to publicly release our open-source SBTG package after the review process concludes.} which implements a similar architecture to the S-Taliro \cite{Annpureddy2011STaLiRoAT} and BREACH \cite{donze10cav} Matlab tools. 
Our SBTG tool uses the Python packages RTAMT \cite{NickovicY2020atva} and TLTk \cite{CralleySHF2020rv} for computing the robustness of STL specifications.
The tool also supports a number of optimization functions besides Part-X.
Our SBTG tool requires an STL formula, a SUT object, and an options structure that modifies the behavior of the SBTG library. 
A SUT object handles the generation of signal inputs and manages the execution of the underlying SUT. 
When a vector $\mbf{x}$ is provided by Part-X (or any other optimizer) to the SUT object, the vector $\mbf{x}$ is first separated into static model parameters, e.g., initial conditions and/or other parameters, and signals (time-varying parameters). 
A sequence of points that corresponds to a time-varying parameter is interpolated by a user selected function to produce an input signal for the SUT.
Then, the SBTG executes the SUT using the provided parameters and signals, receives the SUT output trajectories, and computes the specification robustness.
At that point, the robustness value is returned to the Part-X function, and the process repeats until the Part-X algorithm has reached one of its terminating conditions or the maximum allowed testing budget.

Even though the ARCH falsification competition \cite{ernst2020arch} uses a number of benchmark problems, virtually all of them are Matlab/Simulink models.
In order to demonstrate the Part-X algorithm and our SBTG tool, we selected the F-16 benchmark (version 88ABW-2020-2188)~\cite{heidlauf2018verification}  from the ARCH competition.
The F16 benchmark provides both a Matlab/Simulink and a Python version of a simplified F16 Ground Control Avoidance System (GCAS). 
The GCAS system uses 16 continuous variables and piece-wise non-linear differential equations to perform autonomous maneuvers to avoid hitting the ground. 
Our benchmark instance defines three static inputs $\phi$, $\theta$ and $\psi$, which are the initial roll, pitch and yaw angles of the aircraft, and there are no time-varying inputs.
The three angles can range in the intervals $[0.2\pi, 0.2833\pi]$, $[-0.5\pi, -0.54\pi]$, and $[0.25\pi, 0.375\pi]$, respectively. 
The GCAS model returns the altitude of the aircraft over the time horizon $[0, 15]$.
The correctness requirement was the STL formula ``$\mathtt{Always}_{[0,15]} (altitude > 0)$'' which simply states that the altitude should always be above 0 during the first 15 sec.

Part-X was executed on the F16 GCAS model with the initial altitude set to various altitude values between 2300 ft. to 2400 ft., which are listed in Table~\ref{tab:f16benchmarkres}. These experiments were run on a server with an Intel(R) Xeon(R) Platinum 8260 @ 2.40GHz CPU with 128G RAM running Ubuntu 20.04.2 LTS. 
The F16 GCAS model is know to be falsifiable at altitudes 2300 ft. and completely non-falsifiable at altitude 2400 ft. We executed $250000$ simulations using uniform random sampling and confirmed that no simulation violated the safety requirement.

Part-X was run with initialization budget $n_0 = 30$, unclassified subregions, per-subregion, budget $n_{\mbox{\tiny{BO}}} = 10$, classified subregions sampling budget $n_c=100$, maximum budget $T=5000$, number of Monte Carlo iterations $R=20$, number of evaluations per iterations $M=500$, number of cuts $B = 2$, and classification level $\Significance = 0.05$. Finally, the minimum volume condition was set as $\minVol = 0.001$.

Table~\ref{tab:f16benchmarkres} shows the results of running Part-X on the F-16 benchmark. Each benchmark instance was run 50 times since Part-X is a stochastic algorithm. $\bar{FR}$ represents the falsification rate, i.e., out of 50 macro-replications, how many times Part-X was able to sample at least one falsification. Finally, $\min\rho_\phi$ is the minimum robustness value sampled across all the evaluations by the algorithm.
The estimation of the $q$-quantile falsification volume is reported under $\FalsVol_{q}$.

\begin{table*}[t]
  \centering
  \caption{F-16 benchmark results. The MC estimates were obtained with 250,000 samples generated uniformly. Confidence bounds are evaluated out of $50$ macro-replications using $95$\% confidence.}
    \begin{tabular}{l|l|l|l|l|l|l|l}
    \toprule
    \multicolumn{1}{l|}{\multirow{2}[0]{*}{$H_0$}} & \multicolumn{1}{l|}{\multirow{2}[0]{*}{Algorithm}} & \multicolumn{1}{c|}{\multirow{2}[0]{*}{$\bar{\mbox{FR}}$}} & \multicolumn{4}{c|}{$\FalsVol_q$} & \multicolumn{1}{l}{\multirow{2}[0]{*}{$\min\rho_\phi$}} \\
    \cline{4-7}
          &       &       & \multicolumn{1}{p{4.215em}|}{mean} & \multicolumn{1}{p{4.215em}|}{std\_error} & \multicolumn{1}{p{4.215em}|}{LCB} & \multicolumn{1}{p{4.215em}|}{UCB} &  \\
    \hline
    \multicolumn{1}{r|}{\multirow{4}[0]{*}{2300.0}} & Part - X ($q=0.50$) & 50    & 0.000687 & 1.19E-12 & 0.000671 & 0.000702 & -38.4581 \\
          & Part - X ($q=0.95$) & 50    & 0.000687 & 1.19E-12 & 0.000671 & 0.000702 & -38.4581 \\
          & Part - X ($q=0.99$) & 50    & 0.000687 & 1.19E-12 & 0.000671 & 0.000702 & -38.4581 \\
          & Uniform Random & 50    & 0.000679 & 8.68E-11 & 0.00055 & 0.000808 & -38.3656 \\
    \hline
    \multicolumn{1}{r|}{\multirow{4}[0]{*}{2338.0}} & Part - X ($q=0.50$) & 35    & 4.89E-06 & 2.62E-13 & 0.00E+00 & 1.20E-05 & -0.44906 \\
          & Part - X ($q=0.95$) & 35    & 4.89E-06 & 2.62E-13 & 0.00E+00 & 1.20E-05 & -0.44906 \\
          & Part - X ($q=0.99$) & 35    & 4.89E-06 & 2.62E-13 & 0.00E+00 & 1.20E-05 & -0.44906 \\
          & Uniform Random & 15    & 2.33E-06 & 3.32E-13 & 0.00E+00 & 1.03E-05 & -0.32952 \\
    \hline
    \multicolumn{1}{r|}{\multirow{4}[0]{*}{2338.2}} & Part - X ($q=0.50$) & 23    & 4.17E-06 & 1.96E-13 & 0.00E+00 & 1.03E-05 & -0.24887 \\
          & Part - X ($q=0.95$) & 23    & 4.17E-06 & 1.96E-13 & 0.00E+00 & 1.03E-05 & -0.24887 \\
          & Part - X ($q=0.99$) & 23    & 4.17E-06 & 1.96E-13 & 0.00E+00 & 1.03E-05 & -0.24887 \\
          & Uniform Random & 6     & 7.75E-07 & 8.99E-14 & 0.00E+00 & 4.93E-06 & -0.12933 \\
    \hline
    \multicolumn{1}{r|}{\multirow{4}[0]{*}{2338.4}} & Part - X ($q=0.50$) & 4     & 3.34E-06 & 1.83E-13 & 0.00E+00 & 9.26E-06 & -0.04059 \\
          & Part - X ($q=0.95$) & 4     & 3.34E-06 & 1.83E-13 & 0.00E+00 & 9.26E-06 & -0.04059 \\
          & Part - X ($q=0.99$) & 4     & 3.34E-06 & 1.83E-13 & 0.00E+00 & 9.26E-06 & -0.04059 \\
          & Uniform Random & 0     & 0     & 0     & -     & -     & 0.070859 \\
    \hline
    \multicolumn{1}{r|}{\multirow{4}[0]{*}{2338.5}} & Part - X ($q=0.50$) & 0     & 3.06E-06 & 1.69E-13 & 0.00E+00 & 8.77E-06 & 0.059504 \\
          & Part - X ($q=0.95$) & 0     & 3.06E-06 & 1.69E-13 & 0.00E+00 & 8.77E-06 & 0.059504 \\
          & Part - X ($q=0.99$) & 0     & 3.06E-06 & 1.69E-13 & 0.00E+00 & 8.77E-06 & 0.059504 \\
          & Uniform Random & 0     & 0     & 0     & -     & -     & 0.170954 \\
    \hline
    \multicolumn{1}{r|}{\multirow{4}[0]{*}{2338.6}} & Part - X ($q=0.50$) & 0     & 2.87E-06 & 1.73E-13 & 0.00E+00 & 8.64E-06 & 0.159608 \\
          & Part - X ($q=0.95$) & 0     & 2.87E-06 & 1.73E-13 & 0.00E+00 & 8.64E-06 & 0.159608 \\
          & Part - X ($q=0.99$) & 0     & 2.87E-06 & 1.73E-13 & 0.00E+00 & 8.64E-06 & 0.159608 \\
          & Uniform Random & 0     & 0     & 0     & -     & -     & 0.271049 \\
    \hline
    \multicolumn{1}{r|}{\multirow{4}[0]{*}{2350.0}} & Part - X ($q=0.50$) & 0     & 4.04E-09     & 1.01E-17     & 0.00E+00     & 4.80E-08     & 11.61762 \\
          & Part - X ($q=0.95$) & 0     & 4.04E-09     & 1.01E-17     & 0.00E+00     & 4.80E-08     & 11.61762 \\
          & Part - X ($q=0.99$) & 0     & 4.04E-09     & 1.01E-17     & 0.00E+00     & 4.80E-08     & 11.61762 \\
          & Uniform Random & 0     & 0     & 0     & -     & -     & 11.68186 \\
    \hline
    \multicolumn{1}{r|}{\multirow{4}[0]{*}{2400.0}} & Part - X ($q=0.50$) & 0     & 0     & 0     & -     & -     & 61.67597 \\
          & Part - X ($q=0.95$) & 0     & 0     & 0     & -     & -     & 61.67597 \\
          & Part - X ($q=0.99$) & 0     & 0     & 0     & -     & -     & 61.67597 \\
          & Uniform Random & 0     & 0     & 0     & -     & -     & 61.72921 \\
    \bottomrule
    \end{tabular}%
  \label{tab:f16benchmarkres}%
\end{table*}%

Two observations are immediate from Table~\ref{tab:f16benchmarkres}.
The quantile falsification volumes of instance 1 and 2 (altitude 2300, 2338) are conclusively indicating the presence of falsifying regions (non-zero volume).
These regions can be identified in the partition (see Figure~\ref{fig:f16_alt_2330_falsification}), and if desired, they could be avoided in the  operating space of the system in order to guarantee safety with respect to the requirement.
Alternatively, any debugging efforts can now be focused in these regions of the operating space where we have evidence that many falsifying behaviors exist.
More information can be extracted from these regions by using tools such as~\cite{DiwakaranST2017iccps}.

On the other hand, the quantile falsification volumes of instance 7 (altitude 2350) are inconclusive, i.e., the interval contains 0.
In other words, it is possible that the system is not falsifiable, i.e., it is safe with respect to the requirements.
At this point, it is up to the test engineer or the system evaluator to decide whether the volume (i.e., probability of falsification) can be tolerated for the target application. 
If the system evaluator determines that higher confidence is needed in the results, then the sampling process can be resumed.
The benefit of our approach is that the partitions contain information on which areas are the most likely to falsify and, hence, further sampling can be biased towards these partitions.

Figure~\ref{fig::f16_2330_partition_plot} presents some partition results for the F-16 benchmark. 
We note that in Figure~\ref{fig:f16_alt_2330_falsification}, we can see the undefined regions (in blue) since falsifying points have been identified.  
However, also in Figure~\ref{fig::f16_alt_2345_no_falsification}, while no falsifying points have been detected, still we have positive probability to falsify. 
\begin{figure}[H]
\centering
\subfigure[Partition generated by a falsifying run for altitude $H_0 = 2300$ \label{fig:f16_alt_2330_falsification}]{
\includegraphics[width=0.3\textwidth]{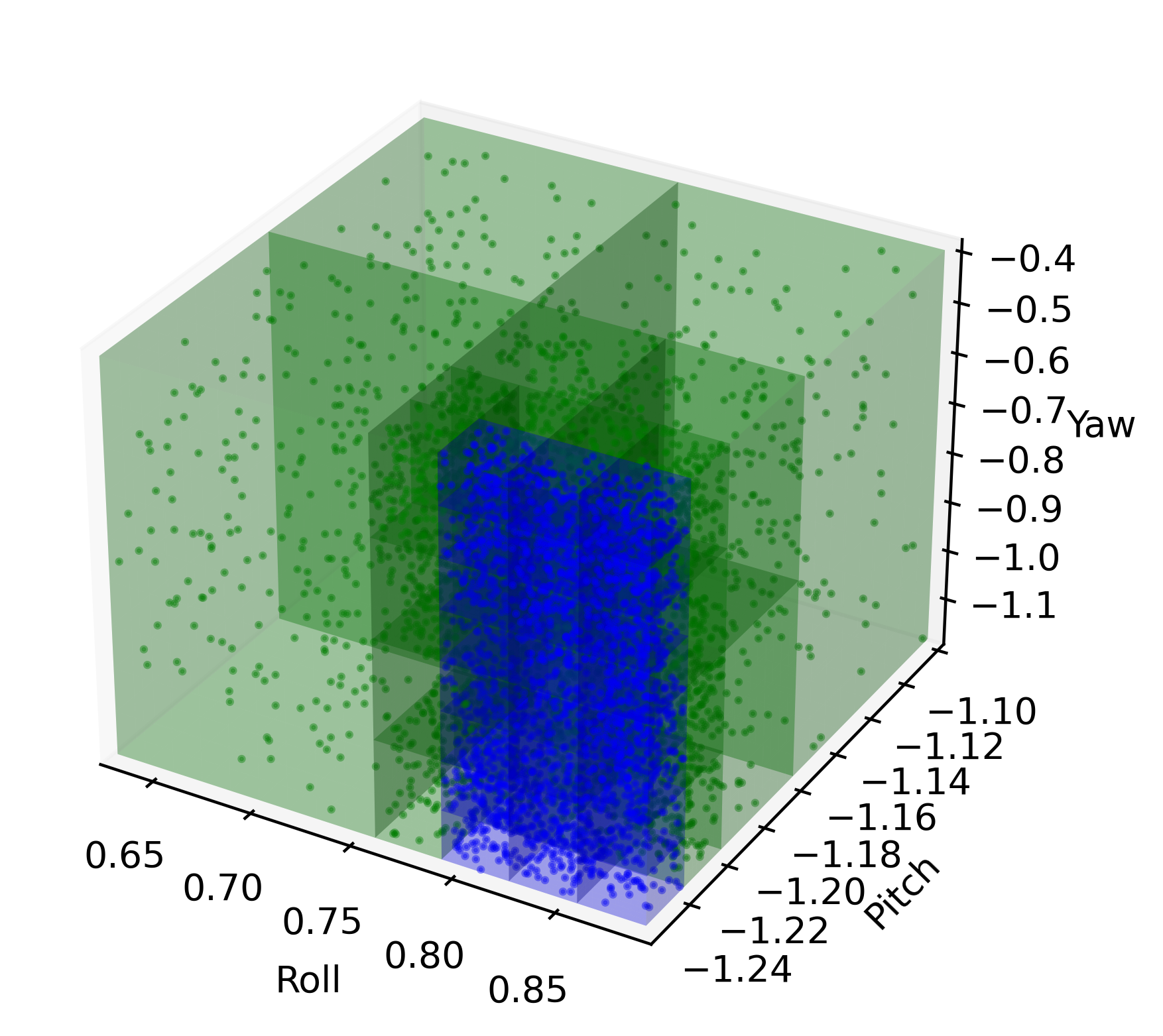}}\quad
\subfigure[The partition obtained by a non-falsifying run for altitude $H_0 = 2350$ \label{fig::f16_alt_2345_no_falsification}]{
\includegraphics[width=0.3\textwidth]{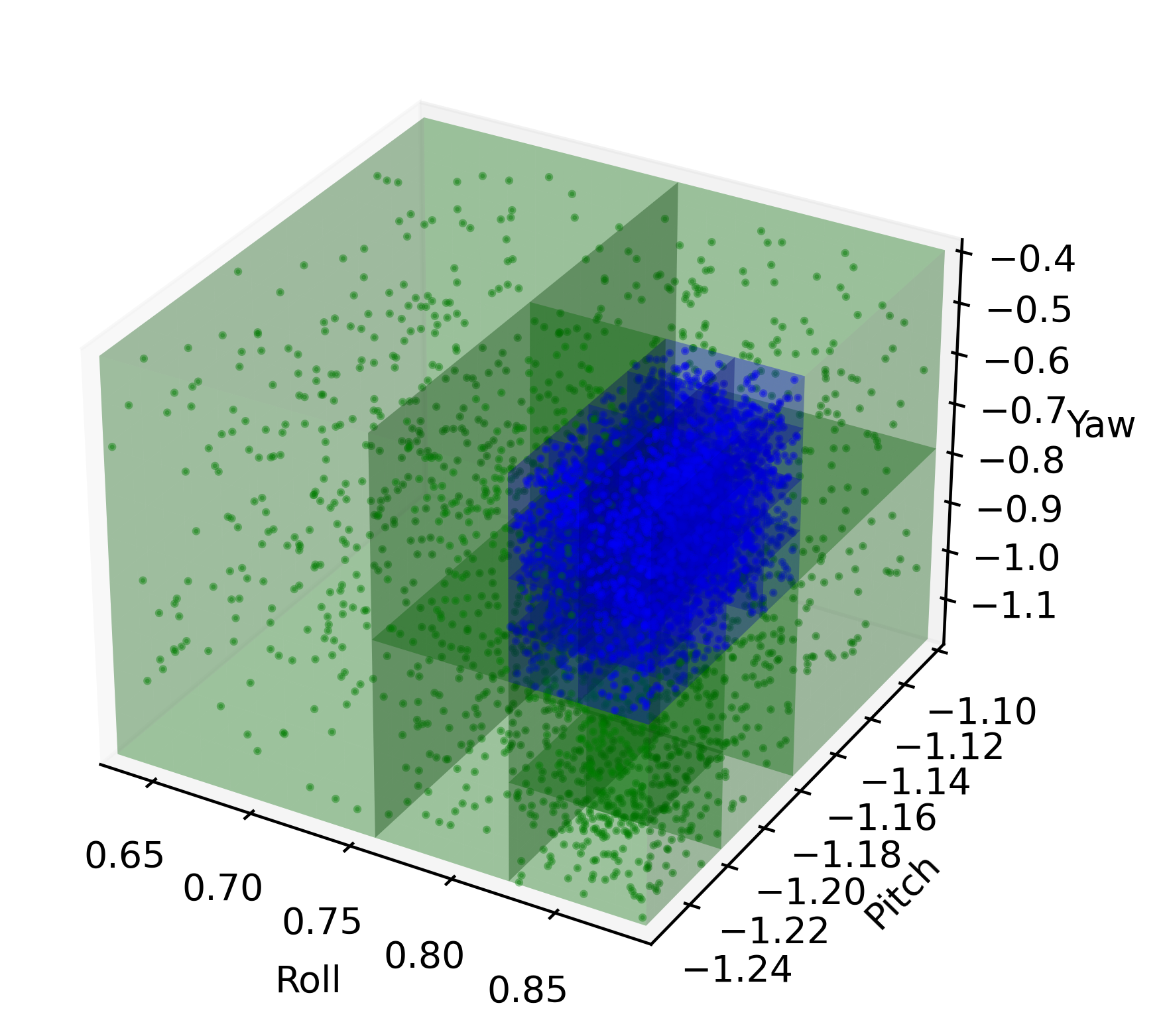}}
\caption{F-16 benchmark partition and sampling results for a single macro-replication.}\label{fig::f16_2330_partition_plot}
\end{figure}

\section{Conclusion}
We present, for the first time, the algorithm Part-X that generates test cases automatically with the objective of estimating the zero level-set of the robustness landscape of a Cyber-Physical System (CPS) induced by its safety requirements. 
Specifically, given an input space, a simulation tool (or test harness), a requirement, and a significance level, Part-X attempts to classify the input space into violating, satisfying, and undecided subregions.  
The algorithm relies on a collection of Gaussian processes, each defined over a single subregion, used to estimate the unknown robustness function in unsampled areas of the input. As a result, we provide the falsification volume to be interpreted as the probability that a falsifying input exists. 
In our main results, we show how to probabilistically bound the classification error at each iteration of the algorithm, thus providing a bound in the error of the estimation of the falsification volume.  
The numerical results demonstrate the ability of Part-X to estimate zero level-sets when applied to general nonlinear non-convex functions as well as when applied to the F16 benchmark. 
 The next step in our research is to extend our results from deterministic to stochastic systems as well as to explore scalability to higher dimensions.
 
 \section*{Acknowledgments}
This  work  is  supported  by  the  DARPA  ARCOS  program under contract FA8750-20-C-0507.

\bibliography{refs_pX}
\end{document}